
\documentclass{article}

\usepackage{microtype}
\usepackage{graphicx}
\usepackage{subfigure}
\usepackage{booktabs} 

\usepackage{hyperref}



\usepackage[accepted]{icml2024}

\usepackage{amsmath}
\usepackage{amssymb}
\usepackage{mathtools}
\usepackage{amsthm}

\usepackage[capitalize,noabbrev]{cleveref}

\theoremstyle{plain}
\newtheorem{theorem}{Theorem}[section]
\newtheorem{proposition}[theorem]{Proposition}
\newtheorem{lemma}[theorem]{Lemma}
\newtheorem{corollary}[theorem]{Corollary}
\theoremstyle{definition}
\newtheorem{definition}[theorem]{Definition}

\theoremstyle{remark}
\newtheorem{remark}[theorem]{Remark}

\usepackage[textsize=tiny]{todonotes}

\usepackage{bbm}


\newcommand{\E}{\mathbb{E}}
\newcommand{\R}{\mathbb{R}}
\newcommand{\eps}{\varepsilon}
\newcommand{\indic}[1]{\mathbbm{1}_{#1}}

\newcommand{\ALG}{\textsf{ALG}}
\newcommand{\A}{\textsf{A}}
\newcommand{\CR}{\textsf{CR}}
\newcommand{\ratio}{\textsf{R}}
\newcommand{\OPT}{\textsf{OPT}}
\newcommand{\RR}{\textsf{RR}}
\newcommand{\RTC}{\textsf{RTC}}

\newcommand{\switch}{\textsf{Switch}}
\newcommand{\CRRR}{\textsf{CRRR}}

\newcommand{\spt}{S}

\renewcommand{\phi}{\varphi}
\newcommand{\uij}{u}
\newcommand{\V}{\mathcal{V}}

\icmltitlerunning{Non-clairvoyant Scheduling with Partial Predictions}

\begin{document}

\twocolumn[
\icmltitle{Non-clairvoyant Scheduling with Partial Predictions}



\icmlsetsymbol{equal}{*}

\begin{icmlauthorlist}
\icmlauthor{Ziyad Benomar}{fairplay,X}
\icmlauthor{Vianney Perchet}{fairplay,criteo}
\end{icmlauthorlist}

\icmlaffiliation{fairplay}{ENSAE, FAIRPLAY joint team, CREST, Palaiseau, France}
\icmlaffiliation{criteo}{Criteo AI Lab, FAIRPLAY joint team, Paris, France}
\icmlaffiliation{X}{Ecole polytechnique, Palaiseau, France}

\icmlcorrespondingauthor{Ziyad Benomar}{ziyad.benomar@ensae.fr}

\icmlkeywords{Learning-augmented algorithms, online algorithms, scheduling algorithms with predictions}

\vskip 0.3in
]



\printAffiliationsAndNotice{} 

\begin{abstract}
The non-clairvoyant scheduling problem has gained new interest within learning-augmented algorithms, where the decision-maker is equipped with predictions without any quality guarantees. In practical settings, access to predictions may be reduced to specific instances, due to cost or data limitations. Our investigation focuses on scenarios where predictions for only $B$ job sizes out of $n$ are available to the algorithm. We first establish near-optimal lower bounds and algorithms in the case of perfect predictions. Subsequently, we present a learning-augmented algorithm satisfying the robustness, consistency, and smoothness criteria, and revealing a novel tradeoff between consistency and smoothness inherent in the scenario with a restricted number of predictions.
\end{abstract}

\section{Introduction}

Optimal job scheduling is a longstanding and actively studied class of optimization problems \cite{panwalkar1977survey, lenstra1978complexity, graham1979optimization, martel1982preemptive, cheng1990state, lawler1993sequencing,  pinedo2012scheduling}, with applications in various domains spanning from supply chain management \cite{hall2003supply, ivanov2016dynamic} to operating systems \cite{jensen1985time, ramamritham1994scheduling, steiger2004operating}.
A particular setting is preemptive single-machine scheduling \cite{pinedo2012scheduling, baker2013principles}, where $n$ jobs $i \in [n]$ must be executed on the same machine, with the possibility of interrupting a job and resuming it afterward, and the objective is to minimize the sum of their completion times. An algorithm is called \textit{clairvoyant} if it has initial access to the job sizes, otherwise, it is called \textit{non-clairvoyant} \cite{motwani1994nonclairvoyant}.
The design of non-clairvoyant scheduling algorithms is a classical problem in competitive analysis and online algorithms \cite{borodin2005online}. In this paradigm, decisions must be made in an environment where the parameters governing the outcome are unknown or might evolve over time. 

Due to the inherent difficulty of the problems in competitive analysis, the performance of any online algorithm stays bounded away from that of the optimal offline algorithm. However, the ascent of machine learning motivated the incorporation of predictions in algorithm design, which started in works such as \cite{munoz2017revenue, kraska2018case}, then was formalized in \cite{lykouris2018competitive} and \cite{purohit2018improving}. Since then, \textit{learning-augmented algorithms} became a popular research topic and had multiple applications \cite{mitzenmacher2022algorithms}.
The outcome of these algorithms depends both on the parameters of the problem and the quality of the predictions. They are required to have a performance that is near-optimal when the predictions are accurate (consistency), near the worst-case performance without advice if the predictions are arbitrarily erroneous (robustness), and that degrades smoothly as the prediction error increases (smoothness).

In practice, predictions often incur costs and, at times, are infeasible due to the lack of data. It is, therefore, crucial to understand the limitations and the feasible improvements with a restrained number of predictions in scenarios with multiple unknown variables. This question was first investigated for the caching problem \cite{im2022parsimonious}, and very recently for metrical task systems \cite{sadek2024algorithms}, in settings where the algorithm is allowed to query a limited number of predictions. It was also explored for the scheduling problem \cite{benomar2023advice}, assuming that the decision-maker can query the true sizes of $B$ jobs out of $n$. The authors present a $\big(2-\tfrac{B(B-1)}{n(n-1)}\big)$-competitive algorithm, and they give a lower bound on the competitive ratio of any algorithm only when $B = o(n)$. The case of imperfect predictions, however, is not examined.

In non-clairvoyant scheduling, besides the querying model studied in the works mentioned above, predictions of the sizes of certain jobs $i \in I$ may be available, where $I \subset [n]$, perhaps derived from previous executions of similar tasks. 
Assuming that $I$ is a subset of $[n]$ of size $B$, taken uniformly at random,
we examine the limitations and possible improvements of non-clairvoyant algorithms.


\subsection{Contributions}
We initiate our analysis by addressing the scenario of perfect predictions. We establish a lower bound on the $(n, B)$-competitive ratio of any algorithm (for fixed $n \geq 2$ and $B \leq n$), which extends the lower bound of $2 - \frac{4}{n+3}$ in the non-clairvoyant case \cite{motwani1994nonclairvoyant}. Considering that $B =  wn + o(n)$ for some $w \in [0,1]$, we derive from the prior bound that the competitive ratio of any algorithm is at least $2 - w - (\frac{4}{e}-1)w(1-w)$, and we show an improved bound of $2 - w - (3-2\sqrt{2})w(1-w)$.  Demonstrating these bounds is considerably more challenging than the case $B = 0$, due to the eventual dependency between the actions of the algorithm and the known job sizes.

In the case of perfect predictions, we show that knowing only the relative order of the $B$ job sizes, without knowledge of their values, enables a $(2-\frac{B}{n})$-competitive algorithm, which improves substantially upon the result of \cite{benomar2023advice}. 
We propose a second algorithm leveraging the true sizes of the $B$ jobs, yielding an $(n,B)$-competitive ratio of $(2 - \frac{B}{n} - \frac{2(1 - B/n)}{n+1})$, which is strictly better than the former, although both are asymptotically equivalent.

Subsequently, we adapt the latter algorithm to handle imperfect predictions. While the difficulty in most works on learning-augmented algorithms lies in ensuring robustness and consistency, smoothness in the case of scheduling with limited predictions is also not immediate.
Alongside the typical consistency-robustness tradeoff, our algorithm also exhibits a consistency-smoothness tradeoff. More precisely, governed by two hyperparameters $\lambda, \rho \in [0,1]$, the $(n,B)$-competitive ratio of the algorithm is at most 
$\min(\frac{2}{1-\lambda}, \frac{C}{\lambda} + \frac{S}{\lambda} \frac{n \E[\eta]}{\OPT})$. 
Here, $\E[\eta]$ denotes the total expected prediction error, $\OPT$ is the objective function achieved by the optimal offline algorithm, $\frac{2}{1-\lambda}$ is the algorithm's robustness, $\frac{C}{\lambda} = \frac{1}{\lambda}(2 - \frac{B}{n} + \rho\frac{B}{n}(1 - \frac{B-1}{n-1}))$ its consistency, and $\frac{S}{\lambda} = \frac{1}{\lambda}( \frac{4}{\rho} (1 - \frac{B}{n}) + \frac{B}{n})$ its smoothness factor, characterizing the sensitivity of the bound to $\E[\eta]$. 
Notably, alterations in the parameter $\rho$ yield opposing variations on the consistency and the smoothness factor. Nonetheless, this tradeoff vanishes for $B$ close to $0$ or $n$, and does not appear, for instance, in \cite{purohit2018improving}, \cite{bampis2022scheduling} or \cite{lindermayr2022permutation}.

We illustrate our results for the case of perfect predictions in Figure \ref{fig:results}, comparing them with the competitive ratio proved in \cite{benomar2023advice}. 

\begin{figure}[h!]
\begin{center}
\centerline{\includegraphics[width=0.75\columnwidth]{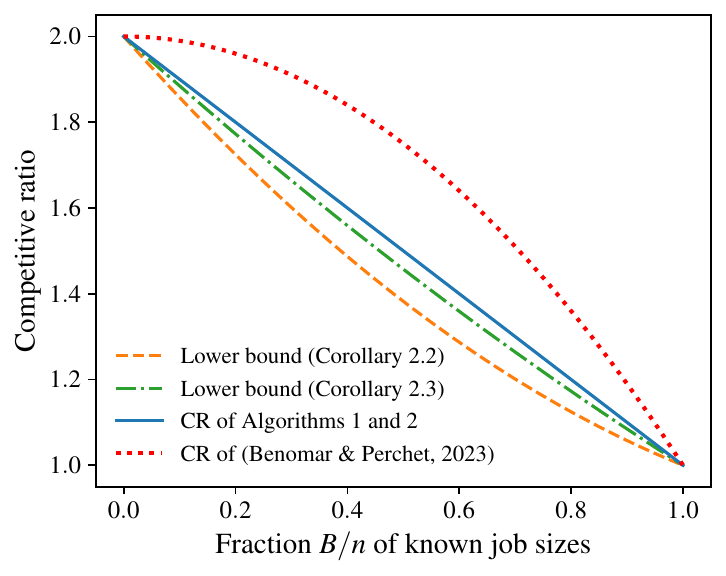}}
\caption{Lower bounds and competitive ratios for $B$ known job sizes.}
\label{fig:results}
\end{center}
\vskip -0.4in
\end{figure}

\subsection{Related work}
Since their introduction in \cite{purohit2018improving, lykouris2018competitive}, learning-augmented algorithms witnessed an exponentially growing interest, as they offered a fresh perspective for revisiting online algorithms, and provided new applications for machine learning in algorithm design \cite{mitzenmacher2022algorithms} and in the implementation of data structures \cite{kraska2018case, lin2022learning}. Many fundamental problems in competitive analysis were studied in this setting, such as ski rental \cite{gollapudi2019online, anand2020customizing, bamas2020primal, diakonikolas2021learning, antoniadis2021learning, maghakian2023applied, shin2023improved}, secretary \cite{antoniadis2020secretary, dutting2021secretaries}, matching \cite{dinitz2021faster, chen2022faster, sakaue2022discrete, jin2022online}, caching and metrical task systems \cite{lykouris2018competitive, chlkedowski2021robust, antoniadis2023online, antoniadis2023paging, christianson2023optimal}. 
In particular, scheduling is one of the problems that were studied most thoroughly. Different works cover various objective functions \cite{purohit2018improving, lattanzi2020online, azar2021flow}, prediction types \cite{antoniadis2021novel, merlis2023preemption, lassota2023minimalistic}, error metrics \cite{im2021non, lindermayr2022permutation}, and other aspects and applications \cite{wei2020optimal, bamas2020learning, dinitz2022algorithms}

The setting of learning-augmented algorithms with limited predictions was initially explored by \citet{im2022parsimonious} for caching. The authors presented an algorithm using parsimonious predictions, with a competitive ratio increasing with the number of allowed queries. In another very recent paper \cite{sadek2024algorithms}, a similar setting is studied for the more general problem of metrical task systems, where the algorithm is allowed to query a reduced number of \textit{action predictions} \cite{antoniadis2023online}, each giving the state of an optimal algorithm at the respective query step. An additional related study by \citet{drygala2023online} focuses on the ski-rental and Bahncard problems in a penalized adaptation of the setting with limited advice, where the cost of the predictions is added to the algorithm's objective function. 
Other works have explored related settings with different types of limited advice. For instance, the setting with a restricted number of perfect hints was examined in the context of online linear optimization by \citep{bhaskara2021logarithmic} and in the multi-color secretary problem by \citep{benomar2023addressing}. Another setting, where the algorithm can query two types of hints—one that is free but possibly inaccurate, and another that is expensive but accurate—has been studied in several problems, such as correlation clustering \citep{silwal2023kwikbucks}, computing minimum spanning trees in a metric space \citep{bateni2023metric}, sorting \cite{bai2024sorting}, and matroid optimization \citep{eberle2024accelerating}.

In the context of scheduling, \citet{benomar2023advice} introduced the $B$-clairvoyant scheduling problem, where an algorithm can query the exact sizes of $B$ jobs at any moment during its execution. They show that the optimal strategy involves querying the sizes of $B$ jobs selected uniformly at random at the beginning of the process. They establish that, if $B = o(n)$, then the competitive ratio of any algorithm is at least $2$, then they provide a $\big(2-\frac{B(B-1)}{n(n-1)}\big)$-competitive algorithm. The same paper also addresses the secretary problem with restricted access to binary predictions of known accuracy, and the ski-rental problem with access to an oracle whose accuracy improves progressively over time. 

The limit scenario $B = 0$ corresponds to the non-clairvoyant scheduling problem, studied in-depth in \cite{motwani1994nonclairvoyant}. In particular, the paper demonstrates that the competitive ratio of any non-clairvoyant algorithm is at least $2$, and that it is achieved by the \textit{round-robin} algorithm, executing all unfinished jobs concurrently at equal rates. On the other hand, $B=n$ corresponds to the setting presented in \cite{purohit2018improving}, where the authors introduce, for all $\lambda \in (0,1)$, a \textit{preferential round-robin} algorithm with robustness $\frac{2}{1-\lambda}$ and consistency $\frac{1}{\lambda}$.

\subsection{Problem and notations}
The decision-maker is given $n$ jobs $i \in [n]$ with unknown sizes $x_1,\ldots,x_n$ to schedule on a single machine, and predictions $(y_i)_{i \in I}$ of $(x_i)_{i \in I}$, with $I$ a uniformly random subset of $[n]$ of size $B$. The objective is to leverage the available predictions to minimize the sum of the completion times. We assume that preemption is allowed, i.e. it is possible to interrupt the execution of a job and resume it later, which is equivalent, by neglecting the preemption cost, to assuming that the jobs can be run in parallel at rates that sum to at most $1$.

To simplify the presentation, we consider that there are predictions $y_1,\ldots,y_n$ of $x_1,\ldots,x_n$, but the decision-maker has only access to $y_{\sigma(1)}, \ldots, y_{\sigma(B)}$, where $\sigma$ is a uniformly random permutation of $[n]$. We denote by $\eta_i = |x_i - y_i|$ the error of the prediction $y_i$, 
and by $\eta^\sigma = \sum_{i=1}^B \eta_{\sigma(i)}$ the total error of the predictions accessed by the algorithm. 

Consider an algorithm $\A$ and an instance $x = (x_1,\ldots,x_n)$ of job sizes, we denote by $\A(x)$ the sum of the completion times of all the jobs when executed by $\A$.
Furthermore, for all $i \neq j \in [n]$ and $t>0$, we denote by 
\begin{itemize}
    \item $\spt_i^\A(t)$ the processing time spent on job $i$ until time $t$,
    \item $t_i^\A = \inf\{ t \geq 0: \spt_i^\A(t) = x_i\}$ its completion time,
    \item $D^\A_{ij} = \spt_i^\A(t^\A_j)$ the total time spent on job $i$ before job $j$ terminates,
    \item and $P^\A_{ij} = D^\A_{ij} + D^\A_{ji}$ the mutual delay caused by $i,j$ to each other.
\end{itemize}
When there is no ambiguity, we omit writing the dependency to $\A$.
With these notations, it holds that $t^\A_i = x_i + \sum_{j \neq i} D^\A_{ji}$ for all $i \in [n]$. Consequently, the objective function of $\A$ can be expressed as
\begin{equation}\label{eq:generic-output}
\A(x) = \sum_{i=1}^n x_i + \sum_{1 \leq i < j \leq n} P^\A_{ij} \;.    
\end{equation}
Observing that, for all $i \neq j \in [n]$, if $i$ terminates before $j$ then $P^\A_{ij} \geq x_i$, otherwise $P^\A_{ij} \geq x_j$, we deduce that $P^\A_{ij} \geq \min(x_i, x_j)$. Equality is achieved by the clairvoyant algorithm that runs the jobs until completion in non-decreasing size order \cite{motwani1994nonclairvoyant}, which is the optimal offline algorithm, that we denote $\OPT$, satisfying
\begin{equation}\label{eq:opt-output}
\OPT(x)
= \sum_{i=1}^n x_i + \sum_{1 \leq i < j \leq n} \min(x_i,x_j)\;.
\end{equation}

When the predictions are perfect, for all $n \geq 2$ and $B \leq n$, we define the $(n,B)$-competitive ratio of algorithm $\A$ as the worst-case ratio between its objective, knowing the sizes of $B$ jobs taken uniformly at random, and that of $\OPT$, on instances of $n$ jobs
\begin{equation*}
    \ratio_{n,B}(\A) = \sup_{x\in (0,\infty)^n} \frac{\E[\A(x)]}{\OPT(x)}\;,
\end{equation*}
where the expectation $\E[\A(x)]$ is taken over the permutation $\sigma$ and the actions of $\A$ if it a randomized algorithm. 

If the number of predictions depends on the number of jobs, i.e.  $B = (B_n)_{n \geq 1}$ defines a sequence of integers, then the competitive ratio of $\A$ is defined by
\begin{equation*}
\CR_B(\A) = \sup_{n \geq 2} \ratio_{n,B_n}(\A)\;.
\end{equation*}
When the predictions are imperfect, the competitive ratio becomes also a function of $\E[\eta^\sigma]$.

\section{Lower Bounds}\label{sec:lowerbounds}
In this section, we assume that the predictions are error-free, and we establish lower bounds on the $(n,B)$-competitive ratio of any algorithm, followed by a lower bound independent of $n$ when $B_n = w n + o(n)$ for some $w \in [0,1]$.
These lower bounds are obtained by constructing random job size instances $x$ such that, for any deterministic algorithm $\A$, the ratio $\E_{\sigma,x}[\A(x)]/ \E_x[\OPT(x)]$ is above them. The result then extends to randomized algorithms and yields bounds on their $(n,B)$-competitive ratio by using Lemma \ref{lem:yao}, which is a consequence of Yao's principle \cite{yao1977probabilistic}. 


In all this section, we consider i.i.d. job sizes. Therefore, we can assume without loss of generality that the $B$ known job sizes are $x_1,\ldots,x_B$.

In the non-clairvoyant case $B=0$, for any algorithm $\A$, taking i.i.d. exponentially distributed sizes gives, with easy computation, that $\E[P^\A_{ij}] = 1$ (Remark \ref{rmk:uij-nonclairvoyant}), which yields, using Equations \eqref{eq:generic-output} and \eqref{eq:opt-output}, the lower bound $2 - \frac{4}{n+3}$ on the competitive ratio \cite{motwani1994nonclairvoyant}. However, if $B > 0$, the algorithm can act according to the information it has on $(x_i)_{i \in I}$, and the dependence between its actions and these job sizes makes the analysis more sophisticated.

For any positive and continuous function $\phi$, and positive numbers $T \geq x > 0$ we denote
\begin{equation}\label{eq:Gphi}
G_\phi(x,T) = \int_0^{T-x} \frac{dt}{\phi(t)} + \frac{x}{\phi(T-x)}\;.
\end{equation}
We prove in the following theorem a generic lower bound, using job sizes sampled independently from the distribution $\Pr(x_i \leq t) = 1-\frac{\varphi(0)}{\varphi(t)}$.

\begin{theorem}\label{thm:lower-bound}
Let $\phi : [0,\infty) \to [0,\infty)$ be a continuously differentiable and increasing function satisfying $\phi(0) > 0$, $\phi'/\phi$ is non-increasing and  $\int_0^\infty \frac{dt}{\phi(t)^2} < \infty$, and let $\alpha_\phi$ a non-negative constant satisfying
\begin{equation}\label{eq:thm-lb-eq}
\int_0^\infty \left\{ \inf_{T \geq x} G_\phi(x,T) \right\} \frac{\phi'(x)}{\phi(x)^2} dx
\geq \alpha_\phi \int_0^\infty \frac{dt}{\phi(t)^2}\;.    
\end{equation}
If $B = wn + o(n)$ for some $w \in [0,1]$, then it holds for any randomized algorithm $\A$ that
\[
\CR_B(\A) \geq 2 - 2(2-\alpha_\phi) w + (3 - 2\alpha_\phi)w^2\;.
\]
Moreover, if $\int_0^\infty \frac{dt}{\phi(t)} < \infty$, then for all $n \geq 2$ and $B \leq n$ 
\[
\ratio_{n,B}(\A)
\geq C_{\phi,n,B} - \frac{C_{\phi,n,B} - 1}{1+ \frac{n-1}{2} \frac{\int_0^\infty \frac{dt}{\phi(t)^2}}{\int_0^\infty \frac{dt}{\phi(t)}}} \;,
\]
where $C_{\phi,n,B} 
= (2 - \tfrac{B}{n}) - (3 - 2 \alpha_\phi) \tfrac{B}{n}\big(1 - \tfrac{B-1}{n-1} \big)$.
\end{theorem}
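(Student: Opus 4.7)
The plan is to apply Yao's principle via Lemma \ref{lem:yao} and reduce the problem to lower bounding $\E_\sigma[\A(x)]/\OPT(x)$ for a deterministic algorithm $\A$ against $x_1,\ldots,x_n$ drawn i.i.d. from the distribution with tail $\phi(0)/\phi(t)$. By the symmetry of the random subset $I$, we may assume without loss of generality that the known indices are $\{1,\ldots,B\}$. Using the decomposition in \eqref{eq:generic-output} and \eqref{eq:opt-output}, the comparison reduces to lower bounding $\sum_{i<j}\E[P^\A_{ij}]$ against $\binom{n}{2} m$, where $M := \E[x_1] = \phi(0)\int_0^\infty dt/\phi(t)$ and $m := \E[\min(x_1,x_2)] = \phi(0)^2\int_0^\infty dt/\phi(t)^2$.

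I would split the pair sum into three classes. For pairs $(i,j)$ with both sizes known ($\binom{B}{2}$ of them), the SRPT rule gives the optimal $\E[P^\A_{ij}] \geq m$. For pairs with both sizes unknown ($\binom{n-B}{2}$ of them), the non-clairvoyant argument of Remark \ref{rmk:uij-nonclairvoyant} applied to the present distribution, together with the symmetry of the pair, yields $\E[P^\A_{ij}] \geq 2m$. The novel case is the mixed one ($B(n-B)$ pairs): conditioning on $x_i = x$ for the known index $i$, the algorithm's optimal scheduling of the pair amounts to choosing a processing budget $T\geq x$ after which $i$ is guaranteed to be finished; using the survival function $\phi(0)/\phi(t)$ for the unknown partner $j$ together with the condition that $\phi'/\phi$ is non-increasing, one establishes $\E[P^\A_{ij} \mid x_i = x] \geq \phi(0)\inf_{T \geq x} G_\phi(x,T)$, where the $\int_0^{T-x} dt/\phi(t)$ term accounts for the expected time spent on $j$ while still alive and the $x/\phi(T-x)$ term accounts for the residual $x$ work on $i$ weighted by the probability that $j$ outlives the budget. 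Integrating against the density $\phi(0)\phi'(x)/\phi(x)^2$ of $x_i$ and invoking \eqref{eq:thm-lb-eq} yields $\E[P^\A_{ij}] \geq \alpha_\phi m$.

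Combining these three contributions gives
\[
\frac{\E[\A(x)]}{\E[\OPT(x)]} \geq \frac{nM + \bigl(\binom{B}{2} + \alpha_\phi B(n-B) + 2\binom{n-B}{2}\bigr) m}{nM + \binom{n}{2} m}.
\]
Letting $r$ denote the ratio of the pair-sum coefficient to $\binom{n}{2}$, the right-hand side can be rewritten as $r - (r-1)/\bigl(1 + \tfrac{n-1}{2}\cdot m/M\bigr)$; a direct expansion of $r$ yields $r = C_{\phi,n,B}$, proving the finite-$n$ statement. For the asymptotic bound with $B = wn + o(n)$, the additive correction vanishes as $n\to\infty$ (since $m/M$ is a positive constant under the integrability hypothesis on $1/\phi^2$), and $r$ converges to $w^2 + 2\alpha_\phi w(1-w) + 2(1-w)^2 = 2 - 2(2-\alpha_\phi)w + (3-2\alpha_\phi)w^2$, matching the claim.

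The main obstacle is the mixed case, because the algorithm's actions on a pair $(i,j)$ are entangled with the entire instance: they depend on the other known sizes, on the past completions of other unknowns, and on the adaptive allocation of total processing rate. Reducing this to the clean single-variable variational form $\inf_{T \geq x} G_\phi(x,T)$ requires freezing the schedule outside the pair and using the exchangeability of the remaining unknowns together with the hazard-rate monotonicity of $\phi'/\phi$ to argue that the worst adaptive strategy against the unknown job $j$ is dominated by the choice of a deterministic processing budget $T$ informed only by $x_i$.
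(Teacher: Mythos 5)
Your approach follows the same overall structure as the paper's proof: Yao's principle via Lemma \ref{lem:yao}, i.i.d.\ job sizes with tail $\phi(0)/\phi$, a three-way split of the pairs, a lower bound $m = \E[\min(x_1,x_2)]$ per known--known pair, $2m$ per unknown--unknown pair, $\alpha_\phi m$ per mixed pair via the variational characterization $\inf_{T\geq x} G_\phi(x,T)$, and the algebraic rearrangement giving $r = C_{\phi,n,B}$. The arithmetic checks out, and your identification of the mixed case as the crux is right.

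There is, however, a genuine gap in the asymptotic part. You argue that the additive correction vanishes as $n\to\infty$ because $m/M$ is a positive constant, citing integrability of $1/\phi^2$. But that hypothesis only gives $m < \infty$; it says nothing about $M = \phi(0)\int_0^\infty dt/\phi(t)$, which may be $+\infty$ (for example with $\phi(t) = (1+t)^r$, $r \in (\tfrac{1}{2},1]$, the very functions used in Corollary \ref{cor:lb-phir}). When $M = \infty$, both $\E[\A(x)]$ and $\E[\OPT(x)]$ are infinite and the ratio you wrote down is of the form $\infty/\infty$, so your computation does not go through. The theorem's first claim (the asymptotic lower bound) carries no finiteness hypothesis on $\int dt/\phi$, so this case is part of the statement and must be handled. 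The paper does it by working with a truncated distribution $\Pr(x^a_1 \leq t) = 1 - \frac{\phi(t)^{-1}-\phi(a)^{-1}}{\phi(0)^{-1}-\phi(a)^{-1}}\indic{t<a}$, carrying uniform (algorithm-independent) $o_{a\to\infty}(1)$ error terms through the per-pair lower bounds in Lemma \ref{lem:generic-lb}, proving the ratio bound for each finite $a$, sending $n\to\infty$ first and then $a\to\infty$. Your proof would need an analogous truncation-and-limit step to cover $M=\infty$; without it, you only establish the ``Moreover'' clause.

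Two smaller remarks. First, in the mixed case you will need to make precise the reduction from an adaptive rule depending on $x_i$ (but not $x_j$) to the minimization over $1$-Lipschitz rules $v$ with $v(0)=0$ parametrized by the hitting time $T$; in the paper this is a concrete functional minimization over $\V_{x,T}$ where the minimizer $v^*_{x,T}$ is identified using only monotonicity of $\phi$, not exchangeability of the remaining unknowns. Second, the monotonicity of $\phi'/\phi$ is actually invoked in the unknown--unknown case (to prove $\phi(u)\phi(t-u)\leq\phi(t/2)^2$, whence $\E[P_{ij}]\geq 2m$), not in the mixed case as you suggest; your Remark \ref{rmk:uij-nonclairvoyant} citation for that step would need to be replaced by this argument, since that remark is specific to the exponential distribution.
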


To establish this theorem, we analyze i.i.d. job sizes sampled from the distribution $\Pr(x_i \leq t) = 1-\frac{\varphi(0)}{\varphi(t)}$. We derive in Lemma \ref{lem:generic-lb} a lower bound on the mutual delays incurred by these jobs during the run of any algorithm $\A$. This involves solving a functional minimization problem, whose solution is expressed using the function $G_\phi$ defined in \eqref{eq:Gphi}. The left term in Inequality \eqref{eq:thm-lb-eq} is proportional to the obtained lower bound, while the right term is proportional to $\E[\min(x_i,x_j)]$, which is the mutual delay caused in a run of $\OPT$. Finally, using the identity \eqref{eq:generic-output}, this inequality, which relates the mutual delays caused respectively by executing $\A$ and $\OPT$ on the chosen job sizes, can be extended to an inequality involving the objectives of both algorithms, giving a lower bound on the competitive ratio.


If $\int_0^\infty \frac{dt}{\phi(t)} = \infty$, the expectation of the job sizes is infinite. In this case, we consider a truncated distribution with a maximum $a > 0$. After completing the analysis, we derive a lower bound that depends on $a$ and $w$, by considering $B = wn + o(n)$ and $n \to \infty$, then, we conclude by taking the limit $a \to \infty$.

For any value $\alpha_\phi$ in Theorem \ref{thm:lower-bound}, observe that $C_{\phi,n,0} = 2$ and $C_{\phi,n,n} = 1$, which means that the lower bound interpolates properly the non-clairvoyant and clairvoyant settings. The remaining task is to choose an adequate function $\phi$ satisfying the conditions of the theorem with $\alpha_\phi$ as large as possible. We first consider exponentially distributed job sizes, often used to prove lower bounds in scheduling problems. This corresponds to $\phi(t) = e^t$.

\begin{corollary}\label{cor:lb-exp}
For any algorithm $\A$, it holds that
\[
\ratio_{n,B}(\A)
\geq  C_{n,B} - \frac{4(C_{n,B}-1)}{n+3}\;,
\]
with 
$C_{n,B} = 2 - \frac{B}{n} - (\tfrac{4}{e} - 1)\frac{B}{n}\big(1 - \frac{B-1}{n-1}\big)$.
In particular, if $B = wn + o(n)$ then
\[
\CR_B(\A)
\geq (2 - w) - (\tfrac{4}{e} - 1)w(1-w)\;.
\]
\end{corollary}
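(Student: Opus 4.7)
The plan is to apply Theorem \ref{thm:lower-bound} with the choice $\phi(t) = e^t$ and determine the largest admissible value of $\alpha_\phi$. I first verify the hypotheses: $\phi$ is $C^1$ and increasing, $\phi(0) = 1 > 0$, the ratio $\phi'/\phi \equiv 1$ is non-increasing, and both $\int_0^\infty e^{-t}dt = 1$ and $\int_0^\infty e^{-2t}dt = 1/2$ are finite, so both conclusions of the theorem will be available.

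Next I would compute $G_\phi(x,T)$ explicitly from \eqref{eq:Gphi}:
\[
G_\phi(x,T) = \int_0^{T-x} e^{-t} dt + x\,e^{-(T-x)} = 1 + (x-1)\,e^{-(T-x)}\;.
\]
Setting $u = T-x \geq 0$, I minimize $1 + (x-1)e^{-u}$ over $u \geq 0$ by splitting on the sign of $x-1$: when $x \geq 1$ the infimum $1$ is approached as $u \to \infty$; when $x < 1$ the coefficient is negative and the infimum $x$ is attained at $u = 0$. In both cases $\inf_{T \geq x} G_\phi(x,T) = \min(x,1)$. I then evaluate
\[
\int_0^\infty \min(x,1)\,e^{-x} dx = \int_0^1 x\,e^{-x}dx + \int_1^\infty e^{-x}dx = (1 - 2e^{-1}) + e^{-1} = 1 - e^{-1}\;,
\]
which equated with $\alpha_\phi \cdot \tfrac{1}{2}$ gives the largest admissible constant $\alpha_\phi = 2 - 2/e$, so $3 - 2\alpha_\phi = 4/e - 1$.

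Substituting into Theorem \ref{thm:lower-bound}, the constant $C_{\phi,n,B}$ matches the claimed expression for $C_{n,B}$, and the correction factor simplifies as $\tfrac{n-1}{2}\cdot\tfrac{1/2}{1} = \tfrac{n-1}{4}$, making the denominator $(n+3)/4$ and yielding the announced $\tfrac{4(C_{n,B}-1)}{n+3}$ term. For the asymptotic statement, setting $B = wn + o(n)$ and letting $n \to \infty$ makes the correction vanish, while $(B/n)(1 - (B-1)/(n-1)) \to w(1-w)$, producing $\CR_B(\A) \geq (2-w) - (4/e - 1)w(1-w)$. There is no real obstacle: the only point requiring a moment's thought is recognizing the piecewise form $\min(x,1)$ for the infimum, after which everything is direct specialization of Theorem \ref{thm:lower-bound}.
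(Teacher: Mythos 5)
Your proposal is correct and follows the same route as the paper's proof: specialize Theorem \ref{thm:lower-bound} to $\phi(t)=e^t$, verify the hypotheses, compute $G_\phi(x,T)=1+(x-1)e^{-(T-x)}$, identify $\inf_{T\ge x}G_\phi(x,T)=\min(x,1)$, evaluate the resulting integral to get $\alpha_\phi=2-2/e$, and substitute back. The writing as $\min(x,1)$ is a slightly tidier packaging of the paper's piecewise case split, but the content is identical.
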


Corollary \ref{cor:lb-exp} gives, in particular, that $\ratio_{n,0}(\A) \geq 2 - \frac{4}{n+3}$, which corresponds exactly to the tight lower bound for the non-clairvoyant scheduling problem \cite{motwani1994nonclairvoyant}. However, the bound is not tight for all values of $B \leq n$. To refine it, we consider distributions that would be more difficult to process by the algorithm. One idea is to sample a different parameter $\lambda_i \sim \mathcal{E}(1)$ independently for each $i \in [n]$, then sample $x_i \sim \mathcal{E}(\lambda_i)$. The distribution of $x_i$ in this case is given by
\begin{align*}
\Pr(x_i \geq t) 
&= \int_0^\infty \Pr(x_i \geq t \mid \lambda_i = \lambda) e^{-\lambda} d\lambda\\
&= \int_0^\infty e^{-(1+t)\lambda} d\lambda
= \frac{1}{1+t}\;,    
\end{align*}

which corresponds to $\phi(t) = 1+t$. More generally, we consider $\phi(t) = (1+t)^r$ for $r \in (\tfrac{1}{2},1]$. 
Such functions $\phi$ correspond to distributions with a heavy tale and with infinite expectation (i.e. $\int_0^\infty \frac{dt}{\phi(t)} = \infty$). Using Theorem \ref{thm:lower-bound}, they only yield lower bounds on the competitive ratio but not on $\ratio_{n,B}$. Corollary \ref{cor:lb-phir} shows the bound obtained for $r \to \frac{1}{2}$.

\begin{corollary}\label{cor:lb-phir}
Let $w \in [0,1]$. If $B = wn+o(n)$, then it holds for any algorithm $\A$ that
\[
\CR_B(\A) \geq (2 - w) - (3-2\sqrt{2}) w(1-w)\;.
\]
\end{corollary}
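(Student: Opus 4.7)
The plan is to instantiate Theorem~\ref{thm:lower-bound} with the one-parameter family $\phi_r(t) = (1+t)^r$, $r \in (\tfrac{1}{2}, 1]$, compute the best admissible constant $\alpha_{\phi_r}$ in closed form, and then let $r \to \tfrac{1}{2}^+$.

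First, I would verify the hypotheses: $\phi_r$ is smooth, positive and increasing, $\phi_r'/\phi_r = r/(1+t)$ is non-increasing, and $\int_0^\infty \phi_r(t)^{-2}\,dt = 1/(2r-1)$ is finite precisely when $r > \tfrac{1}{2}$. Since $\int_0^\infty \phi_r(t)^{-1}\,dt = \infty$ for these $r$, only the asymptotic-in-$n$ part of Theorem~\ref{thm:lower-bound} applies, which is exactly what the corollary requires.

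Next, I would solve $\inf_{T \geq x} G_{\phi_r}(x,T)$ in closed form. Setting $u = 1+T-x \geq 1$, the map $u \mapsto \tfrac{u^{1-r}-1}{1-r} + x u^{-r}$ has derivative $u^{-r-1}(u-rx)$, so the minimizer is $u = \max(1, rx)$, and the infimum equals $x$ when $x \leq 1/r$ and $\tfrac{1}{1-r}(x^{1-r}/r^r - 1)$ when $x > 1/r$. The sharpest constant admissible in \eqref{eq:thm-lb-eq} is then
\[
\alpha_{\phi_r} = (2r-1) \int_0^\infty \inf_{T \geq x} G_{\phi_r}(x,T) \cdot \frac{r}{(1+x)^{r+1}}\, dx,
\]
and Theorem~\ref{thm:lower-bound} yields, for each fixed $r > \tfrac{1}{2}$, the bound $\CR_B(\A) \geq 2 - 2(2 - \alpha_{\phi_r})w + (3 - 2\alpha_{\phi_r})w^2$.

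The main obstacle is the asymptotics $\alpha_{\phi_r} \to \sqrt{2}$ as $r \to \tfrac{1}{2}^+$, which is of indeterminate $\infty/\infty$ form. The contribution from the region $\{x \leq 1/r\}$ stays bounded and vanishes after multiplication by $2r-1$. For the tail over $\{x > 1/r\}$, the dominant piece is $\tfrac{r^{1-r}}{1-r} \int_{1/r}^\infty \tfrac{x^{1-r}}{(1+x)^{r+1}}\, dx$; rewriting its integrand as $x^{-2r}(x/(1+x))^{r+1}$ and pinching against $(2r-1)\int_{1/r}^\infty x^{-2r}\, dx = r^{2r-1} \to 1$ (the extra factor $(x/(1+x))^{r+1}$ tends to $1$ uniformly outside compacts) gives $(2r-1)\int_{1/r}^\infty x^{-2r}(x/(1+x))^{r+1}\, dx \to 1$. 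Combined with the prefactor $r^{1-r}/(1-r) \to \sqrt{2}$ at $r = \tfrac{1}{2}$, this yields $\alpha_{\phi_r} \to \sqrt{2}$, and passing to the limit in the quadratic lower bound produces the announced inequality after the expansion $(2-w) - (3-2\sqrt{2})w(1-w) = 2 - 2(2-\sqrt{2})w + (3-2\sqrt{2})w^2$.
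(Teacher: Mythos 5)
Your proposal is correct and follows essentially the same route as the paper: take $\phi_r(t)=(1+t)^r$, verify the hypotheses, compute $\inf_{T\geq x} G_{\phi_r}(x,T)$ by the same $u=1+T-x$ substitution (your closed form $\frac{1}{1-r}(x^{1-r}/r^r-1)$ matches the paper's $-\frac{1}{1-r}+\frac{(rx)^{1-r}}{r(1-r)}$), discard the vanishing contributions multiplied by $2r-1$, and let $r\to\tfrac12^+$ to obtain $\alpha_{\phi_r}\to\sqrt2$. The only cosmetic difference is in showing $(2r-1)\int_{1/r}^\infty \frac{x^{1-r}}{(1+x)^{r+1}}dx\to 1$: the paper lower-bounds the integrand by $\sqrt{x/(1+x)}\,(1+x)^{-2r}$ on $[2,\infty)$, changes variables to $s=1/(1+x)$, and integrates by parts, while you split off a compact piece and pinch the tail using $(x/(1+x))^{r+1}\to1$; both yield the same limit and either can be made fully rigorous with the same amount of bookkeeping.
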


\begin{remark}
If $x_i$ is sampled from the distribution induced by $\phi(t) = (1+t)^r$, then $x_i+1$ follows a Pareto distribution with scale 1 and shape $r$ \cite{arnold2014pareto}, which is commonly used to model the distribution of job sizes in the context of the scheduling problem.
\end{remark}

\section{Known Partial Order}\label{sec:orderonly}

Before investigating the problem within the learning-augmented framework, we introduce an algorithm exclusively for the scenario with perfect information. Subsequently, in Section \ref{sec:switch}, we present a second algorithm, that we adapt to handle possibly erroneous predictions.

The optimal algorithm $\OPT$ does not necessitate precise knowledge of job sizes. Instead, it relies solely on their ordering. This observation suggests that it might be possible to improve the competitive ratio of the non-clairvoyant case by only knowing the relative order of a subset of the job sizes.
Therefore, rather than having access to the values $x_{\sigma(1)}, \ldots, x_{\sigma(B)}$, we assume that the decision-maker is only given a priority ordering $\pi$ of them, i.e. a bijection $\pi : [B] \to \sigma([B])$ satisfying $x_{\pi(1)} \leq \ldots \leq x_{\pi(B)}$. 

\begin{algorithm}[h!]
   \caption{Catch-up and Resume Round-Robin ($\CRRR$)}
   \label{algo:orderonly}
\begin{algorithmic}
   \STATE {\bfseries Input:} Ordering $\pi$ of $x_{\sigma(1)}, \ldots, x_{\sigma(1)}$
   \STATE Set $x_{\pi(0)} = 0$
   \FOR{$i=1$ {\bfseries to} $B$}
        \STATE Run job $\pi(i)$ for $x_{\pi(i-1)}$ units of time
        \WHILE{job $\pi(i)$ is not finished}
            \STATE Run round-robin on $\{\sigma(j)\}_{j>B} \cup \{\pi(i)\}$
        \ENDWHILE
   \ENDFOR
   \STATE Run round-robin on $\{\sigma(j)\}_{j=B+1}^n$ until completion
\end{algorithmic}
\end{algorithm}

In Algorithm \ref{algo:orderonly} ($\CRRR$), for all $i \in [B]$, the execution of job $\pi(i)$ starts only upon the completion of job $\pi(i-1)$. At this moment, all jobs $\sigma(j)$ for $j > B$ are either completed or have undergone execution for $x_{\pi(i-1)}$ units of time. $\CRRR$ then runs job $\pi(i)$ for a period of length $x_{\pi(i-1)}$ to catch up with the progress of the jobs $\{\sigma(j)\}_{j>B}$. Following this synchronization phase, it runs round-robin on the set of jobs $\{\sigma(j)\}_{j>B} \cup \{\pi(i)\}$ until $\pi(i)$ terminates. The same process iterates with $\pi(i+1)$ afterward.
Once all the jobs $\sigma(i)$ for $i \in [B]$ are completed, the algorithm runs round robin on the unfinished jobs in $\{\sigma(j)\}_{j >B}$.

Leveraging the ordering $\pi$, the algorithm aims to minimize the delays caused by longer jobs to shorter ones. In the ideal scenario where $B=n$, each job begins execution only after all shorter ones have been completed. When $B<n$, it is evident that the jobs $\{\sigma(i)\}_{i \in [B]}$ should be executed in the order specified by $\pi$. However, $\CRRR$ takes advantage of this ordering even further, ensuring that job $x_{\pi(i)}$ not only avoids delaying $x_{\pi(i-1)}$ but also does not delay any job $\sigma(j)$ with $j>B$ that has a size at most $x_{\pi(i-1)}$.


\begin{theorem}\label{thm:orderonly}
Algorithm  $\CRRR$ satisfies
\[
2 - \frac{B}{n}  
 - \frac{2(1-\frac{B}{n})}{(n+1)(B+1)}
\leq \ratio_{n,B}(\CRRR) 
\leq 2 - \frac{B}{n}\;.
\]
Moreover, if $B = \lfloor wn \rfloor$ for some $w \in [0,1]$, then $\CR(\CRRR) = 2 - w$.
\end{theorem}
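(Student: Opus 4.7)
The plan is to analyze the mutual delays $P^{\CRRR}_{ij}$ pair-by-pair and apply identities \eqref{eq:generic-output}--\eqref{eq:opt-output}. Assume the sizes are distinct and labeled in sorted order $x_1 < \cdots < x_n$, and let $K$ denote the random set of known indices; for $b \in K$, write $p(b)$ for the rank of the largest known job strictly less than $b$ (with $p(b)=0$ if $b$ is minimal in $K$). A careful trace of $\CRRR$ shows that $P^{\CRRR}_{ab} \in \{x_a, 2x_a\}$ for every pair $a<b$, with $P^{\CRRR}_{ab} = x_a$ (\emph{good pair}) exactly when $b \in K$ and $a \leq p(b)$, and $P^{\CRRR}_{ab} = 2x_a$ (\emph{bad pair}) otherwise. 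The nontrivial case to verify is $a \notin K$, $b = \pi(k) \in K$: the unknown $a$ completes during the round-robin of phase $\ell := \min\{i : x_{\pi(i)} \geq x_a\}$, and $\ell < k$ iff $a \leq p(b)$, in which case $a$ finishes before $b$ even begins; if $\ell = k$, then $a$ and $b$ tie in progress during phase $k$'s round-robin and $P^{\CRRR}_{ab} = 2x_a$.

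For the upper bound I combine this classification with \eqref{eq:generic-output}--\eqref{eq:opt-output} to write $\CRRR(x) - \OPT(x) = \sum_{(a,b)\text{ bad}} x_a$. A direct combinatorial count yields $\Pr((a,b)\text{ bad}) = (1-\tfrac{B}{n}) + Q_{b-a}$, where $Q_k := \tfrac{B}{n}\prod_{j=0}^{k-1}\tfrac{n-B-j}{n-1-j}$ is the probability that $b \in K$ while the $k$ preceding indices are all outside $K$. The hockey-stick identity gives the key fact $\sum_{k=1}^{n-1} Q_k = 1 - B/n$, so $\sum_{k=1}^{n-a} Q_k \leq 1-B/n$ for every $a$. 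This leads to
\[
\E[\CRRR(x) - \OPT(x)] = (1-\tfrac{B}{n})\!\sum_{a<b}\! x_a + \sum_a x_a \!\sum_{k=1}^{n-a}\! Q_k \leq (1-\tfrac{B}{n})\OPT(x),
\]
establishing $\ratio_{n,B}(\CRRR) \leq 2 - B/n$.

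For the lower bound I take the near-uniform instance $x_a = 1 + a\epsilon$ and send $\epsilon \to 0^+$ (the perturbation ensures distinct sizes so the classification applies). A second hockey-stick computation gives $\sum_{k=1}^{n-1}(n-k)Q_k = \tfrac{B(n-B)}{B+1}$, and after a short algebraic simplification,
\[
\lim_{\epsilon \to 0}\frac{\E[\CRRR(x)]}{\OPT(x)} = 1 + \frac{\binom{n}{2}(1-\tfrac{B}{n}) + \tfrac{B(n-B)}{B+1}}{n(n+1)/2} = 2 - \tfrac{B}{n} - \tfrac{2(1-B/n)}{(n+1)(B+1)}.
\]
Since $\ratio_{n,B}(\CRRR)$ is a supremum over $x$, this limit is a valid lower bound. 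The asymptotic statement when $B = \lfloor wn \rfloor$ follows because both bounds converge to $2 - w$ as $n \to \infty$. I expect the main obstacle to be the case analysis for $P^{\CRRR}_{ab}$ when $a \notin K$, $b \in K$, which requires a careful phase-by-phase tracking of round-robin progress to distinguish $\ell<k$ from $\ell=k$; once this characterization is established, the remaining steps reduce to standard hockey-stick manipulations of binomial coefficients.
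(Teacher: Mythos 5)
Your proof is correct, and the classification of pairs into \emph{good} ($P^{\CRRR}_{ab} = x_a$) and \emph{bad} ($P^{\CRRR}_{ab} = 2x_a$) is a valid and clean way to package the delay analysis. The characterization ``$(a,b)$ is good iff $b\in K$ and $a\le p(b)$'' correctly captures all four cases (both known, both unknown, $a$ known only, $b$ known only), and the hockey-stick computations $\sum_{k=1}^{n-1}Q_k = 1-B/n$ and $\sum_{k=1}^{n-1}(n-k)Q_k = \tfrac{B(n-B)}{B+1}$ both check out.

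The route is genuinely different in its bookkeeping from the paper. The paper does not classify pairs globally; instead, for each unknown job $\sigma(j)$ it sums the delays $D_{\pi(i)\sigma(j)}$ over $i\in[B]$ and recognizes a telescoping structure, giving $\sum_i D_{\pi(i)\sigma(j)} = \sum_i x_{\sigma(i)}\indic{x_{\sigma(j)}>x_{\sigma(i)}} + x_{\sigma(j)}\indic{x_{\sigma(j)}\le x_{\pi(B)}}$. The first term is handled via the averaging lemma $\E[x_{\sigma(1)}\indic{x_{\sigma(1)}<x_{\sigma(2)}}] \le \tfrac12\E[\min(x_{\sigma(1)},x_{\sigma(2)})]$, and for the lower bound the paper computes $\Pr(\sigma(j)<\pi(B)) = \Pr(\sigma(n)<\max_{i\le B}\sigma(i)) = \tfrac{B}{B+1}$ by a direct permutation count. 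Your approach replaces this mix of tools with a single probabilistic quantity, $\Pr((a,b)\text{ bad}) = (1-\tfrac Bn) + Q_{b-a}$, reducing both the upper and lower bounds to hockey-stick identities over the gap $b-a$; this gives a somewhat more uniform argument, at the cost of having to justify the good/bad dichotomy up front (including the subtle tie case $\ell=k$, which you handle correctly). Each approach is elementary; the paper's telescoping is more naturally reusable in the subsequent sections, while your pairwise classification is arguably the cleaner self-contained proof of this theorem.

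One small presentational note: the upper bound $\sum_{k=1}^{n-a}Q_k \le 1-B/n$ you invoke is tight only for $a=1$, so the inequality chain $\E[\CRRR(x)-\OPT(x)] \le (1-\tfrac Bn)\OPT(x)$ is not an equality for general instances; this is fine since you establish tightness separately via the near-uniform instance, but it is worth flagging that the two bounds in the theorem are genuinely proved by different instances (worst case vs.\ uniform-like), exactly as in the paper.
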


Theorem \ref{thm:orderonly} shows a substantially stronger result than the one presented in \cite{benomar2023advice}, where the algorithm leveraging the values of the job sizes $x_{\sigma(1)}, \ldots, x_{\sigma(B)}$ is only $\big(2 - \frac{B(B-1)}{n(n-1)}\big)$-competitive.

\paragraph{Action predictions}
The information provided to $\CRRR$ is the order in which the jobs $\{\sigma(i)\}_{i \in [B]}$ would be executed by $\OPT$. This corresponds to the error-free scenario of \textit{action predictions} \cite{antoniadis2023online, lindermayr2022permutation, lassota2023minimalistic, sadek2024algorithms}, where the decision-maker receives predictions regarding the actions taken by the optimal offline algorithm, rather than numeric predictions of unknown parameters.
In the context of the scheduling problem, utilizing the $\ell_1$ norm to measure the error is not ideal for analyzing the action prediction setting \cite{im2021non}. Alternative error metrics, which account for the number of inversions in the predicted permutation in comparison to the true one \cite{lindermayr2022permutation}, would be more suitable. 
Therefore, adapting $\CRRR$ to imperfect action predictions is left for future research as it requires different considerations. For now, we shift our focus to introducing another algorithm that utilizes not only the priority order induced by the job sizes, but the size values themselves.

\section{Predictions of the Job Sizes}\label{sec:switch}

We propose in this section a generic algorithm $\switch$, which we will adapt in the cases of perfect and imperfect predictions. The algorithm takes as input $n$ jobs with unknown sizes and breakpoints $z_{\sigma(1)},\ldots,z_{\sigma(B)}$ that depend on the predictions of $x_{\sigma(1)},\ldots,x_{\sigma(B)}$, then it alternates running round-robin on the jobs $\{\sigma(j)\}_{j>B}$ and \textit{Shortest Predicted Job First} (SPJF), introduced in  \cite{purohit2018improving}, on the jobs $\{\sigma(i)\}_{i \in [B]}$, where the moment of switching from an algorithm to another is determined by the breakpoints.

As in Section \ref{sec:orderonly}, we call \textit{ordering} of $z_{\sigma(1)},\ldots,z_{\sigma(B)}$ any bijective application $\pi : [B] \to \sigma([B])$ satisfying $z_{\pi(1)} \leq \ldots \leq z_{\pi(B)}$. Note that, if the breakpoints are not pairwise distinct, then the ordering is not unique. In that case, $\switch$ chooses an ordering $\pi$ uniformly at random. We assume furthermore that the breakpoints induce the same order as the predictions, i.e. $z_{\sigma(i)} < z_{\sigma(j)} \iff y_{\sigma(i)} < y_{\sigma(j)}$ for all $i, j \in [B]$.

\begin{algorithm}[h!]
   \caption{Switch algorithm $\switch(z^\sigma,x)$}
   \label{algo:switch}
\begin{algorithmic}
   \STATE {\bfseries Input:} Breakpoints $z^\sigma = (z_{\sigma(i)})_{i \in [B]}$
   \STATE $\pi \gets$ ordering of $z^\sigma$ chosen uniformly at random\;
   \FOR{$i=1$ {\bfseries to} $B$}
        \WHILE{$\min\limits_{j>B} \frac{\spt_{\sigma(j)}(t)}{x_{\sigma(j)}} < 1$ and $\max\limits_{j > B}\spt_{\sigma(j)}(t) < z_{\pi(i)}$}
            \STATE Run round-robin on $\{\sigma(j)\}_{j=B+1}^n$
        \ENDWHILE
        \STATE Run job $\pi(i)$ until completion
   \ENDFOR
   \STATE Run round-robin on $\{\sigma(j)\}_{j=B+1}^n$ until completion
\end{algorithmic}
\end{algorithm}

Consider a run of $\switch$, and let $i \in [B]$. The first condition for entering the while loop is the existence of $j > B$ such that $\spt_{\sigma(j)}(t) < x_{\sigma(j)}$. This signifies that the jobs $\{\sigma(j)\}_{j>B}$ are not all completed, which is a verification feasible for the decision-maker without knowledge of the sizes $\{x_{\sigma(j)}\}_{j>B}$.
The second condition means that no job $\sigma(j)$ for $j>B$ has been in execution for more than $z_{\pi(i)}$ units of time. Given that round-robin allocates equal importance to all jobs $\{\sigma(j)\}_{j>B}$, upon exiting the while loop, each job $\sigma(j)$ is either completed or has been in execution for precisely $z_{\pi(i)}$ units of time.
Following this step, job $\pi(i)$ is executed until completion, and the same process recurs for $i+1$.

This algorithm ensures that any job $x_{\sigma(i)}$ with $i \leq B$ does not delay any other job $j$ whose size is at most $x_j \leq z_{\sigma(i)}$, and the delay it causes to jobs not satisfying this condition is exactly $x_{\sigma(i)}$. This allows efficient control of the mutual delays between the jobs by conveniently choosing the breakpoints.

\subsection{Perfect Predictions}
Assuming that the predictions are perfect, i.e. the decision-maker knows the exact sizes of jobs ${\sigma(1)}, \ldots, {\sigma(B)}$, it is possible to set $z_{\sigma(i)} = x_{\sigma(i)}$ for all $i \in [B]$.

\begin{theorem}\label{thm:switch-perfect}
Algorithm $\switch$ with breakpoints $z_{\sigma(i)} = x_{\sigma(i)}$ for all $i \in [B]$ satisfies
\[
\ratio_{n,B}(\switch) 
= 2 - \frac{B}{n} - \frac{2(1 - \frac{B}{n})}{n+1}\;.
\]
In particular, if $B = \lfloor wn \rfloor$ for some $w \in [0,1]$ then $\CR_B(\switch) = 2 - w$.
\end{theorem}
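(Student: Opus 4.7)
The plan is to exploit the identity \eqref{eq:generic-output} by computing the expected mutual delays $\E_\sigma[P_{ij}^{\switch}]$ for every pair $\{i,j\}$ and bounding the resulting ratio against $\OPT$ given by \eqref{eq:opt-output}.

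First I would fix an instance $x$ with pairwise distinct sizes and trace $\switch$: the known jobs run to completion in dedicated phases, in the increasing-size order $\pi$, and before the dedicated phase of $\pi(k)$ the while loop has brought every unfinished unknown job up to level $x_{\pi(k)}$ (using that $x_{\pi(k)}$ is non-decreasing in $k$). A four-way casework on whether $i,j$ lie in $\sigma([B])$ then establishes: $P_{ij}^{\switch}=\min(x_i,x_j)$ in the \emph{both-known} case (sequential dedicated execution) and in the \emph{only-larger-known} case (the smaller job finishes during a round-robin phase strictly preceding the larger's dedicated execution); and $P_{ij}^{\switch}=2\min(x_i,x_j)$ in the \emph{both-unknown} case (standard round-robin) and in the \emph{only-smaller-known} case (the larger unknown is brought by round-robin to level $x_{\min}$ before the smaller's dedicated phase and then gains no further processing until the smaller terminates). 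In short, $P_{ij}^{\switch}$ equals $\min(x_i,x_j)$ when the job of larger size lies in $\sigma([B])$ and $2\min(x_i,x_j)$ otherwise.

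Since $\sigma$ selects $\sigma([B])$ uniformly at random, the larger of any fixed pair is known with probability $B/n$, so $\E_\sigma[P_{ij}^{\switch}]=(2-B/n)\min(x_i,x_j)$. Writing $M=\sum_i x_i$ and $S=\sum_{i<j}\min(x_i,x_j)$, the identities \eqref{eq:generic-output} and \eqref{eq:opt-output} give $\E[\switch(x)]=M+(2-B/n)\,S$ and $\OPT(x)=M+S$, hence
\[
\frac{\E[\switch(x)]}{\OPT(x)}=1+\left(1-\frac{B}{n}\right)\frac{S}{M+S}.
\]
I would then apply Chebyshev's sum inequality to $S=\sum_{k=1}^n (n-k)\,x_{(k)}$ (pairing the non-increasing coefficients $n-k$ with the non-decreasing order statistics $x_{(k)}$) to get $S\leq \tfrac{n-1}{2}\,M$, so that $S/(M+S)\leq (n-1)/(n+1)$ and $\E[\switch(x)]/\OPT(x)\leq 2-B/n-2(1-B/n)/(n+1)$. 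For tied instances the expected mutual delay is only smaller (the factor $2$ appears only when both tied jobs are unknown, an event of probability at most $1-B/n$), so this upper bound extends to all $x\in(0,\infty)^n$.

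For the matching lower bound on the supremum, I would evaluate the distinct-sizes formula on the perturbed instance $x^{\eps}=(1,1+\eps,\ldots,1+(n-1)\eps)$: the ratio converges to $2-B/n-2(1-B/n)/(n+1)$ as $\eps\to 0^+$, establishing equality and yielding the claimed value of $\ratio_{n,B}(\switch)$. Finally, for $B_n=\lfloor wn\rfloor$ the expression tends to $2-w$ as $n\to\infty$, giving $\CR_B(\switch)=2-w$. The main obstacle will be the four-case analysis for $P_{ij}^{\switch}$, which requires carefully tracking the round-robin level relative to the breakpoint $x_{\pi(k)}$ and identifying whether each of $i,j$ terminates during a round-robin or a dedicated phase; beyond this, the argument reduces to Chebyshev's inequality and routine algebra.
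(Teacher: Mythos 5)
Your proposal is correct and follows essentially the same route as the paper: both establish the per-pair mutual delays of $\switch$ (the paper via Lemma~\ref{lem:delays-switch-algo} specialized to $z^\sigma=x^\sigma$, you by tracing the algorithm directly) and then optimize the resulting ratio over instances, with the perturbed near-equal instance $x^{\eps}$ providing the matching lower bound. The one cosmetic difference is in packaging: you condition directly on whether the larger job of the pair is known, obtaining the clean identity $\E_\sigma[P_{ij}]=(2-\tfrac{B}{n})\min(x_i,x_j)$ and then apply Chebyshev's sum inequality to bound $S/(M+S)$, whereas the paper splits over positional pair-types $(i,j\le B)$, $(i\le B<j)$, $(i,j>B)$, uses Lemma~\ref{lem:Emin/2} to get the factor $\tfrac{3}{2}$ on mixed pairs, and maximizes over the normalized weights $q_i$ — the two optimization steps are equivalent, as both reduce to showing $\sum_i(n-i)q_i \le \tfrac{n-1}{2}$ with equality at uniform weights.
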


Note that the $(n,B)$-competitive ratio above is strictly better than that of $\CRRR$, presented in Theorem \ref{thm:orderonly}. However, both algorithms have equivalent performance when $n$ is large. In particular, their competitive ratios coincide when $B = \lfloor wn \rfloor$.

A slight improvement on the $(n,B)$-competitive ratio can be obtained by introducing randomness into $\switch$. Indeed, consider the \textit{Run To Completion} algorithm ($\textsf{RTC}$) defined in \cite{motwani1994nonclairvoyant}, executing all the jobs until completion in a uniformly random order. Then we have the following result.

\begin{proposition}\label{prop:rand-switch}
The algorithm that runs $\textsf{RTC}$ with probability $\frac{2(n-B)}{n(n+3) - 2B}$, and runs $\switch$ with breakpoints $z_{\sigma(i)} = x_{\sigma(i)}$ for all $i \in [B]$ with the remaining probability, has an $(n,B)$-competitive ratio of
\[
2 - \frac{B}{n} - \frac{2(1 - \frac{B}{n})(2 - \frac{B}{n})}{n+3 - \frac{2B}{n}}\;.
\]
\end{proposition}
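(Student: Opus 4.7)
The plan is to exploit the complementary behavior of $\switch$ and $\RTC$: $\switch$ (from Theorem \ref{thm:switch-perfect}) is near-optimal on instances with very heterogeneous job sizes but suboptimal on all-equal inputs, whereas $\RTC$ is optimal on all-equal inputs but attains a ratio of $(n+1)/2$ on instances with one dominant job. Mixing the two with a carefully tuned probability $p$ should cancel the instance-shape dependence.

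I would first compute $\E[\switch(x)]$ by evaluating $\E[P^{\switch}_{ij}]$ over the random subset $\sigma$. Using the mutual-delay case analysis already underlying Theorem \ref{thm:switch-perfect}---a known job contributes $2\min(x_i,x_j)$ to the mutual delay when paired with an unknown job of larger size but only $\min(x_i,x_j)$ when paired with an unknown job of smaller size; two known jobs are scheduled sequentially in increasing order, yielding $\min(x_i,x_j)$; and two unknown jobs undergo round-robin, yielding $2\min(x_i,x_j)$---a direct aggregation over the four events concerning the membership of $\{i,j\}$ in $I$ gives
\[
\E\bigl[P^{\switch}_{ij}\bigr] \;=\; \frac{2n-B}{n}\,\min(x_i,x_j),
\]
and hence, via \eqref{eq:generic-output}, $\E[\switch(x)] = \sum_i x_i + \tfrac{2n-B}{n}\sum_{i<j}\min(x_i,x_j)$. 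For $\RTC$, symmetry over the random execution order yields $\E[P^{\RTC}_{ij}] = (x_i+x_j)/2$, so $\E[\RTC(x)] = \tfrac{n+1}{2}\sum_i x_i$.

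Let $\A_p$ denote the mixture that runs $\RTC$ with probability $p$ and $\switch$ otherwise, and set $\alpha = \sum_i x_i$, $\beta = \sum_{i<j}\min(x_i,x_j)$, so that $\OPT(x) = \alpha + \beta$. Combining the two formulas above gives
\[
\E[\A_p(x)] \;=\; \Bigl(1 + \tfrac{p(n-1)}{2}\Bigr)\alpha \;+\; (1-p)\,\tfrac{2n-B}{n}\,\beta.
\]
The key step is to tune $p$ so that the coefficients of $\alpha$ and $\beta$ coincide, turning $\E[\A_p(x)]$ into an exact multiple of $\OPT(x)$. Solving
\[
1 + \tfrac{p(n-1)}{2} \;=\; (1-p)\,\tfrac{2n-B}{n}
\]
yields $p = \tfrac{2(n-B)}{n(n+3)-2B}$, and the common value of the two coefficients is $\tfrac{(2n-B)(n+1)}{n(n+3)-2B}$, which a short algebraic simplification rewrites as the announced $2 - \tfrac{B}{n} - \tfrac{2(1-B/n)(2-B/n)}{n+3-2B/n}$.

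The main obstacle is the first step, which relies on the mutual-delay case analysis already carried out in the proof of Theorem \ref{thm:switch-perfect}; the rest is a balancing computation. What I find noteworthy is that the chosen $p$ annihilates the $\beta/\alpha$ dependence of $\E[\A_p(x)]/\OPT(x)$ entirely, so the bound actually holds as an \emph{equality} for every input---not only matching at the two extreme instances (all-equal and one-dominant-job) where $\RTC$ and $\switch$ attain their respective worst competitive ratios.
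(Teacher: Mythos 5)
Your proof is correct and takes essentially the same route as the paper: it combines the mutual-delay formula for $\switch$ (yielding $\E[\switch(x)] = \sum_i x_i + \tfrac{2n-B}{n}\sum_{i<j}\min(x_i,x_j)$ for pairwise-distinct sizes, which the paper records as equation \eqref{eq:align-switchoutcome}) with $\E[\RTC(x)] = \tfrac{n+1}{2}\sum_i x_i$, and then chooses the mixing probability to equalize the coefficients of $\sum_i x_i$ and $\sum_{i<j}\min(x_i,x_j)$. One small inaccuracy: your $\E[P^{\switch}_{ij}] = \tfrac{2n-B}{n}\min(x_i,x_j)$ is only an equality when $x_i\neq x_j$ (for ties it is $\leq$), so the per-instance ratio is an upper bound rather than a pointwise equality; this does not affect the competitive ratio, which is a supremum and is attained in the limit by pairwise-distinct near-equal instances.
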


For $B=0$, the ratio above becomes $2 - \frac{4}{n+3}$, which is the best possible in the non-clairvoyant setting.


\subsection{Imperfect Predictions}\label{sec:imperfect-pred}
We assume in this section that no quality guarantees are given on the predictions $\{y_{\sigma(i)}\}_{i \in [B]}$. Recall that the total error $\eta^\sigma = \sum_{i=1}^B |x_{\sigma(i)} - y_{\sigma(i)}|$ is a random variable because $\sigma$ is a uniformly random permutation of $[n]$, hence our results will depend on $\E[\eta^\sigma]$.

The goal
is to design an algorithm that is consistent, robust, and with a competitive ratio having a smooth dependency to $\E[\eta^\sigma]$.
We first study the consistency and smoothness of $\switch$ with well-chosen breakpoints, then we show that combining it with round-robin as in \cite{purohit2018improving, lassota2023minimalistic} gives robustness guarantees.

Using the trivial breakpoints $z_{\sigma(i)} = y_{\sigma(i)}$ as in the previous section is not enough to guarantee smoothness. Consider, for example, job sizes all equal to 1, and $B$ predictions $y_{\sigma(i)} = 1 - \epsilon$ for an arbitrarily small $\epsilon$. Blindly following these predictions, taking $z_{\sigma(i)} = y_{\sigma(i)}$ for all $i \in [B]$, results in delaying all jobs with unknown sizes by $B$ time units compared to the case of perfect predictions. This creates a discontinuity in the competitive ratio when $\epsilon$ becomes positive, proving non-smoothness. Hence, we consider instead randomized breakpoints. 

\begin{algorithm}[h!]
   \caption{imperfect predictions $\switch(\xi y^\sigma, x)$}
   \label{algo:any-pred}
\begin{algorithmic}
   \STATE {\bfseries Input:} predictions $(y_{\sigma(i)})_{i \in [B]}$, distribution $F$
   \STATE Sample $\xi \sim F$\;
   \STATE Run $\switch$ with breakpoints $z_{\sigma(i)} = \xi y_{\sigma(i)}$
\end{algorithmic}
\end{algorithm}

Algorithm \ref{algo:any-pred} simply runs $\switch$ with random breakpoints $z_{\sigma(i)} = \xi y_{\sigma(i)}$. 
The following lemma gives an upper bound on the algorithm's objective function depending on the distribution $F$ of $\xi$.

\begin{lemma}\label{lem:alg-output-F}
Let $F$ be a probability distribution on $(0,\infty)$, and consider the mappings $h_F: (0,\infty)^2 \to \R$ and $g_F: (0,\infty) \to \R$ defined by
\begin{align*}
h_F(s,t) &= t \cdot {\textstyle \Pr_{\xi\sim F}}(\xi < \tfrac{s}{t})\\
g_F(s) &= (1-s) {\textstyle \Pr_{\xi\sim F}}(\xi<s) + \E_{\xi\sim F}[\xi \indic{\xi < s}]\;.  
\end{align*}
Let $\beta_F = \sup_{s \in (0, 1]}\frac{g_F(s)}{s}$ and $\gamma_F = \sup_{s \geq 1} (g_F(s) + s)$. If $\beta_F, \gamma_F < \infty$ and $h_F$ is $L_F$-Lipschitz w.r.t. the second variable $t$, then for any job sizes $x_1,\ldots,x_n$ and $B \leq n$, the expected sum of the completion times achieved by $\switch$ with breakpoints $z_{\sigma(i)} = \xi y_{\sigma(i)}$  is at most 
\begin{align*}
\sum_{i=1}^n x_i + C^1_{n,B,F} \sum_{i<j} \min(x_i,x_j) + C^2_{n,B,F} \E[\eta^\sigma]\;,    
\end{align*}
with $C^1_{n,B,F} = 2 - \tfrac{B}{n} - \big(2 - \beta_F - \gamma_F \big)\tfrac{B}{n}\big( 1 - \tfrac{B-1}{n-1}\big)$ and $C^2_{n,B,F} = (1+L_F+\E[\xi])(n-B) + B-1$.
\end{lemma}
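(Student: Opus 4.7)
The plan is to start from $\switch(x) = \sum_i x_i + \sum_{i<j} P_{ij}$ (Equation~\eqref{eq:generic-output}) and bound $\E_{\sigma,\xi}[P_{ij}]$ pair by pair, splitting each pair $\{i,j\}$ into three cases according to $\sigma$: both $i,j \in \sigma([B])$ (probability $\tfrac{B(B-1)}{n(n-1)}$), both outside (probability $\tfrac{(n-B)(n-B-1)}{n(n-1)}$), or mixed (total probability $\tfrac{2B(n-B)}{n(n-1)}$). Both-unknown pairs are processed in round-robin throughout the execution of $\switch$, so $P_{ij} = 2\min(x_i,x_j)$. For both-predicted pairs $i = \pi(k)$, $j = \pi(k')$ with $k < k'$, the algorithm runs $\pi(k)$ to completion before $\pi(k')$ starts, hence $P_{ij} = x_{\pi(k)}$; combining $y_{\pi(k)} \leq y_{\pi(k')}$ with the triangle inequality gives $x_{\pi(k)} \leq x_{\pi(k')} + \eta_i + \eta_j$, so $P_{ij} \leq \min(x_i,x_j) + \eta_i + \eta_j$.

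The mixed case, say $i$ predicted and $j$ unknown, is the crux. A case split on whether $j$ finishes before the round-robin reaches the breakpoint $\xi y_i$ gives $P_{ij} = x_j$ when $x_j \leq \xi y_i$ and $P_{ij} = x_i + \xi y_i$ otherwise. Taking expectation over $\xi \sim F$ and rewriting with the identity $g_F(u) = (1-u) F(u) + G(u)$, where $G(u) := \E[\xi \indic{\xi<u}]$, yields
\[
\E_\xi[P_{ij}] = x_j + y_i\, g_F(x_j/y_i) + (x_i - y_i)\, F(x_j/y_i).
\]
I then compare to the perfect-prediction reference $\E_\xi[P_{ij}^{\mathrm{perf}}] := x_j + x_i g_F(x_j/x_i)$ obtained by substituting $y_i \to x_i$. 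The residual $|(x_i - y_i) F(x_j/y_i)| \leq \eta_i$ is immediate, and the remaining gap equals $H_F(x_j, y_i) - H_F(x_j, x_i)$ with $H_F(s,t) := t\, g_F(s/t)$. The key technical step is to show that $H_F$ is $(L_F + \E[\xi])$-Lipschitz in $t$: using the decomposition $H_F(s,t) = (t-s) F(s/t) + t\, G(s/t)$ and a direct calculation one obtains the identity $\partial_t H_F(s,t) = \partial_t h_F(s,t) + G(s/t)$, so the hypothesis on $h_F$ together with $G(s/t) \leq \E[\xi]$ yields the claim (absolute continuity, which Lipschitz already grants, handles the case when $F$ has no density). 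This delivers the central inequality $\E_\xi[P_{ij}] \leq \E_\xi[P_{ij}^{\mathrm{perf}}] + (1 + L_F + \E[\xi])\, \eta_i$.

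The remaining steps are routine aggregation. The perfect-prediction term is at most $(1+\beta_F)\min(x_i,x_j)$ when $x_j \leq x_i$ and at most $\gamma_F\min(x_i,x_j)$ when $x_j > x_i$; pairing the two mixed orientations of $\{i,j\}$ yields a combined contribution of $(1+\beta_F+\gamma_F)\min(x_i,x_j)$. Weighting the three pair-types by their $\sigma$-probabilities and simplifying recovers precisely $C^1_{n,B,F}$ as the coefficient on $\sum_{i<j}\min(x_i,x_j)$. For the error contribution, summing $\eta_i+\eta_j$ over both-predicted pairs gives $(B-1)\,\eta^\sigma$ (for fixed $\sigma$), summing $(1+L_F+\E[\xi])\,\eta_i$ over mixed pairs (with $i$ on the predicted side) gives $(n-B)(1+L_F+\E[\xi])\,\eta^\sigma$, and taking $\E_\sigma$ recovers $C^2_{n,B,F}\,\E[\eta^\sigma]$. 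The main obstacle throughout is the Lipschitz estimate for $H_F$: the hypothesis grants Lipschitzness only for $h_F(s,t) = t\, F(s/t)$, so one must exploit the decomposition $H_F = (t-s) F(s/t) + t\, G(s/t)$ to derive the extra $G(s/t)$ summand in $\partial_t H_F$ and absorb it into the $\E[\xi]$ slack; without this identification, the $y_i \to x_i$ perturbation could not be converted into the clean linear-in-$\eta_i$ error giving the stated $C^2_{n,B,F}$.
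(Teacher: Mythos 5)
Your proof is correct and reaches the same conclusion with the same constants, but it takes a genuinely different route through the crucial mixed-pair case ($i \leq B$, $j > B$). The paper works directly with the delay decomposition $P_{\sigma(i)\sigma(j)} = D_{\sigma(i)\sigma(j)} + D_{\sigma(j)\sigma(i)}$ and bounds each piece: $\E[\min(\xi y_{\sigma(i)}, x_{\sigma(j)})] \leq \E[\min(\xi x_{\sigma(i)}, x_{\sigma(j)})] + \E[\xi]\,\eta_{\sigma(i)}$ via $|\min(a,c)-\min(b,c)| \leq |a-b|$, and $\E[x_{\sigma(i)}\indic{\xi y_{\sigma(i)} < x_{\sigma(j)}}] \leq h_F(x_{\sigma(j)}, y_{\sigma(i)}) + \eta_{\sigma(i)} \leq h_F(x_{\sigma(j)}, x_{\sigma(i)}) + (1+L_F)\eta_{\sigma(i)}$ using the Lipschitz hypothesis on $h_F$ exactly where it is given. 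Assembling these two inequalities already lands on $x_{\sigma(j)} + x_{\sigma(i)} g_F(x_{\sigma(j)}/x_{\sigma(i)}) + (1+L_F+\E[\xi])\eta_{\sigma(i)}$ without invoking any regularity of $g_F$. You instead compute $\E_\xi[P_{ij}]$ exactly, isolate the perfect-prediction reference $x_j + x_i g_F(x_j/x_i)$, and control the perturbation via a $(L_F+\E[\xi])$-Lipschitz bound for $H_F(s,t) := t\,g_F(s/t)$ plus the easy $|(x_i-y_i)F(x_j/y_i)| \leq \eta_i$ term. The Lipschitz estimate for $H_F$ is itself a nontrivial lemma not contained in the hypotheses: your sketch via $\partial_t H_F = \partial_t h_F + G(s/t)$ requires $F$ to have a density, and the "absolute continuity" aside papers over a real gap. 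The estimate does hold in full generality — writing $K(s,t) := H_F(s,t)-h_F(s,t) = \E[(t\xi-s)\indic{\xi<s/t}]$ one checks for $t_1 < t_2$ that $K(s,t_1)-K(s,t_2) = (t_1-t_2)\E[\xi\indic{\xi<s/t_2}] + \E[(t_1\xi-s)\indic{s/t_2 \leq \xi < s/t_1}]$, both summands lying in $[-(t_2-t_1)\E[\xi\indic{\xi<s/t_1}],\,0]$ after using $s \leq t_2\xi$ on the second event, so $K$ is $\E[\xi]$-Lipschitz in $t$ — but you should spell this out rather than wave at it, since it is precisely the step the paper's argument avoids. The aggregation over pairs, both-known case, both-unknown case, and error-term bookkeeping are all correct and match the paper.
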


A trivial choice of $\xi$ is the constant random variable equal to $1$ a.s., but this is not enough to guarantee smoothness, as it corresponds to the Dirac distribution $F = \delta_1$, for which $h_F$ is not continuous w.r.t. to $t$. 
In the next lemma, we provide a specific choice of distribution $F$ depending on a single parameter $\rho$, and we express the upper bound from the previous lemma using this parameter.

\begin{lemma}\label{lem:switch-exp}
Let $\rho \in (0,1]$ and 
\[
F: s \mapsto (1 - e^{-(s-1)/\rho})\indic{s>1}
\]
a shifted exponential distribution with parameter $1/\rho$, i.e. $\xi \sim 1 + \mathcal{E}(1/\rho)$, then $\switch$ with breakpoints $z_{\sigma(i)} = \xi y_{\sigma(i)}$ for all $i \in [B]$ has an $(n,B)$-competitive ratio of at most
\[
\left( 2 - \tfrac{B}{n} + \rho \tfrac{B}{n}(1 - \tfrac{B-1}{n-1} ) \right) + \left( \tfrac{4}{\rho}(1 - \tfrac{B}{n}) + \tfrac{B}{n} \right) \frac{n \E[\eta^\sigma]}{\OPT(x)}\;.
\]
\end{lemma}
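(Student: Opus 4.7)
The strategy is to invoke Lemma~\ref{lem:alg-output-F} with the specified shifted exponential distribution $F$, compute the four relevant quantities $\beta_F$, $\gamma_F$, $\mathbb{E}[\xi]$ and $L_F$ in closed form, and then simplify the bound, dividing through by $\OPT(x)$ to get a competitive ratio. Since $\OPT(x) = \sum_i x_i + \sum_{i<j}\min(x_i,x_j) \geq \sum_{i<j}\min(x_i,x_j)$, the first two terms in the bound of Lemma~\ref{lem:alg-output-F} divide to give at most the coefficient $C^1_{n,B,F}$ (which will be $\geq 1$), while the error term gives a smoothness factor after bounding $C^2_{n,B,F}$ by $n$ times the target expression.

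\textbf{Key computations.} For $\xi \sim 1 + \mathcal{E}(1/\rho)$ we have $\mathbb{E}[\xi] = 1 + \rho$ and $\Pr(\xi < s) = (1 - e^{-(s-1)/\rho})\indic{s>1}$. Since $F$ is supported on $[1,\infty)$, the map $g_F$ vanishes on $(0,1]$, hence $\beta_F = 0$. For $s \geq 1$, a direct computation of $\mathbb{E}[\xi\indic{\xi<s}]$ via the substitution $v = (u-1)/\rho$ yields
\[
g_F(s) + s = (2+\rho) - (1+\rho)e^{-(s-1)/\rho},
\]
so $\gamma_F = \sup_{s\ge 1}(g_F(s)+s) = 2+\rho$. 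Plugging $\beta_F + \gamma_F = 2+\rho$ into the formula for $C^1_{n,B,F}$ gives exactly the consistency term $2 - \tfrac{B}{n} + \rho \tfrac{B}{n}(1 - \tfrac{B-1}{n-1})$.

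\textbf{Lipschitz constant.} For the function $h_F(s,t) = t(1-e^{-(s-t)/(t\rho)})\indic{s>t}$, I compute $\partial_t h_F$ on $\{t<s\}$ and express it as $1 - e^{1/\rho}e^{-v}(1+v)$ with $v = s/(t\rho) \in [1/\rho,\infty)$. Since $v\mapsto e^{-v}(1+v)$ is decreasing, this derivative ranges from $-1/\rho$ (at $t=s$) up to $1$ (as $t\to 0$), and $h_F$ is continuous at $t=s$. Hence $L_F \le \max(1, 1/\rho) = 1/\rho$ for $\rho \in (0,1]$.

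\textbf{Assembly.} Substituting $\mathbb{E}[\xi] = 1+\rho$ and $L_F = 1/\rho$ into $C^2_{n,B,F} = (1 + L_F + \mathbb{E}[\xi])(n-B) + B - 1$ gives $C^2_{n,B,F} = (2 + \rho + 1/\rho)(n-B) + B - 1$. The target smoothness factor, multiplied by $n$, equals $(4/\rho)(n-B) + B$; the inequality $C^2_{n,B,F} \le (4/\rho)(n-B) + B$ reduces to $(2+\rho - 3/\rho)(n-B) \le 1$, which holds for $\rho\in(0,1]$ since $3/\rho \ge 3 \ge 2+\rho$. Combining this with $C^1_{n,B,F} \ge 1$ and dividing the bound of Lemma~\ref{lem:alg-output-F} by $\OPT(x)$ yields the claimed inequality.

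\textbf{Main obstacle.} The integral calculation giving the clean closed form $g_F(s)+s = (2+\rho)-(1+\rho)e^{-(s-1)/\rho}$ and the Lipschitz analysis of $h_F$ are the delicate parts: one must verify that the supremum in $\gamma_F$ is really attained at $s\to\infty$ (rather than at an interior point), and that the derivative $\partial_t h_F$ does not blow up as $t\to 0^+$, which it does not only because $e^{-s/(t\rho)}$ dominates $s/(t\rho)$. Once these two technical steps are in place, the rest is arithmetic matching of constants.
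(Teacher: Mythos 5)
Your proposal is correct and mirrors the paper's proof: compute $\beta_F=0$, $\gamma_F=2+\rho$, $\E[\xi]=1+\rho$, $L_F=1/\rho$ via $\partial_t h_F = 1 - (1+\tfrac{s}{\rho t})e^{-(s/t-1)/\rho}$, plug into Lemma~\ref{lem:alg-output-F}, bound $C^2_{n,B,F}\le \tfrac{4}{\rho}(n-B)+B$, and divide by $\OPT(x)$. The only cosmetic quibble is the justification for $\frac{\sum_i x_i + C^1 \sum_{i<j}\min(x_i,x_j)}{\OPT(x)} \le C^1$: it follows because $C^1\ge 1$ makes the left side a weighted average of $1$ and $C^1$ (not from $\OPT(x)\ge\sum_{i<j}\min$), but you note $C^1\ge 1$ anyway, so the argument goes through.
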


The previous lemma highlights a tradeoff between the smoothness and the consistency of the algorithm. Indeed, as $\rho$ decreases, the algorithm gains in consistency, but the term $\left( \tfrac{4}{\rho}(1 - \tfrac{B}{n}) + \tfrac{B}{n} \right)$ multiplying $\E[\eta^\sigma]$ becomes larger. However, while setting $\rho$ close to zero results in an arbitrarily high sensitivity to the error, setting it close to $1$ gives consistency of at most $\big(2 - \frac{B(B-1)}{n(n-1)}\big)$, which is still a decreasing function of $B$, interpolating the values $2$ and $1$ in the clairvoyant and non-clairvoyant cases. This implies that sacrificing a small amount of consistency significantly improves smoothness.

For $B = n$, assuming that all the job sizes are at least $1$, it holds that $\OPT(x) \geq n(n+1)/2$ and the lemma gives $\ratio_{n,B}(\eta;\switch) \leq 1 + \frac{2 \eta}{n}$, matching the bound proved on (SPJF) in \cite{purohit2018improving}. On the other hand, if $\eta = 0$, setting $\rho = 0$ results in $\ratio_{n,B}(0;\switch) \leq 2 - \frac{B}{n}$. Using tighter inequalities in the proof, it is possible to retrieve the bound established in Theorem \ref{thm:switch-perfect} (See Inequality \eqref{ineq:precise-bound-ratio-pred} in Appendix \ref{appx:switch}).

\paragraph{Preferential algorithm}
Now we need to adapt the algorithm to guarantee robustness in the face of arbitrarily erroneous predictions. We use the same approach as Lemma 3.1 of \cite{purohit2018improving}, which consists of running concurrently a consistent algorithm and round-robin at respective rates $\lambda$, $1-\lambda$ for some $\lambda \in [0,1]$. However, their result only applies for deterministic algorithms $\A$ satisfying for any instances $x = (x_1,\ldots,x_n)$ and $x' = (x'_1,\ldots,x'_n)$ that
\[
\big(\forall i \in [n]: x_i \leq x'_i\big)
\implies \A(x) \leq \A(x')\;.
\]
Such algorithms are called \textit{monotonic}. $\switch$ with breakpoints $z_{\sigma(i)} = \xi y_{\sigma(i)}$ is not deterministic since its objective function depends both on $\sigma$ and $\xi$. Nonetheless, we overcome this difficulty by proving that, conditionally to $\sigma$ and $\xi$, its outcome is deterministic and monotonic, then we establish the following theorem.

\begin{theorem}\label{thm:preferential-algo}
Let $\rho \in (0,1]$ and $F = 1 + \mathcal{E}(1/\rho)$. Then the preferential algorithm $\ALG_\lambda$ which runs Algorithm \ref{algo:any-pred} at rate $\lambda$ and round-robin at rate $1-\lambda$ has an $(n,B)$-competitive ratio of at most
\[
\min\left( \frac{2}{1 - \lambda} \;,\; \frac{C_{\rho,n,B}}{\lambda} +  \frac{S_{\rho,n,B}}{\lambda} \cdot \frac{n \E[\eta^\sigma] }{\OPT(x)}\right)\;,
\]
with
\begin{align*}
C_{\rho,n,B} &= \big(2 - \tfrac{B}{n}\big) + \rho \tfrac{B}{n}\big(1 - \tfrac{B-1}{n-1}\big) \\
S_{\rho,n,B} &= \tfrac{4}{\rho}\big(1 - \tfrac{B}{n}\big) + \tfrac{B}{n} \;.
\end{align*}
\end{theorem}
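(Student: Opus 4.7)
The plan is to reduce the analysis to the deterministic, monotonic setting of Lemma 3.1 in \cite{purohit2018improving} by conditioning on the internal randomness of Algorithm \ref{algo:any-pred}. Fix a realization of the permutation $\sigma$ and the scaling factor $\xi \sim F$; once they are fixed, the breakpoints $z_{\sigma(i)} = \xi y_{\sigma(i)}$ are deterministic numbers that induce a unique ordering $\pi$ (ties can be absorbed into the conditioning), and $\switch$ becomes a deterministic algorithm $\A_{\sigma,\xi}$ whose output depends only on $x$. Conditionally on $(\sigma,\xi)$, the preferential algorithm $\ALG_\lambda$ is exactly the concurrent execution of $\A_{\sigma,\xi}$ and $\RR$ at respective rates $\lambda$ and $1-\lambda$, which puts us in the setting where the Purohit--Svitkina--Kumar lemma applies.

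The first key step is to verify that $\A_{\sigma,\xi}$ is monotonic. Given $x \leq x'$ componentwise, I would induct on the outer loop index $i \in [B]$. Each $i$-th round-robin phase runs on the unfinished jobs in $\{\sigma(j)\}_{j>B}$ until either all of them are completed or one has been processed for exactly $z_{\pi(i)}$ units of time; because by the induction hypothesis the residual work in $x'$ is at least the residual work in $x$ at the start of iteration $i$, this phase lasts at least as long in $x'$, and every completion happening inside it occurs no earlier in $x'$ than in $x$. The subsequent block that runs $\pi(i)$ to completion has length $x_{\pi(i)}$, which is also non-decreasing in the input. Propagating through all $B$ iterations and the final round-robin tail yields $t_i^{\A_{\sigma,\xi}}(x) \leq t_i^{\A_{\sigma,\xi}}(x')$ for every job $i$, hence $\A_{\sigma,\xi}(x) \leq \A_{\sigma,\xi}(x')$ by summation.

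With monotonicity in hand, applying Lemma 3.1 of \cite{purohit2018improving} to $(\A_{\sigma,\xi}, \RR)$ produces the pointwise inequality
\[
\ALG_\lambda(x;\sigma,\xi) \;\leq\; \min\!\left( \tfrac{1}{\lambda}\A_{\sigma,\xi}(x),\; \tfrac{1}{1-\lambda}\RR(x) \right).
\]
Since $\E[\min(\cdot,\cdot)] \leq \min(\E[\cdot],\E[\cdot])$, taking expectation over $(\sigma,\xi)$ gives
\[
\E[\ALG_\lambda(x)] \;\leq\; \min\!\left( \tfrac{\E[\switch(z^\sigma, x)]}{\lambda},\; \tfrac{\RR(x)}{1-\lambda} \right).
\]
Then Lemma \ref{lem:switch-exp} bounds $\E[\switch(z^\sigma,x)] \leq C_{\rho,n,B}\,\OPT(x) + S_{\rho,n,B}\, n\,\E[\eta^\sigma]$, and the 2-competitiveness of round-robin gives $\RR(x) \leq 2\,\OPT(x)$. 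Dividing by $\OPT(x)$ yields the announced bound on $\ratio_{n,B}(\ALG_\lambda)$.

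The main obstacle is the monotonicity of $\A_{\sigma,\xi}$: the round-robin phases of $\switch$ stop according to a criterion that depends jointly on job completion and on a processing-time threshold, so one must carefully check that neither type of event can occur strictly earlier in $x'$ than in $x$ even though the threshold $z_{\pi(i)}$ is fixed in both instances. A coupled, iteration-by-iteration comparison of the two executions, tracking the processing times $\spt_j(t)$ of each job separately and using that round-robin preserves the ordering of residual workloads, is the cleanest route to make this rigorous.
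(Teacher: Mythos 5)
Your proposal follows the same high-level outline as the paper: condition on $(\sigma,\xi)$ and the tie-breaking so that the scheduling policy becomes deterministic, establish monotonicity in $x$, invoke Lemma~3.1 of \cite{purohit2018improving} on the pair $(\A_{\sigma,\xi},\RR)$, then take expectations and conclude via Lemma~\ref{lem:switch-exp} together with round-robin's $2$-competitiveness. Every one of these steps matches the paper's argument, and your use of $\E[\min(\cdot,\cdot)]\le\min(\E[\cdot],\E[\cdot])$ and the final substitution are correct.

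The one place where you genuinely diverge is in how monotonicity of $\A_{\sigma,\xi}$ is established, and there you take the harder road. You propose an execution-level coupling, inducting over the $B$ phases of $\switch$, arguing that residual workloads in the larger instance dominate those in the smaller one, and then verifying that neither the completion events nor the processing-threshold events can fire earlier in $x'$ than in $x$. You correctly flag this as the main obstacle, and the argument can indeed be pushed through, but it is unnecessary machinery: Lemma~\ref{lem:delays-switch-algo} already gives closed-form expressions for each pairwise delay $P_{\sigma(i)\sigma(j)}$ as a function of $x$ (sums of job sizes, indicators, and $\min$'s), all of which are manifestly non-decreasing in each coordinate of $x$ once $\sigma$, $\xi$, and the tie-breaking Bernoullis $\theta_{ij}$ are fixed. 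Plugging these into the identity $\A(x) = \sum_i x_i + \sum_{i<j}P_{ij}$ from Equation~\eqref{eq:generic-output} immediately yields monotonicity, with no need to track the dynamics of the round-robin phases or argue about phase-exit criteria. The paper's route is both shorter and less error-prone, precisely because the delicate scheduling dynamics have already been compressed into the delay formulas; you should reuse that lemma rather than re-derive its consequences from scratch.
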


This upper bound generalizes that of \cite{purohit2018improving}. It presents a consistency-robustness tradeoff that can be tuned by adjusting the parameter $\lambda$, and a consistency-smoothness tradeoff controlled by the parameter $\rho$, which vanishes for $B$ close to $0$ or $n$, as the terms multiplying $\rho$ and $1/\rho$ respectively in $C_{\rho,n,B}$ and $S_{\rho,n,B}$ become zero.

\section{Experiments}

In this section, we validate our theoretical findings by testing the algorithms we presented on various benchmark job sizes. In all the figures, each point is averaged over $10^4$ independent trials.

\paragraph{Perfect information}
We test the performance of Algorithms $\switch$ and $\CRRR$ against the hard instances used to prove the lower bounds of Section \ref{sec:lowerbounds}: 
we consider i.i.d. job sizes sampled from the exponential distribution with parameter $1$, and job sizes drawn from the distribution $\Phi(r,a)$ with parameters $r=0.51$ and $a=10^4$, characterized by the tail probability
\[
\Pr(x^a_i \geq t)
= \frac{(1+t)^{-r} - (1+a)^{-r}}{1 - (1+a)^{-r}} \indic{t < a}\;,
\]
This distribution is a truncated version of the one defined by $\Pr(x^\infty_i \geq t) = \frac{1}{(1+t)^r}$. The bound of Corollary \ref{cor:lb-phir} is obtained by using this distribution for $a>0$ and $r \in (\frac{1}{2},1)$, and taking the limits $n \to \infty$, $a \to \infty$, and $r\to 1/2$.

\begin{figure}[h!]
\begin{center}
\centerline{\includegraphics[width=\columnwidth]{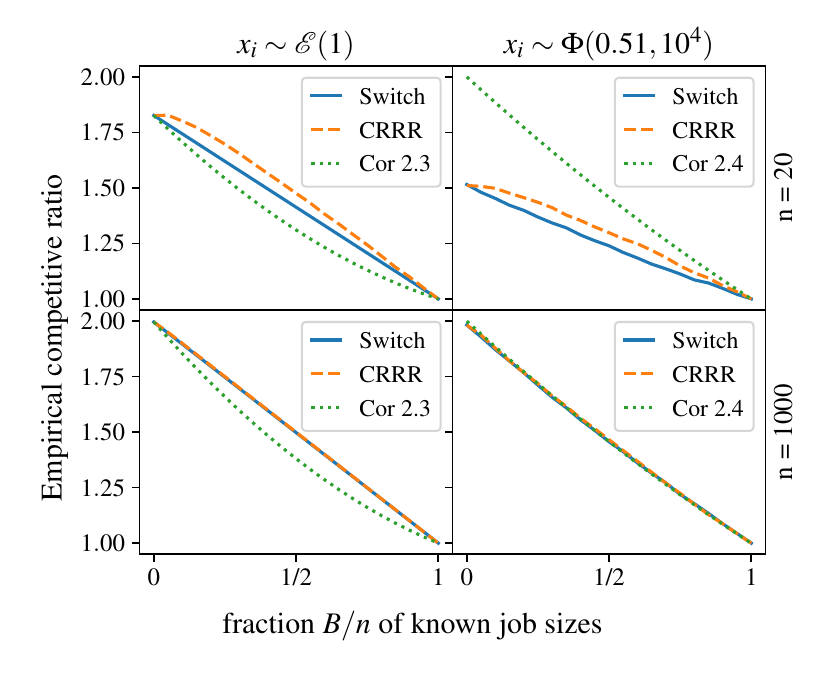}}
\caption{Lower bounds and ratios of $\switch$, $\CRRR$}
\label{fig:perfectPred}
\end{center}
\vskip -0.3in
\end{figure}

Figure \ref{fig:perfectPred} exhibits the empirical ratios achieved by both algorithms with a number $n \in \{20,1000\}$ of jobs. For $n = 20$, $\switch$ outperforms $\CRRR$ for the two distributions, whereas their ratios are very close for $n=1000$. This confirms that $\switch$ and $\CRRR$ are asymptotically equivalent, as can be deduced from Theorems \ref{thm:orderonly} and \ref{thm:switch-perfect}.
For the exponential distribution, as expected, both algorithms have ratios above the non-asymptotic lower bound of Corollary \ref{cor:lb-exp}.
Meanwhile, considering the distribution $\Phi(0.51,10^4)$, the empirical ratios for $n = 20$ are below the lower bound of Corollary \ref{cor:lb-phir}, because it is proved by taking $n \to \infty$. For $n=1000$, the ratios match the lower bound.

\paragraph{Preferential algorithm} 
In the remaining discussion, we refer to $\switch$ with breakpoints $z_{\sigma(i)} = \xi y_{\sigma(i)}$ and $\xi \sim 1 + \mathcal{E}(1/\rho)$, as $\switch$ with parameter $\rho$.

We generate a synthetic instance of $n = 50$ job sizes, drawn independently from the Pareto distribution with scale 1 and shape 1.1. The Pareto distribution, known for its heavy tail, is particularly suitable for modeling job sizes \cite{harchol1997exploiting, bansal2001analysis, arnold2014pareto}, and it is a commonly used benchmark for learning-augmented scheduling algorithms \cite{purohit2018improving, lindermayr2022permutation}. Furthermore, we consider noisy predictions $y_i = x_i + \eps_i$ for all $i \in [50]$, where $\eps_i$ is sampled independently from a normal distribution with mean $0$ and standard deviation $\tau$.

\begin{figure}[h!]
\begin{center}
\centerline{\includegraphics[width=\columnwidth]{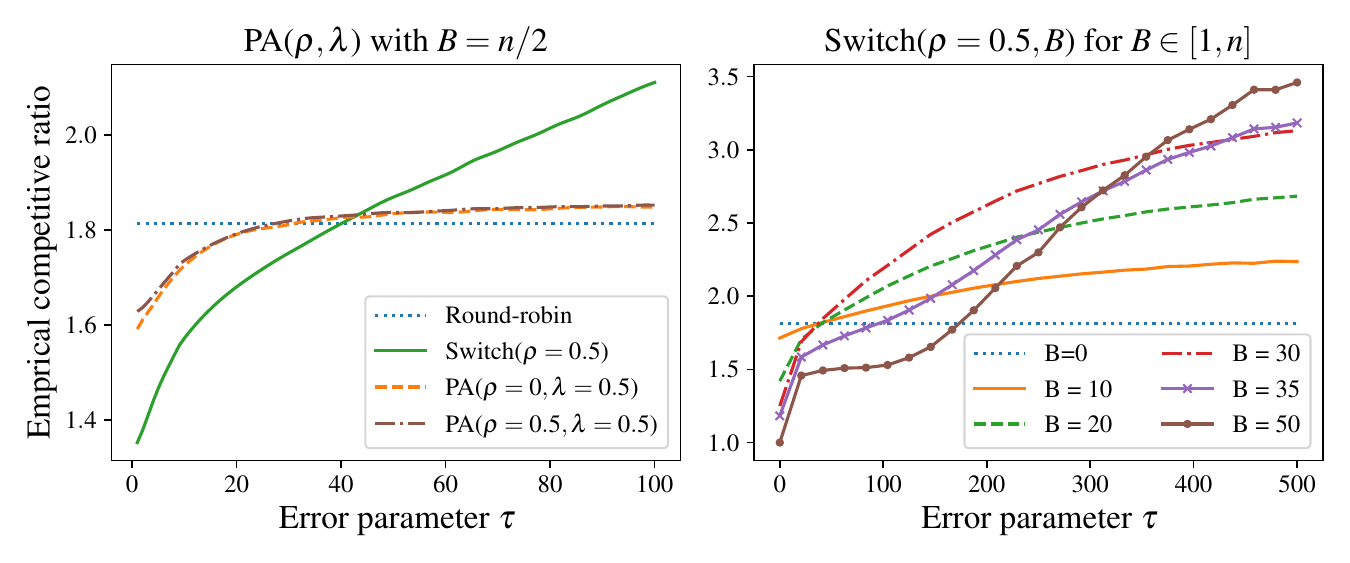}}
\caption{Preferential Algorithm (PA) with different parameters}
\label{fig:PAparameters}
\end{center}
\vskip -0.3in
\end{figure}

Figure \ref{fig:PAparameters} illustrates the empirical ratio of the Preferential Algorithm (PA) across various parameter configurations, with varying error parameter $\tau$.

The left plot displays the ratios for different $\lambda$ and $\rho$ values, with $B = 25 = n/2$. When $\lambda = 0$, PA becomes round-robin. For $\lambda=1$ and $\rho=0.5$, PA simply runs $\switch(\rho=0.5)$, which gives an improved consistency ($\tau = 0$), not equal to $1$ because $B<n$ and $\rho > 0$, and gives a ratio that deteriorates arbitrarily as $\tau$ increases. 
In contrast, PA with $\lambda = 0.5$ gives a weaker consistency but maintains bounded ratios, even with arbitrarily erroneous predictions. The choice of $\rho = 0$ exhibits a slightly better consistency compared to $\rho=0.5$, in line with theoretical expectations, but there is no significant difference regarding sensitivity to errors. This should not be surprising since setting $\rho > 0$ ensures smoothness in the worst-case (see Figure \ref{fig:consistency-smoothness}), but it is not necessarily needed for all instances.

The right plot examines the influence of $B$ on PA with parameters $\lambda = 1$ and $\rho=0.5$, which corresponds to
$\switch$ with $\rho = 0.5$. Larger $B$ values improve consistency and also yield a smaller sensitivity to small errors. However, for high $\tau$ values, having numerous predictions leads to faster performance deterioration compared to having fewer predictions. This shows that more predictions enhance consistency, while fewer predictions enhance robustness.

\paragraph{Consistency-smoothness}
To shed light on the tradeoff between consistency and smoothness raised in Section \ref{sec:imperfect-pred}, we consider i.i.d. job sizes $x_1,\ldots,x_{100}$, each taking the value 1 w.p. 1/2 and 2 w.p. 1/2, and we consider noisy predictions of the form $y_i = x_i + \eps_i$, where $\eps_i$ follows a uniform distribution over $[-\tau, \tau]$. Figure \ref{fig:consistency-smoothness} illustrates the evolution, for $\tau$ varying in $[0,0.15]$, of the empirical competitive ratio of $\switch$ with parameter $\rho \in \{0, 0.1, 0.5\}$ and $B \in \{50,95\}$,

For both values of $B$, the experiment reveals that larger values of $\rho$ give bigger ratios when $\tau = 0$ (less consistency), but on the other hand they yield less sensitivity to variations of the expected prediction error (better smoothness), which confirms our theoretical findings. In particular, for $\rho = 0$, a significant discontinuity arises when $\tau$ becomes positive. Figure \ref{fig:consistency-smoothness} also shows that this tradeoff is less significant as $B$ approaches $n = 100$,  with the consistency values for $\rho \in \{0, 0.1, 0.5\}$ drawing closer.


\begin{figure}[h!]
\begin{center}
\centerline{\includegraphics[width=\columnwidth]{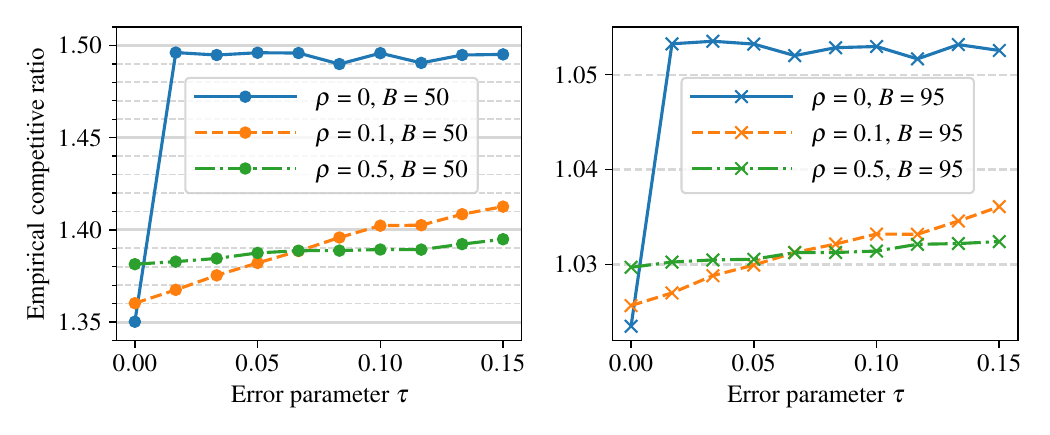}}
\caption{Tradeoff between consistency and smoothness}
\label{fig:consistency-smoothness}
\end{center}
\vskip -0.3in
\end{figure}

\section{Conclusion and Future Work}

This paper explores the non-clairvoyant scheduling problem with a limited number of predicted job sizes. We give near optimal lower and upper bounds in the case of perfect predictions, and we introduce a learning-augmented algorithm raising the common consistency-robustness tradeoff and an additional consistency-smoothness tradeoff, the latter vanishing when $B$ approaches $0$ or $n$.

Our findings join previous works in demonstrating that online algorithms can indeed achieve improved performance even when armed with a restricted set of predictions, which is an assumption more aligned with practical scenarios. Furthermore, they affirm the necessity of studying and understanding these regimes, as they may unveil unique behaviors absent in the zero- or full-information settings.

\subsection{Open Questions}

\paragraph{Tight lower bounds} In the case of perfect predictions, there is
a (small) gap between the lower bounds of Section \ref{sec:lowerbounds} and the competitive ratios of $\switch$ and $\CRRR$. An interesting research avenue is to close this gap, either by designing better algorithms or improving the lower bound. This could involve using Theorem \ref{thm:lower-bound} with more refined distributions.

\paragraph{Reduced number of action predictions} Algorithm $\switch$ leverages the job sizes' predictions, not only the order they induce. Using the $\ell_1$ norm to measure the error is thus a suitable choice. However, as discussed in Section \ref{sec:orderonly}, Algorithm $\CRRR$ only uses the priority order in which $\OPT$ runs $(x_{\sigma(i)})_{i \in [B]}$. An interesting question to explore is how to adapt it in the case of imperfect \textit{action predictions}, using appropriate error measures.

\paragraph{Smooth and $(2-\frac{B}{n})$-consistent algorithm}
Lemma \ref{lem:switch-exp} and Figure \ref{fig:consistency-smoothness} emphasize that, to achieve smoothness, $\switch$ with parameter $\rho$ must exhibit a consistency exceeding $2 - \frac{B}{n}$. A compelling question arises: Is it possible to devise a smooth algorithm with a consistency of at most $2 - \frac{B}{n}$? 

\section*{Impact Statement}
This paper presents a work whose goal is to advance the field of learning-augmented algorithms. There are many potential societal consequences of our work, none of which we feel must be specifically highlighted here.

\section*{Acknowledgements}
Vianney Perchet acknowledges support from the French National Research Agency (ANR) under grant number (ANR19-CE23-0026 as well as the support grant, as well as from the grant “Investissements d’Avenir” (LabEx Ecodec/ANR11-LABX-0047).


\bibliography{bibliography}
\bibliographystyle{icml2024}

\appendix
\onecolumn

\section{Lower bounds}

\subsection{Preliminary result}
\begin{lemma}\label{lem:yao}
Let $F$ be a probability distribution on $(0,\infty)$ with finite expectation, $x$ an array of $n$ i.i.d. random variables sampled from $F$, and $\alpha_{n,B} \geq 0$ satisfying for any deterministic algorithm $\A$ with access the sizes of $B$ jobs that 
\[\frac{\E_{\sigma, x} [\A(x)]}{\E_{x}[\OPT(x)]} \geq \alpha_{n,B}\;,\]
then for any (deterministic or randomized) algorithm $\ALG$, we have 
\[\ratio_{n,B} (\ALG) \geq \alpha_{n,B}\;.\]
\end{lemma}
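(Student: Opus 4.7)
The plan is to apply Yao's minimax principle by representing the randomized algorithm as a distribution over deterministic algorithms and then using the standard trick that reduces the supremum of a ratio to a ratio of expectations.

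First I would fix an arbitrary randomized algorithm $\ALG$ and write it as $\ALG = \A_{\omega}$ for $\omega \sim \mu$, where $\mu$ is a probability distribution over deterministic algorithms $\A$ that have access to $B$ job sizes. By Fubini's theorem (applicable since all quantities are non-negative), for any instance $x$,
\[
\E_{\sigma,\omega}[\ALG(x)] = \E_{\omega \sim \mu}\bigl[\E_{\sigma}[\A_\omega(x)]\bigr].
\]

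Next I would use the elementary inequality $\sup_x \frac{a(x)}{b(x)} \geq \frac{\E_x[a(x)]}{\E_x[b(x)]}$ for any positive functions $a,b$ and distribution on $x$ (proved by multiplying both sides of $\sup_x \frac{a(x)}{b(x)} \cdot b(x) \geq a(x)$ by $b(x)$ and taking expectation, using that $\E_x[b(x)]$ is finite thanks to the finite-expectation assumption on $F$). Applying this with $a(x) = \E_{\sigma,\omega}[\ALG(x)]$, $b(x) = \OPT(x)$, and with the distribution of $x$ given by $F^{\otimes n}$, yields
\[
\ratio_{n,B}(\ALG) = \sup_{x \in (0,\infty)^n} \frac{\E_{\sigma,\omega}[\ALG(x)]}{\OPT(x)} \geq \frac{\E_{x,\sigma,\omega}[\ALG(x)]}{\E_x[\OPT(x)]}.
\]

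Then I would swap the order of the expectations using Fubini, write the numerator as $\E_{\omega \sim \mu}\bigl[\E_{x,\sigma}[\A_\omega(x)]\bigr]$, and invoke the hypothesis $\E_{\sigma,x}[\A(x)] \geq \alpha_{n,B} \E_x[\OPT(x)]$ pointwise for each deterministic $\A_\omega$ in the support of $\mu$. Integrating out $\omega$ gives $\E_{x,\sigma,\omega}[\ALG(x)] \geq \alpha_{n,B}\E_x[\OPT(x)]$, and dividing by $\E_x[\OPT(x)] > 0$ closes the argument.

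The main technical subtlety — rather than an obstacle — is just ensuring the measurability and integrability required for Fubini; the finiteness of $\E_F[x_i]$ guarantees $\E_x[\OPT(x)] < \infty$ via the bound $\OPT(x) \leq n\sum_i x_i$, so every expectation appearing above is well defined, and the manipulation $\sup_x \frac{a(x)}{b(x)} \geq \E_x[a(x)]/\E_x[b(x)]$ is valid. No minimax theorem is actually invoked: the argument is purely the "easy direction" of Yao's principle.
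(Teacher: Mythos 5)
Your proof is correct and takes essentially the same route as the paper: both establish $\E_{\sigma,x,\omega}[\ALG(x)] \geq \alpha_{n,B}\,\E_x[\OPT(x)]$ by reducing the randomized algorithm to its deterministic components, and then pass from this averaged inequality to a bound on the worst-case ratio. The only cosmetic difference is that you unpack the ``easy direction'' of Yao's principle explicitly by writing $\ALG$ as a mixture $\A_\omega$ with $\omega\sim\mu$ and applying the mediant inequality $\sup_x a(x)/b(x) \geq \E_x[a(x)]/\E_x[b(x)]$, whereas the paper cites Yao's principle for the first step and argues via the existence of a specific instance $x^*$ with $\E_\sigma[\ALG(x^*)] \geq \alpha_{n,B}\OPT(x^*)$ for the second; the two phrasings are equivalent.
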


\begin{proof}
Let $F$ and $\alpha_{n,B}$ be as stated in the lemma. Using Yao's minimax principle \cite{yao1977probabilistic} we deduce that for any randomized algorithm $\ALG$ 
\begin{align*}
\E_{\sigma, x}[\ALG(x)]
&\geq \inf_{\A \text{ deterministic}} \E_{\sigma,x} [\A(x)]\\
&\geq \alpha_{n,B} \E_{x}[\OPT(x)]\;,    
\end{align*}
where the infimum is taken over all deterministic algorithms. The previous inequality can be written as
\[
\E_{x}\big[ \E_\sigma[\ALG(x)] - \alpha_{n,B} \OPT(x)\big] \geq 0\;,
\]
and this implies that, necessarily, there exists a value $x^* = (x^*_n, \ldots, x^*_n)$ taken by $x$ verifying $\E_\sigma[\ALG(x^*)] - \alpha_{n,B} \OPT(x^*) \geq 0$, hence
\[
\ratio_{n,B} (\ALG) 
\geq \frac{\E_\sigma[\ALG(x^*)]}{\OPT(x^*)}
\geq \alpha_{n,B}\;.
\]
\end{proof}

\subsection{Proof of Theorem \ref{thm:lower-bound}}

Let $\A$ be a deterministic algorithm. Using Equation \ref{eq:generic-output}, it suffices to bound $\E[P^\A_{ij}]$ for all $i \neq j$ to deduce a bound on $\E[\A(x)]$, and since $P^\A_{ij}$ is a non-negative random variable, the focus can be narrowed down to bounding $\Pr(P_{ij}^\A > t)$ for all $t > 0$. Following the proof scheme of \cite{motwani1994nonclairvoyant}, we introduce the following definition.

\begin{definition}\label{def:uij}
Let $\A$ be a deterministic algorithm given an instance of $n$ jobs. For all $i\neq j \in [n]$ and $t \geq 0$, we denote by $\uij_{i,j}^\A (t)$ the time spent on job $i$ when the total time spent on both jobs $i$ and $j$ is $t$, assuming neither job $i$ nor $j$ is completed. More precisely, $\uij_{i,j}^\A(t)$ is defined by
\[
\uij_{i,j}^\A(t) = \spt^\A_i\big(\inf\{t' \geq 0: \spt^\A_i(t') + \spt^\A_j(t') = t\} \big)\;.
\]
\end{definition}

$\uij_{i,j}^\A$ is therefore a rule defined solely by the algorithm. For an instance $x = (x_1,\ldots,x_n)$ of job sizes, the real time spent on $i$ when a total time of $t$ has been spent on both jobs $i,j$ is given by $\min(\uij_{i,j}^\A(t), x_i)$.

For all $t\geq 0$, the following Lemma allows to express the event $P^\A_{ij} > t$ using $u_{ij}^\A(t)$.

\begin{lemma}\label{lem:P-uij}
Let $x_1,\ldots,x_n$ be instance of $n$ job sizes, then for any algorithm $\A$, for any $i \neq j \in [n]$ and $t \geq 0$, the following equivalence holds
\[
\big(P^\A_{ij} > t \big)
\iff
\big(x_i > \uij^\A_{i,j}(t) \text{ and } x_j > t - \uij^\A_{j,i}(t)\big)
\]
\end{lemma}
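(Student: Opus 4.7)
I would identify $P^\A_{ij}$ with the total amount of processing spent on the pair $\{i,j\}$ at the earliest completion time among the two, and then match this with the ``virtual time'' singled out by Definition~\ref{def:uij} of $u^\A_{\cdot,\cdot}$.

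First, to rewrite $P^\A_{ij}$: assuming without loss of generality that $t^\A_i \le t^\A_j$, job $i$ is already finished by the time $j$ completes, so $D^\A_{ij} = \spt_i^\A(t^\A_j) = x_i$, while $D^\A_{ji} = \spt_j^\A(t^\A_i)$. Summing,
\[
P^\A_{ij} \;=\; x_i + \spt_j^\A(t^\A_i) \;=\; \spt_i^\A(t^\A_i) + \spt_j^\A(t^\A_i),
\]
so $P^\A_{ij}$ equals the combined processing on $\{i,j\}$ at the first completion time $\min(t^\A_i,t^\A_j)$.

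Next, set $\tau(t):=\inf\{t'\ge 0 : \spt_i^\A(t') + \spt_j^\A(t') = t\}$, so that Definition~\ref{def:uij} reads $u^\A_{i,j}(t) = \spt_i^\A(\tau(t))$ and $u^\A_{j,i}(t) = \spt_j^\A(\tau(t))$. Continuity and monotonicity of $\spt_i^\A, \spt_j^\A$ guarantee the infimum is attained whenever $t \le x_i + x_j$, and yield the identity $u^\A_{i,j}(t) + u^\A_{j,i}(t) = t$.

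Combining these two facts, the event $P^\A_{ij} > t$ is equivalent to the earliest completion in $\{i,j\}$ occurring strictly after $\tau(t)$, i.e.\ $\spt_i^\A(\tau(t)) < x_i$ and $\spt_j^\A(\tau(t)) < x_j$. Translating through the definitions of $u^\A_{i,j}, u^\A_{j,i}$ and using $u^\A_{i,j}(t) + u^\A_{j,i}(t) = t$ to rewrite the second inequality in the form $x_j > t - u^\A_{j,i}(t)$ yields exactly the pair of conditions in the lemma. The only delicate point is when $\tau(t)$ falls inside a plateau of $\spt_i^\A$ or $\spt_j^\A$ (a moment when the algorithm processes neither $i$ nor $j$): there one must check that the strict inequalities transfer cleanly, which follows from the left-continuity and non-decreasingness of the cumulative processing functions. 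I expect this boundary check to be the main obstacle; the remainder is a direct unfolding of definitions.
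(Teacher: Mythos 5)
Your approach is essentially the same as the paper's: write $P^\A_{ij}=\spt_i^\A(\min(t_i,t_j))+\spt_j^\A(\min(t_i,t_j))$, define $\tau(t)$ as the first moment the combined processing on $\{i,j\}$ reaches $t$, and translate $P^\A_{ij}>t$ into $\min(t_i,t_j)>\tau(t)$ and then into $\spt_i^\A(\tau(t))<x_i$, $\spt_j^\A(\tau(t))<x_j$, using that $t_k>s\iff \spt_k^\A(s)<x_k$. Your observation about the reverse implication being the only delicate point (the plateau of $\spt_{ij}^\A$ at $\tau(t)$) is accurate and is handled exactly as you suggest: since $\min(t_i,t_j)$ is a completion time, the corresponding $\spt_k^\A$ strictly increases from $\tau(t)$ to $\min(t_i,t_j)$, so $\spt_{ij}^\A$ cannot stay flat.

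One small but real slip at the end: from $\spt_j^\A(\tau(t))<x_j$ you get $x_j>u^\A_{j,i}(t)$, and applying $u^\A_{i,j}(t)+u^\A_{j,i}(t)=t$ rewrites this as $x_j>t-u^\A_{i,j}(t)$, not $x_j>t-u^\A_{j,i}(t)$ (the latter equals $u^\A_{i,j}(t)$, which is the time spent on job $i$, so the condition would be wrong). You appear to have transcribed the index from the lemma's printed statement, which itself seems to contain a typo: the paper's own chain of equivalences in the proof also lands on $x_j>t-u^\A_{i,j}(t)$. So your substantive reasoning is right; just fix which index the identity actually produces.
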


\begin{proof}
Let $\A$ be an algorithm and $i \neq j \in [n]$. We denote by $\spt_{ij}(t) = \spt_i(t) + \spt_j(t)$ the total processing time spent on both jobs $i$ and $j$ up to time $t$. 
Assuming that job $i$ finishes first, i.e. $t_i \leq t_j$, no processing time is spent on job $i$ after $t_i$, hence $\spt_i(t_j) = \spt_i(t_i) = x_i$, and $P_{ij} = \spt_i(t_j) + \spt_j(t_i) = \spt_{ij}(t_i)$. By symmetry, we deduce that $P_{ij} = \spt_{ij}(\min(t_i,t_j))$. Therefore, using that $\spt_i$ and $\spt_j$ are non-decreasing and continuous, it holds for all $t \geq 0$
\begin{align}
P_{ij} > t
&\iff \spt_{ij}(\min(t_i,t_j)) > t \nonumber\\
&\iff \min(t_i,t_j) > \inf\{t' : \spt_{ij}(t') \geq t\} \nonumber\\
&\iff x_i > \spt_i(\inf\{t' : \spt_{ij}(t') \geq t\}) \text{ and } x_j > \spt_j(\inf\{t' : \spt_{ij}(t') \geq t\}) \label{eq:ti>s}\\
&\iff x_i > \uij_{ij}(t) \text{ and } x_j > t - \uij_{ij}(t)\label{eq:uij-uji}\;.
\end{align}
Equivalence \eqref{eq:ti>s} holds because $t_i = \inf\{t' \geq 0: S_i(t') \geq x_i\}$, thus for any $s \geq 0$ we have $t > s \iff x_i > \spt_i(s)$. The same holds for $j$. For Equivalence \eqref{eq:uij-uji}, we simply used Definition \ref{def:uij} and the observation $\uij_{ij}(t) + \uij_{ji}(t) = t$.
\end{proof}

\begin{remark}\label{rmk:uij-nonclairvoyant}
In the case of non-clairvoyant algorithms, the rule defining $u^\A_{ij}(\cdot)$ is dictated by the algorithm, independent of the job sizes. Thus, if the job sizes are sampled independently from the exponential distribution, Lemma \ref{lem:P-uij} gives for all $i \neq j$ that $\Pr(P^\A_{ij}>t) = \Pr(x_i > \uij^\A_{i,j}(t)) \Pr( x_j > t - \uij^\A_{j,i}(t)) = e^{-t}$, and it follows immediately that $\E[P^\A_{ij}] = 1$ for any deterministic algorithm. This argument, used in \cite{motwani1994nonclairvoyant}, is not applicable in our context since the algorithm possesses access to certain job sizes, enabling the formulation of a rule for $u^\A_{i,j}$ that considers this information. Therefore, more sophisticated techniques become necessary for our analysis since the independence of the events $x_i > \uij^\A_{i,j}(t)$ and $x_j > t - \uij^\A_{j,i}(t)$ is lost.    
\end{remark}

\begin{lemma}\label{lem:generic-lb}
Let $a \in (0,\infty)$ and $\phi : [0,\infty) \to [0,\infty)$ a continuously differentiable and increasing function satisfying that $\phi(0) > 0$, $t \mapsto \frac{\phi'(t)}{\phi(t)}$ is non-increasing and  $\int_0^\infty \frac{dt}{\phi(t)^2} < \infty$. Let $x_1^a,\ldots,x_n^a$ i.i.d. random job sizes with distribution 
\[
\Pr(x_1^a \leq t)
= 1 - \frac{\phi(t)^{-1} - \phi(a)^{-1}}{\phi(0)^{-1} - \phi(a)^{-1}} \indic{t < a}\;,
\]
then for any algorithm $\A$ having access to the sizes of the first $B$ jobs $x^a_1,\ldots,x^a_B$, it holds that
\begin{alignat*}{3}
\E[P^\A_{ij}] 
&\geq \int_0^\infty \frac{\phi(0)^2}{\phi(t)^2}dt - \underset{a \to \infty}{o(1)} 
&&\forall i\neq j \leq B\;,\\
\E[P^\A_{ij}] 
&\geq 2\int_0^\infty\frac{\phi(0)^2}{\phi(t)^2}dt - \underset{a \to \infty}{o(1)} 
&&\forall i\neq j > B\;,\\
\E[P^\A_{ij}] 
&\geq \phi(0)^2 \int_0^\infty \left\{\inf_{T \geq x} G_\phi(x,T)\right\} \frac{\phi'(x)}{\phi(x)}dx - \underset{a \to \infty}{o(1)} 
\qquad &&\forall i\leq B, j > B\;,
\end{alignat*}
where $G_\phi(x,T)$ is defined in Equation \eqref{eq:Gphi}, and the $\underset{a \to \infty}{o(1)}$ term does not depend on the algorithm $\A$.
\end{lemma}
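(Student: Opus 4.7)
The overall plan is to invoke Lemma \ref{lem:P-uij} to express $\Pr(P^\A_{ij} > t)$ in terms of the algorithm's rule $u^\A_{ij}$, and split into three cases based on how much information the algorithm has about $(x_i, x_j)$. Under the chosen distribution, $\Pr(x^a_\ell > s) \to \phi(0)/\phi(s)$ as $a \to \infty$, and all $o_a(1)$ corrections come from this truncation and are controlled by dominated convergence using the hypothesis $\int_0^\infty \phi^{-2} < \infty$.

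For $i, j \leq B$, the deterministic inequality $P^\A_{ij} \geq \min(x_i, x_j)$ suffices: taking expectation gives $\E[P^\A_{ij}] \geq \int_0^\infty \Pr(x^a_1 > t)^2\, dt \to \int_0^\infty \phi(0)^2/\phi(t)^2\, dt$. For $i, j > B$, the rule $u^\A_{ij}$ is independent of $(x_i, x_j)$, so Lemma \ref{lem:P-uij} combined with independence yields $\Pr(P^\A_{ij} > t) \to \phi(0)^2/[\phi(u^\A(t))\phi(t - u^\A(t))]$. Since $\phi'/\phi$ non-increasing implies $\log \phi$ concave, $\phi(u)\phi(t - u) \leq \phi(t/2)^2$ for $u \in [0,t]$, and integrating with the substitution $s = t/2$ produces the extra factor of $2$.

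The main case is $i \leq B$, $j > B$, where $u^\A_{ij}$ may depend on $x_i$ but remains independent of $x_j$. Conditioning on $x_i = x$, setting $T = \inf\{t : u^\A(t) = x\} \geq x$, and writing $v(t) = t - u^\A(t)$ with instantaneous $j$-rate $r(t) = v'(t) \in [0,1]$, the conditional expectation reduces (up to $o_a(1)$) to $\phi(0)\int_0^T dt/\phi(v(t))$. Using $dt = du + dv$ with $du = (1-r(t))\,dt$, this integral splits as $\int_0^{T-x} dv/\phi(v) + \int_0^T (1 - r(t))/\phi(v(t))\, dt$, where the first term is fixed regardless of the trajectory (since $v$ traverses $[0, T-x]$ monotonically) and the second is minimized by concentrating all $i$-processing at the maximal $v$-value $T - x$, giving $x/\phi(T-x)$. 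The minimum over admissible trajectories is therefore exactly $G_\phi(x, T)$; minimizing further over $T \geq x$ and averaging over $x_i$ against the limiting density $\phi(0)\phi'(x)/\phi(x)^2$ of the truncated distribution yields the stated bound. The main obstacle is justifying the ``run $j$ first, then $i$'' strategy rigorously for arbitrary measurable rates $r(\cdot)$ rather than only piecewise-constant schedules; this is handled by the clean $dt = du + dv$ decomposition above, which reduces the functional minimization to a pointwise-in-$t$ greedy argument exploiting that $\phi$ is increasing.
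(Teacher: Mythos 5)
Your proof is correct, and the first two cases ($i,j \leq B$ via $P_{ij} \geq \min(x_i,x_j)$, and $i,j > B$ via Lemma~\ref{lem:P-uij}, independence, and log-concavity of $\phi$) follow the paper exactly. The mixed case $i \leq B$, $j > B$ is where you diverge from the paper's argument, and your route is genuinely different for the variational step.

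The paper fixes $x_i = x$ and $T$, constructs the explicit extremal trajectory $v^*_{x,T}(t) = (t - T + x)\indic{t > T - x}$ (time on job $i$ as a function of total time on $\{i,j\}$), shows every admissible trajectory dominates $v^*_{x,T}$ pointwise on $[0,T]$ via a 1-Lipschitz argument, and then uses that the integrand is monotone in the trajectory value to conclude that $G_\phi(x,T)$ is a lower bound. Your decomposition $dt = du + dv$ avoids constructing and comparing trajectories entirely: the $dv$-piece integrates to $\int_0^{T-x} ds/\phi(s)$ by the change-of-variables formula for absolutely continuous $v$ and is trajectory-independent; the $du$-piece is bounded below by $x/\phi(T-x)$ simply because $u$ increases by $x$ over $[0,T]$ while $\phi(v(t)) \leq \phi(T-x)$ throughout, since $v$ is non-decreasing with terminal value $T-x$ and $\phi$ is increasing. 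This is a cleaner way to see why ``run $j$ first, then $i$'' is extremal — it localizes the argument to a pointwise inequality on the integrand rather than a pointwise inequality between trajectories — and it generalizes immediately to measurable rate schedules without any approximation by piecewise-constant schedules, as you note. The only place you are slightly less careful than the paper is the treatment of the truncation at level $a$: the paper keeps the indicator $\indic{t-a < v(t)}$ explicitly and arrives at $\min\bigl(G_\phi(x,T),\int_0^a dt/\phi(t)\bigr)$ in the intermediate bound, invoking monotone convergence at the very end, whereas you fold this into an upfront ``$o_a(1)$.'' This is defensible but deserves a sentence — the $o_a(1)$ must be uniform over the algorithm's choice of $u^\A_{ij}$, which holds precisely because the truncation error is dominated by $\int_a^\infty \phi(0)^2/\phi(t)^2\,dt$, independent of the trajectory.
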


Although Lemma \ref{lem:generic-lb} is stated with $a \in (0,\infty)$, the results also hold for random variables $x^\infty_1,\ldots,x^\infty_n$ sampled from the limit distribution $\Pr(x_1^\infty \leq t) = 1 - \frac{\phi(0)}{\phi(t)}$, where the $o(1)$ term becomes zero.

Before proving the lemma, let us first observe that, since $\phi$ is increasing and $\int_0^\infty \frac{dt}{\phi(t)^2} < \infty$, then necessarily $\lim_{x\to \infty}\phi(x) = \infty$.

\begin{proof}
Let us first observe that for all $t \geq 0$
\[
\Pr( \min(x^\infty_1,x^\infty_2) \geq t )
= \Pr(x^\infty_1 \geq t) \Pr(x^\infty_2 \geq t)
= \frac{\phi(0)^2}{\phi(t)^2}\;,
\]
hence
\[
\E[\min(x^\infty_1,x^\infty_2)] 
= \int_0^\infty \Pr( \min(x^\infty_1,x^\infty_2) \geq t ) dt
= \int_0^\infty \frac{\phi(0)^2}{\phi(t)^2}dt \;.
\]
In the following, we prove separately the three claims of the Lemma: in the cases where both jobs sizes $x_i, x_j$ are known, both are unknown, and where only $x_i$ is known.

\paragraph{Both job sizes are known}
For any $i \neq j \in [n]$, it holds that $P_{ij} \geq \min(x^a_i,x^a_j)$, and this true in particular for $i \neq j \in [B]$.
Furthermore, for any $t \geq 0$, since $\phi(t) \geq \phi(0)$, we have that the mapping $a \mapsto \frac{\phi(t)^{-1} - \phi(a)^{-1}}{\phi(0)^{-1} - \phi(a)^{-1}}$ is non-increasing. It follows that
\begin{align}
\Pr(x_1^a \leq t) 
&\leq 1 - \left(\lim_{a' \to \infty} \frac{\phi(t)^{-1} - \phi(a')^{-1}}{\phi(0)^{-1} - \phi(a')^{-1}}\right) \indic{t < a} \nonumber\\
&= 1 - \frac{\phi(0)}{\phi(t)} \indic{t < a}\;, \label{eq:Pxa>Px}    
\end{align}
and therefore, for any $i \neq j \leq B$
\begin{align*}
\E[P_{ij}]
&\geq \E[\min(x_1^a, x_2^a)] \\
&= \int_{0}^\infty \Pr(x_1^a \geq t)^2 dt\\
&\geq \int_{0}^\infty \frac{\phi(0)^2}{\phi(t)^2} \indic{t < a} dt
= \int_{0}^a \frac{\phi(0)^2}{\phi(t)^2} dt   \\
&= \int_0^\infty\frac{\phi(0)^2}{\phi(t)^2}dt - \underset{a \to \infty}{o(1)}\;.
\end{align*}

\paragraph{Both job sizes are unknown}
For $i \neq j > B$, the algorithm ignores the job sizes $x^a_i$ and $x^a_j$ 
Therefore, $\uij_{i,j}$ is independent of $x^a_i$ and $x^a_j$. Consequently, using Lemma \ref{lem:P-uij}, the independence of $x_i^a$ and $x_j^a$, then Inequality \eqref{eq:Pxa>Px}, we obtain
\begin{align*}
\Pr(P_{ij} > t) 
&= \Pr(x_i > \uij_{i,j}(t) \text{ and } x_j > t - \uij_{i,j}(t))\\
&= \Pr(x_i > \uij_{i,j}(t)) \Pr(x_j > t - \uij_{i,j}(t))\\
&\geq \left(\frac{\phi(0)}{\phi(\uij_{i,j}(t))} \indic{\uij_{i,j}(t) < a} \right) \left( \frac{\phi(0)}{\phi(t - \uij_{i,j}(t))} \indic{t -\uij_{i,j}(t) < a}  \right)\\
&= \frac{\phi(0)^2 \indic{t - a< \uij_{i,j}(t) < a}}{\phi(\uij_{i,j}(t) \phi(t - \uij_{i,j}(t))} \;.
\end{align*}
Using that $\frac{\phi'}{\phi}$ is non-increasing, we have for any $t,u \geq 0$ that 
\begin{align*}
\frac{d}{du}\left( \phi(u) \phi(t-u) \right) 
&= \phi'(u)\phi(t-u) - \phi(u) \phi'(t-u)\\
&= \phi(u)\phi(t-u)\left( \frac{\phi'(u)}{\phi(u)} - \frac{\phi'(t-u)}{\phi(t-u)} \right)
\end{align*}
The sign of $\frac{d}{du}\left( \phi(u) \phi(t-u) \right)$ is the same as that of $\frac{\phi'(u)}{\phi(u)} - \frac{\phi'(t-u)}{\phi(t-u)}$, which is a non-increasing function. It is null for $u = t/2$, hence it is non-positive for $u \leq t/2$ and non-negative for $u \geq t/2$. This implies that $u \mapsto \phi(u) \phi(t-u)$ is minimal for $u = t/2$, i.e. $\phi(u)\phi(t-u) \leq \phi(\tfrac{t}{2})^2$ for all $t,u \geq 0$, in particular
\[
\Pr(P_{ij} > t) \geq \frac{\phi(0)^2}{\phi(t/2)^2} \indic{t - a< \uij_{i,j}(t) < a}\;,
\]
and it follows that
\begin{align*}
\E[P_{ij}] 
&= \int_0^\infty \Pr(P_{ij} > t) dt
\geq \int_0^\infty \frac{\phi(0)^2}{\phi(\tfrac{t}{2})^2} \indic{t - a< \uij_{i,j}(t) < a} dt\;.
\end{align*}
Observing that, for any $t \geq 0$, the mapping $a \mapsto \phi(0)^2/\phi(\tfrac{t}{2})^2 \indic{t - a< \uij_{i,j}(t) < a}$ is non-decreasing, and its limit is $\phi(0)^2/\phi(\tfrac{t}{2})^2$, the monotone convergence theorem guarantees that 
\begin{align*}
\lim\limits_{a \to \infty} \int_0^\infty \frac{\phi(0)^2}{\phi(\tfrac{t}{2})^2} \indic{t - a< \uij_{i,j}(t) < a} dt
&= \int_0^\infty \frac{\phi(0)^2}{\phi(\tfrac{t}{2})^2} dt\\
&= 2 \int_0^\infty \frac{\phi(0)^2}{\phi(t)^2} dt\;,    
\end{align*}
therefore, for any $i \neq j > B$
\[
\E[P_{ij}] \geq 2 \int_0^\infty \frac{\phi(0)^2}{\phi(t)^2} dt - \underset{a \to \infty}{o(1)}\;.
\]

\paragraph{Only the size of one job is known}
For $i \leq B$ and $j > B$, the algorithm knows $x^a_i$, therefore $\uij_{ij}(\cdot)$ can be a function of $x^a_i$. By Lemma \ref{lem:P-uij} then Inequality \eqref{eq:Pxa>Px}, it holds for all $t\geq 0$ that
\begin{align*}
\Pr(P_{ij} > t) 
&= \Pr(x^a_i > \uij_{i,j}(t) \text{ and } x^a_j > t - \uij_{i,j}(t))\\
&= \E[ \indic{x^a_i > \uij_{i,j}(t)} \indic{x^a_j > t - \uij_{i,j}(t)} ]\\
&= \E\big[ \indic{x^a_i > \uij_{i,j}(t)} \E[\indic{x^a_j > t - \uij_{i,j}(t)} \mid x^a_i,\uij_{i,j}(t)] \big]\\
&= \E\big[ \indic{x^a_i > \uij_{i,j}(t)} \Pr\big(x^a_j > t - \uij_{i,j}(t) \mid \uij_{i,j}(t)\big) \big]\\
&\geq \E\left[ \indic{x^a_i > \uij_{i,j}(t)} \frac{\phi(0) \indic{t - \uij_{i,j}(t) < a}}{\phi(t-\uij_{i,j}(t))} \right]\\
&\geq \phi(0) \E\left[ \frac{\indic{t-a < \uij_{i,j}(t) < x^a_i}}{\phi(t-\uij_{i,j}(t))} \right]\;,
\end{align*}
and we obtain by integrating over $t$ and then using Tonelli's theorem that
\begin{align*}
\E[P_{ij}]
&= \int_0^\infty \Pr(P_{ij} > t) dt\\
&\geq \phi(0) \E\left[ \int_0^\infty \frac{\indic{t-a < \uij_{i,j}(t) < x^a_i}}{\phi(t-\uij_{i,j}(t))} dt \right]\;.
\end{align*}
Recall that $\uij_{i,j}(t)$ is defined as the time spent on job $i$ when a total of $t$ units of time have been spent on both jobs $i$ and $j$, thus $\uij_{ij}(0) = 0$, $\uij_{ij}$ is non-decreasing and it is 1-Lipschitz. Let us denote by $\V$ the set of functions on $[0,\infty)$ satisfying these properties
\[
\V = \Big\{ v : [0,\infty) \to [0,\infty) \; : \; v(0) = 0 \text{ and } \tfrac{v(t_2) - v(t_1)}{t_2 - t_1} \in [0,1] \; \forall t_1 < t_2 \Big\}\;,
\]
and for all $x \geq 0$ and $T \in [0,\infty]$, we denote by $\V_{x,T}$ the subset of $\V$ defined as
\[
\V_{x,T} = \Big\{v \in \V : \inf\{t \geq 0: v(t) \geq x\} = T \Big\}\;.
\]
For any $x \geq 0$ and $v \in \V$, let $T_v = \inf\{t \geq 0: v(t) \geq x\}$, then $v \in \V_{x,T_v}$. Since $v$ is $1$-Lipschitz and $v(0) = 0$, it holds that $T_v \geq x$. If $v(t) < x$ for all $t\geq 0$ then $T_v = \infty$. Therefore, we have for any $x \geq 0$
\[
\V = \bigcup_{T \in [x,\infty]} \V_{x,T}\;.
\]
Furthermore, given that $v$ is non-decreasing, $v(t) < x$ if and only if $t < T_v$.
Consequently, $\E[P_{ij}]$ satisfies
\begin{align}
\E[P_{ij}]
&\geq \phi(0) \E\left[ \inf_{v \in \V} \int_0^\infty \frac{\indic{t-a <v(t) < x^a_i}}{\phi(t-v(t))} dt \right] \nonumber \\
&= \phi(0) \E\left[ \inf_{T \in [x^a_i, \infty]} \inf_{v \in \V_{x^a_i,T}} \int_0^\infty \frac{\indic{t-a <v(t) < x^a_i}}{\phi(t-v(t))} dt \right] \nonumber \\
&= \phi(0) \E\left[ \inf_{T \in [x^a_i, \infty]} \inf_{v \in \V_{x^a_i,T}} \int_0^T \frac{\indic{t-a <v(t)}}{\phi(t-v(t))} dt \right] \;. \label{eq:ineqE-inf-inf}
\end{align}
Let $x > 0$ and $T \in [x,\infty]$, and define $v_{x,T}^*: t \geq 0 \mapsto (t-T+x) \indic{t > T-x}$, then for all $v \in \V_{x,T}$ we have $v(t) \geq v_{x,T}^*(t)$ for all $t \in [0,T]$. Indeed, $v(t) \geq 0 = v_{x,T}^*(t)$ for $t \in [0,T-x]$, and if $v(t) < v_{x,T}^*(t)$ for some $t \in [T-x,T]$ then, because $v$ is $1$-Lipschitz, we have
\[
v(T) \leq v(T-x) + x < v_{x,T}^*(T-x) + x = x\;,
\]
which contradicts $v(T) \geq x$. Finally, as $\phi$ and $v \mapsto \indic{t-a < v}$ are both non-decreasing, it holds for any $v \in \V_{x,T}$ that
\begin{align}
\int_0^T \frac{\indic{t-a < v(t)}}{\phi(t-v(t))}dt
&\geq \int_0^T \frac{\indic{t-a < v_{x,T}^*(t)}}{\phi(t-v_{x,T}^*(t))} dt \nonumber \\
&= \int_0^{T-x} \frac{\indic{t-a < 0}}{\phi(t)} dt + \int_{T-x}^T \frac{\indic{t-a < t - T + x}}{\phi(T-x)} dt \nonumber\\
&= \int_0^{\min(a,T-x)} \frac{dt}{\phi(t)} + \frac{x \indic{T-x < a}}{\phi(T-x)} \nonumber \\
&= \left\{
    \begin{array}{ll}
         \int_0^{T-x} \frac{dt}{\phi(t)} + \frac{x}{\phi(T-x)} & \mbox{if }  T-x < a\\
        \int_0^a \frac{dt}{\phi(t)} & \mbox{if } T-x \geq a
    \end{array}
\right. \\
&= \left\{
    \begin{array}{ll}
         G_\phi(x,T) & \mbox{if }  T-x < a\\
        \int_0^a \frac{dt}{\phi(t)} & \mbox{if } T-x \geq a
    \end{array}
\right.\;. \label{eq:Ga(x,T)}
\end{align}
Taking the infimum over $v \in \V_{x,T}$ then over $T \in [x,\infty]$ in \eqref{eq:Ga(x,T)} gives for any $x \geq 0$ that
\begin{align*}
\inf_{T \in [x,\infty]} \inf_{v \in \V_{x,T}} \int_0^T \frac{\indic{t-a < v(t)}}{\phi(t-v(t))}dt
&\geq \min\left( \inf_{T \in [x,a+x)} G_\phi(x,T), \int_0^a \frac{dt}{\phi(t)} \right)\\
&\geq \min\left( \inf_{T \in [x,\infty]} G_\phi(x,T), \int_0^a \frac{dt}{\phi(t)} \right)\;,
\end{align*}
and substituting into \ref{eq:ineqE-inf-inf} leads to
\begin{align*}
\E[P_{ij}] 
&\geq \phi(0) \E \left[ \min\left( \inf_{T \in [x^a_1,\infty]} G_\phi(x^a_1,T), \int_0^a \frac{dt}{\phi(t)} \right)  \right]\;.
\end{align*}
Let us denote by $f_a$ and $f_\infty$ respectively the density functions of $x_1^a$ and $x_1^\infty$. We have for any $x > 0$ that
\begin{equation}
f_a(x) 
= \frac{d}{dx} \Pr(x^a_1 \leq x)
= \frac{\phi'(x)/\phi(x)^2}{\phi(0)^{-1} - \phi(a)^{-1}} \indic{x < a}
\geq \frac{\phi(0)\phi'(x)}{\phi(x)^2}\indic{x < a}
= f_\infty(x) \indic{x < a}\;,
\end{equation}
therefore
\begin{align}
\E[P_{ij}] 
&= \phi(0) \int_0^\infty \min\left( \inf_{T \in [x,\infty]} G_\phi(x,T), \int_0^a \frac{dt}{\phi(t)} \right) f_a(x) dx \nonumber\\
&\geq \phi(0) \int_0^\infty \min\left( \inf_{T \in [x,\infty]} G_\phi(x,T), \int_0^a \frac{dt}{\phi(t)} \right) f_\infty(x) \indic{x < a} dx\;. \label{eq:EP>int-min-a}
\end{align}
Observing for all $x > 0$ that
\begin{equation}\label{eq:infG<int1/phi}
\inf_{T \in [x,\infty]} G_\phi(x,T)
\leq \lim_{T \to \infty} G_\phi(x,T) 
= \int_0^\infty \frac{dt}{\phi(t)}
\end{equation}
and that the mapping $a \mapsto \min\left( \inf_{T \in [x,\infty]} G_\phi(x,T), \int_0^a \frac{dt}{\phi(t)} \right) f_\infty(x) \indic{x < a}$ is non-decreasing, the monotone convergence theorem, the continuity of the minimum, then Inequality \ref{eq:infG<int1/phi} guarantee that
\begin{align*}
\lim_{a \to \infty} \int_0^\infty \min &\left( \inf_{T \in [x,\infty]} G_\phi(x,T), \int_0^a \frac{dt}{\phi(t)} \right) f_\infty(x) \indic{x < a} dx\\
&= \int_0^\infty \lim_{a \to \infty} \left\{ \min\left( \inf_{T \in [x,\infty]} G_\phi(x,T), \int_0^a \frac{dt}{\phi(t)} \right)f_\infty(x) \indic{x < a} \right\} dx\\
&= \int_0^\infty \min\left( \inf_{T \in [x,\infty]} G_\phi(x,T), \int_0^\infty \frac{dt}{\phi(t)} \right) f_\infty(x)  dx\\
&= \int_0^\infty \left( \inf_{T \in [x,\infty]} G_\phi(x,T) \right) f_\infty(x) dx\\
&= \phi(0) \int_0^\infty \left( \inf_{T \in [x,\infty]} G_\phi(x,T) \right) \frac{\phi'(x)}{\phi(x)^2} dx \;.
\end{align*}
Finally, the previous term is finite because 
\begin{align*}
\int_0^\infty \left( \inf_{T \in [x,\infty]} G_\phi(x,T) \right) \frac{\phi'(x)}{\phi(x)^2} dx
&\leq \int_0^\infty  G_\phi(x,2x) \frac{\phi'(x)}{\phi(x)^2} dx\\
&= \int_0^\infty \left( \int_0^{x} \frac{dt}{\phi(t)} + \frac{x}{\phi(x)}  \right) \frac{\phi'(x)}{\phi(x)^2} dx\\
&\leq  \int_0^\infty \left(2 \int_0^{x} \frac{dt}{\phi(t)} \right) \frac{\phi'(x)}{\phi(x)^2} dx\\
&= 2 \int_0^\infty \left( \int_t^\infty \frac{\phi'(x)}{\phi(t)^2} \right) \frac{dt}{\phi(t)}\\
&= 2 \int_0^\infty \frac{dt}{\phi(t)^2} < \infty\;.
\end{align*}
Therefore 
\begin{align*}
\int_0^\infty \min &\left( \inf_{T \in [x,\infty]} G_\phi(x,T), \int_0^a \frac{dt}{\phi(t)} \right) \frac{\phi(0)}{\phi(t)} \indic{t < a} dt\\
&= \phi(0) \int_0^\infty \left( \inf_{T \in [x,\infty]} G_\phi(x,T) \right) \frac{\phi'(x)}{\phi(x)^2} dx - o(1)\;,    
\end{align*}

and substituting into Inequality \eqref{eq:EP>int-min-a} gives the aimed result.

\end{proof}

\subsubsection{Proof of the theorem}

Using the previous lemmas, we can now prove the theorem.

\begin{proof}
Let $x = (x_1,\ldots,x_n)$ be an array of i.i.d. random job sizes with distribution $\Pr(x_i \leq t) = 1 - \frac{\phi(0)}{\phi(t)}$. Observe that
\[
\E[x_1] 
= \int_0^\infty \Pr(x_1 \geq t) dt
= \phi(0) \int_0^\infty \frac{dt}{\phi(t)}
\quad, \quad
\E[\min(x_1,x_2)] 
= \int_0^\infty \Pr(x_1 \geq t)^2 dt
= \phi(0)^2 \int_0^\infty \frac{dt}{\phi(t)^2}\;.
\]
Lemma \ref{lem:generic-lb} with $a = \infty$ gives for any algorithm $\A$ that $\E[P_{ij}] \geq \E[\min(x_1,x_2)]$ for $i\neq j \leq B$, $\E[P_{ij}] \geq 2 \E[\min(x_1,x_2)]$ for $i\neq j > B$, and $\E[P_{ij}] \geq \alpha_\phi \E[\min(x_1,x_2)]$ for $i\leq B, j > B$. Indeed, the probability density function of $x_1$ is $t \mapsto \frac{\phi(0)\phi'(t)}{\phi(t)^2}$, and  we have for all $i\leq B$ and $j> B$ that
\begin{align*}
\phi(0)\E\big[\inf_{T\geq x_1}G_\phi(x_1,T)\big]
&= \phi(0)^2  \int_0^\infty \left\{\inf_{T\geq x}G_\phi(x,T) \right\} \frac{\phi'(x)}{\phi(x)^2} dx\\
&\geq \alpha_\phi \phi(0)^2 \int_0^\infty \frac{dt}{\phi(t)^2}
= \alpha_\phi \E[\min(x_1,x_2)]\;.    
\end{align*}

Assume that $\int_0^\infty \frac{dt}{\phi(t)} < \infty$, i.e.  $\E[x_1] < \infty$, then by Equation \eqref{eq:generic-output} it holds that
\begin{align*}
\E[\A(x)] 
&\geq n \E[x_1] 
    + \sum_{1 \leq i<j \leq B}\E[\min(x_1,x_2)]
    + \sum_{B<i<j\leq n} 2 \E[\min(x_1,x_2)]
    + \sum_{i=1}^B \sum_{j=B+1}^n \alpha_\phi \E[\min(x_1,x_2)]\\
&=  n \E[x_1]  + \left( \frac{B(B-1)}{2} + (n-B)(n-B-1) + \alpha_\phi B(n-B) \right) \E[\min(x_1,x_2)]\;,
\end{align*}
the terms multiplying $\E[\min(x_1,x_2)]$ can be reordered as follows
\begin{align}
\frac{B(B-1)}{2} &+(n-B)(n-B-1) + \alpha_\phi B(n-B) \nonumber
\\&= \frac{B(B-1)}{2}+(n-B)(n-1 + (\alpha_\phi - 1)B) \nonumber\\
&= \frac{B(B-1)}{2}+n(n-1) + (\alpha_\phi - 1)nB - nB - B((\alpha_\phi - 1)B - 1) \nonumber\\
&= \frac{B(B-1)}{2} + n(n-1) + (\alpha_\phi - 2)nB - (\alpha_\phi - 1)B(B-1) - (\alpha_\phi - 2)B \nonumber\\
&=  n(n-1) - (2-\alpha_\phi)(n-1)B + (\tfrac{3}{2} - \alpha_\phi)B(B-1)\\
&= \frac{n(n-1)}{2}\left( 2 - (4-2\alpha_\phi) \frac{B}{n} + (3 - 2 \alpha_\phi)\frac{B(B-1)}{n(n-1)} \right)\\
&= \frac{n(n-1)}{2}\left( 2 - \frac{B}{n} - (3 - 2 \alpha_\phi)\frac{B}{n}\left(1 - \frac{B(B-1)}{n(n-1)}\right) \right)
\;, \label{eq:reorder-terms}
\end{align}
hence
\begin{align*}
\E[\A(x)]    
= n \E[x_1] + C_{n,B} \frac{n(n-1)}{2} \E[\min(x_1,x_2)] \;.
\end{align*}
On the other hand, by equation \eqref{eq:opt-output}
\[
\E[\OPT(x)] 
= \E\left[ \sum_{i=1}^n x_i + \sum_{i<j} \min(x_i,x_j) \right]
= n \E[x_1] + \frac{n(n-1)}{2} \E[\min(x_1,x_2)]\;,
\]
therefore
\begin{align*}
\frac{\E[\A(x)]}{\E[\OPT(x)]}
&\geq \frac{n \E[x_1] + C_{n,B} \frac{n(n-1)}{2} \E[\min(x_1,x_2)]}{n \E[x_1] + \frac{n(n-1)}{2} \E[\min(x_1,x_2)]}\\
&= C_{n,B} - \frac{C_{n,B} - 1}{1 + \frac{n-1}{2} \frac{\E[\min(x_1,x_2)]}{\E[x_1]}}\;,
\end{align*}
and using Lemma \ref{lem:yao}, this yields
\[
\ratio_{n,B}(\A) \geq C_{n,B} - \frac{C_{n,B} - 1}{1 + \frac{n-1}{2} \frac{\E[\min(x_1,x_2)]}{\E[x_1]}}\;.
\]
If $B_n = wn + o(n)$ for some $w \in [0,1]$, then 
\begin{align*}
\CR_B(\A) 
&\geq \liminf_{n\to \infty} \ratio_{n,B_n}(\A)\\
&\geq 2 - 2(2-\alpha_\phi)w + (3-2\alpha_\phi)w^2\;.    
\end{align*}

Assume now that $\int_0^\infty \frac{dt}{\phi(t)} = \infty$, thus $\E[x_1] = \infty$. Let $a > 0$, and $x^a = (x_1^a,\ldots,x_n^a)$ be i.i.d. job sizes with distribution 
\[
\Pr(x_1^a \leq t)
= 1 - \frac{\phi(t)^{-1} - \phi(a)^{-1}}{\phi(0)^{-1} - \phi(a)^{-1}} \indic{t < a}\;.
\]
Therefore $x_1^a$ has a finite expectation $\E[x_1^a] \leq a$. Denoting by $x_1,x_2$ i.i.d. random variables with distribution $\Pr(x_1 \leq t) = 1 - \frac{\phi(0)}{\phi(t)}$, Lemma \ref{lem:generic-lb} and Equation \eqref{eq:generic-output} give for any algorithm $\A$ that 
\begin{align}
\E[\A(x^a)]
&\geq n \E[x^a_1] 
    + \sum_{1 \leq i<j \leq B}(\E[\min(x_1,x_2)] - \underset{a \to \infty}{o(1)})
    + \sum_{B<i<j\leq n} (2 \E[\min(x_1,x_2)] - \underset{a \to \infty}{o(1)}) \nonumber\\
&\quad 
    + \sum_{i=1}^B \sum_{j=B+1}^n (\alpha_\phi \E[\min(x_1,x_2)] - \underset{a \to \infty}{o(1)})\nonumber\\
&= n \E[x^a_1] + C_{n,B} \frac{n(n-1)}{2} \E[\min(x_1,x_2)] - n^2\underset{a \to \infty}{o(1)}\;, \label{eq:lb-A-a}
\end{align}
where the last equation is obtained with the same computation as in the case $\int_0^\infty \frac{dt}{\phi(t)} < \infty$. Moreover, if $a \geq 1$ then
\[
\E[\min(x^a_1,x^a_2)]
= \int_0^\infty \Pr( \min(x^a_1,x^a_2) \geq t ) dt
 = \int_0^\infty  \left(\frac{\phi(t)^{-1} - \phi(a)^{-1}}{\phi(0)^{-1} - \phi(a)^{-1}}\right)^2 \indic{t < a}dt\;,
\]
and we have for any $t \geq 0$ that $\left(\frac{\phi(t)^{-1} - \phi(a)^{-1}}{\phi(0)^{-1} - \phi(a)^{-1}}\right)^2 \leq \frac{\phi(t)^{-2}}{(\phi(0)^{-1} - \phi(1)^{-1})^2}$, which is an integrable function.  Furthermore, $\phi$ is increasing and $1/\phi^2$ is integrable, hence $\lim\limits_{a \to \infty}\phi(a) = \infty$. Therefore, the dominated convergence theorem gives that
\begin{align*}
\lim_{a \to \infty}\E[\min(x^a_1,x^a_2)]
&= \int_0^\infty \lim_{a \to \infty} \left\{ \left(\frac{\phi(t)^{-1} - \phi(a)^{-1}}{\phi(0)^{-1} - \phi(a)^{-1}}\right)^2 \indic{t < a} \right\}dt  \\
&= \int_0^\infty \frac{\phi(0)^2}{\phi(t)^2} dt\\
&= \E[\min(x_1,x_2)]\;.
\end{align*}
It follows that 
\begin{align*}
\OPT(x^a)
&= \E\left[ \sum_{i=1}^n x_i^a + \sum_{i<j} \min(x_i^a, x_j^a) \right]\\
&= n\E[x_1^a] + \frac{n(n-1)}{2} \E[\min(x_1^a, x_2^a)]\\
&= n\E[x_1^a] + \frac{n(n-1)}{2}\E[\min(x_1, x_2)] + n^2 \underset{a \to \infty}{o(1)}\;.
\end{align*}
Combining this with Inequality \eqref{eq:lb-A-a}, then using Lemma \ref{lem:yao}, we obtain that
\[
\ratio_{n,B}
\geq \frac{n \E[x^a_1] + C_{n,B} \frac{n(n-1)}{2} \E[\min(x_1,x_2)] - n^2\underset{a \to \infty}{o(1)}}{n\E[x_1^a] + \frac{n(n-1)}{2}\E[\min(x_1, x_2)] + n^2 \underset{a \to \infty}{o(1)}}\;,
\]
and if $B = wn + o(n)$ then
\[
\CR_B(\A) 
\geq \liminf_{n \to \infty} \ratio_{n,B}(\A)
\geq \frac{\big(2 - 2(2-\alpha_\phi)w + (3-2\alpha_\phi)w^2\big) \E[\min(x_1,x_2)] - \underset{a \to \infty}{o(1)}}{\E[\min(x_1,x_2)] + \underset{a \to \infty}{o(1)}}\;
\]
and finally, taking the limit $a \to \infty$ yields
\[
\CR_B(\A) \geq 2 - 2(2-\alpha_\phi)w + (3-2\alpha_\phi)w^2\;.
\]
\end{proof}

\subsection{Proof of Corollary \ref{cor:lb-exp}}
\begin{proof}
Let us consider i.i.d. exponentially distributed job sizes. This corresponds to $\phi(t) = e^{t}$ for all $t \geq 0$. It holds that $\phi(0) = 1 > 0$, $\phi$ is increasing, $\frac{\phi'}{\phi} = 1$ is non-increasing, and
\[
\int_0^\infty \frac{dt}{\phi(t)} = \int_0^\infty e^{-t} dt = 1
\;, \qquad
\int_0^\infty \frac{dt}{\phi(t)^2} = \int_0^\infty e^{-2t} dt = \frac{1}{2}\;.
\]
Furthermore, it holds for all $x \geq 0$ and $T \geq x$ that
\begin{align*}
G(x,T)
&= \int_0^{T-x} \frac{dt}{\phi(t)} + \frac{x}{\phi(T-x)}\\
&= \int_0^{T-x} e^{-t} dt + x e^{-(T-x)}\\
&= 1 + (x-1)e^{-(T-x)}\;.
\end{align*}
Thus, for all $x \geq 0$
\[
\inf_{T\geq x} G(x,T) = \left\{
    \begin{array}{ll}
        G(x,x) = x & \mbox{if } x < 1 \\
        \lim\limits_{T \to \infty} G(x,T) = 1  & \mbox{if } x \geq 1
    \end{array}
\right. \;,
\]
and it follows that
\begin{align*}
\phi(0)^2 \int_0^\infty \left\{ \inf_{T \geq x} G_\phi(x,T) \right\} \frac{\phi'(x)}{\phi(x)^2} dx
&= \int_0^\infty \left\{ \inf_{T \geq x} G_\phi(x,T) \right\} e^{-x} dx\\
&= \int_0^1 x e^{-x} dx + \int_1^\infty e^{-x} dx\\
&= 1 - \tfrac{1}{e}\\
&= 2 (1 - \tfrac{1}{e}) \int_0^\infty \frac{dt}{\phi(t)^2}\;.
\end{align*}
Finally, by Theorem \ref{thm:lower-bound}, it holds for any algorithm $\A$ having access to the sizes of $B$ jobs that
\[
\ratio_{n,B}(\A)
\geq C_{n,B} - \frac{C_{n,B}-1}{1 + \frac{n-1}{4}}
= C_{n,B} - \frac{4(C_{n,B}-1)}{n+3}\;,
\]
with 
\[
C_{n,B} = 2 - (4/e)\frac{B}{n} + (4/e - 1)\frac{B(B-1)}{n(n-1)}\;.
\]
In particular, if $B = wn + o(n)$, then again by Theorem \ref{thm:lower-bound} we have
\[
\CR(\A)
\geq 2 - \tfrac{4}{e}w + (\tfrac{4}{e} - 1)w^2\;.
\]
\end{proof}

\subsection{Proof of Corollary \ref{cor:lb-phir}}
\begin{proof}
Let $r\in (\tfrac{1}{2},1)$, and consider $\phi: t \mapsto (1+t)^r$. It holds that $\phi(0) = 1 > 0$, $\phi$ is increasing, $\frac{\phi'}{\phi}: t \mapsto \frac{r}{1+t}$ is non-increasing, and 
$\int_0^\infty \frac{dt}{\phi(t)^2} 
= \int_0^\infty \frac{dt}{(1+t)^{2r}}
= \frac{1}{2r - 1} < \infty$ because $r > \frac{1}{2}$. Moreover, it holds for all $x\geq 0$ and $T \geq x$ that
\begin{align*}
G(x,T)
&= \int_0^{T-x} \frac{dt}{\phi(t)} + \frac{x}{\phi(T-x)}\\
&= \int_0^{T-x} \frac{dt}{(1+t)^r} + \frac{x}{(1+T-x)^r}\\
&= - \frac{1}{1-r} + \frac{(1+T-x)^{1-r}}{1-r} + \frac{x}{(1+T-x)^r}\;,
\end{align*}
therefore, for all $x \geq 0$
\begin{align*}
\inf_{T \geq x} G(x,T)
&=  - \frac{1}{1-r} + \inf_{T \geq x}\left\{ \frac{(1+T-x)^{1-r}}{1-r} + \frac{x}{(1+T-x)^r} \right\}\\
&= - \frac{1}{1-r} + \inf_{y \in [0,1]} \left\{ \frac{y^{-(1-r)}}{1-r} + x y^r \right\}\;,
\end{align*}
where the last inequality is obtained by considering $y = \frac{1}{1+T-x}$. It holds that
\begin{align*}
\frac{d}{dy}\left( \frac{y^{-(1-r)}}{1-r} + x y^r \right)
&= - y^{-(2-r)} + rxy^{-(1-r)}\\
&= rxy^{-(2-r)} \left(y - \frac{1}{rx} \right)\;,
\end{align*}
The mapping $y \mapsto \frac{y^{-(1-r)}}{1-r} + x y^r$ is thus minimized on $[0,\infty)$ for $y = \frac{1}{rx}$, and it is minimized on $[0,1]$ for $y_* = \min(1,\frac{1}{rx})$. Therefore, it holds for $x \leq \frac{1}{r}$ that $y_* = 1$ and 
$\inf_{T \geq x} G(x,T) 
=  - \frac{1}{1-r} + (\frac{1}{1-r} + x) 
= x$, and for $x > \frac{1}{r}$ we have $y_* = \frac{1}{rx}$ and 
\begin{align*}
\inf_{T \geq x} G(x,T) 
&= -\frac{1}{1-r} + \frac{(rx)^{1-r}}{1-r} + (rx)^{-r}\\
&= -\frac{1}{1-r} + \left( \frac{1}{1-r} + \frac{1}{r} \right) (rx)^{1-r}\\
&= -\frac{1}{1-r} + \frac{(rx)^{1-r}}{r(1-r)}\;.
\end{align*}
We deduce that
\begin{align}
\phi(0)^2 \int_0^\infty \left\{ \inf_{T \geq x} G_\phi(x,T) \right\} \frac{\phi'(x)}{\phi(x)^2} dx
&= \int_0^{1/r} \left( -\frac{1}{1-r} + x \right) \frac{r}{(1+x)^{r+1}} dx \nonumber\\
&\quad +  \int_{1/r}^\infty \left( -\frac{1}{1-r} + \frac{(rx)^{1-r}}{r(1-r)} \right) \frac{r}{(1+x)^{r+1}} dx \nonumber\\
&= - \frac{1}{1-r} + \int_0^{1/r} \frac{rx}{(1+x)^{1+r}} dx + \frac{r^{1-r}}{1-r}  \int_{1/r}^\infty \frac{x^{1-r}}{(1+x)^{r+1}}dx \nonumber\\
&\geq - \frac{1}{1-r} + \frac{r^{1-r}}{1-r} \int_{1/r}^\infty \frac{x^{1-r}}{(1+x)^{r+1}}dx\;. \label{eq:phi-r}
\end{align}
Let us denote $W_r = \int_{1/r}^\infty \frac{x^{1-r}}{(1+x)^{r+1}}dx$. It holds that
\begin{flalign*}
\hspace{3cm}&&W_r
=& \int_{1/r}^\infty \frac{x^{1-r}}{(1+x)^{r+1}}dx\\
&&=& \int_{1/r}^\infty \left( \frac{x}{1+x} \right)^{1-r} \frac{dx}{(1+x)^{2r}}\\
&&\geq& \int_{2}^\infty \sqrt{\frac{x}{1+x}} \cdot \frac{dx}{(1+x)^{2r}} &&& (r>\tfrac{1}{2})\\
&&\geq& \int_{0}^{1/3} \frac{\sqrt{1 - s}}{s^{2 - 2r}}ds &&& (s \gets \tfrac{1}{1+x})\\
&&=& \int_{0}^{1/3} \sqrt{1 - s} \cdot \frac{d}{ds}\left( \frac{s^{2r-1}}{2r - 1} \right)ds\\
&&=& \left[ \frac{s^{2r-1}\sqrt{1-s}}{2r-1} \right]_0^{1/3} + \frac{1}{2(2r-1)}\int_0^{1/3} \frac{s^{2r-1}}{\sqrt{1-s}} ds\\
&&=& \frac{3^{-(2r-1)}\sqrt{2/3}}{2r-1}  + \frac{1}{2(2r-1)}\int_0^{1/3} \frac{s^{2r-1}}{\sqrt{1-s}} ds\;.
\end{flalign*}
It holds for all $r \in (\tfrac{1}{2},1)$ that $\frac{s^{2r-1}}{\sqrt{1-s}} \leq \frac{1}{\sqrt{1-s}}$, which is an integrable function on $[0,\tfrac{1}{3}]$, hence the dominated convergence theorem guarantees that 
\begin{align*}
\lim_{r \to \tfrac{1}{2}} \frac{1}{2} \int_0^{1/3} \frac{s^{2r-1}}{\sqrt{1-s}} ds
&= \frac{1}{2} \int_0^{1/3} \lim_{r \to \tfrac{1}{2}} \frac{s^{2r-1}}{\sqrt{1-s}} ds\\
&= \frac{1}{2}\int_0^{1/3} \frac{ds}{\sqrt{1-s}}\\
&= \left[ -\sqrt{1-s} \right]_0^{1/3}\\
&= 1 - \sqrt{2/3}\;,
\end{align*}
we deduce that
\begin{align*}
(2r-1) W_r 
&= 3^{-(2r-1)}\sqrt{2/3} + \frac{1}{2}\int_0^{1/3} \frac{s^{2r-1}}{\sqrt{1-s}} ds\\
&= \left(\sqrt{2/3} - \underset{r \to 1/2}{o(1)} \right) + \left(1- \sqrt{2/3} - \underset{r \to 1/2}{o(1)} \right)\\
&= 1 + \underset{r \to 1/2}{o(1)}\;.
\end{align*}
Substituting into Inequality \ref{eq:phi-r}, and recalling that $\int_0^\infty \frac{dt}{\phi(t)^2} = \frac{1}{2r-1}$, we deduce that for $r$ close to $\frac{1}{2}$
\begin{align*}
\phi(0)^2 \int_0^\infty \left\{ \inf_{T \geq x} G_\phi(x,T) \right\} \frac{\phi'(x)}{\phi(x)^2} dx
&\geq - \frac{1}{1-r} + \frac{r^{1-r}}{1-r} W_r\\
&= \left(- \frac{2r-1}{1-r} + \left(\frac{r^{1-r}}{1-r}\right)(2r-1) W_r \right) \int_0^\infty \frac{dt}{\phi(t)^2}\\
&= \left( o(1) + ( \sqrt{2} + o(1))( 1 - o(1)) \right) \int_0^\infty \frac{dt}{\phi(t)^2}\\
&= \left( \sqrt{2} + o(1) \right) \int_0^\infty \frac{dt}{\phi(t)^2}\;.
\end{align*}
Consequently, by Theorem \ref{thm:lower-bound}, if $B = wn + o(n)$, it holds for any algorithm $\A$ that
\[
\CR(\A) \geq 2 - 2\left( 2 - \sqrt{2} - \underset{r \to 1/2}{o(1)}\right)w + \left(3 - 2 \sqrt{2} - \underset{r \to 1/2}{o(1)} \right)w^2\;,
\]
and taking the limit when $r \to \frac{1}{2}$ gives
\[
\CR(\A) \geq 2 - 2( 2 - \sqrt{2})w + (3 - 2 \sqrt{2})w^2\;.
\]
\end{proof}

\section{Known partial order}\label{appx:orderonly}

\subsection{Preliminary results}

\begin{lemma}\label{lem:Emin}
For any real numbers $x_1,\ldots,x_n$, if $\sigma$ is a uniformly random permutation of $[n]$ then
\[
\E[\min(x_{\sigma(1)}, x_{\sigma(2)})]
= \frac{2}{n(n-1)} \sum_{1\leq i<j \leq n} \min(x_{i}, x_{j}) \;.
\]
\end{lemma}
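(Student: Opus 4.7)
The plan is to exploit the well-known fact that for a uniformly random permutation $\sigma$ of $[n]$, the ordered pair $(\sigma(1), \sigma(2))$ is uniformly distributed over the set of ordered pairs of distinct indices $\{(i,j) \in [n]^2 : i \neq j\}$, which has cardinality $n(n-1)$. This follows because for any fixed distinct $i,j$, the number of permutations $\sigma$ with $\sigma(1)=i$ and $\sigma(2)=j$ is $(n-2)!$, hence the probability is $\frac{(n-2)!}{n!} = \frac{1}{n(n-1)}$.

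Using this, I would write the expectation as a sum over ordered pairs:
\[
\E[\min(x_{\sigma(1)}, x_{\sigma(2)})] = \frac{1}{n(n-1)} \sum_{\substack{i,j \in [n] \\ i \neq j}} \min(x_i, x_j).
\]
Then I would split the sum over $i \neq j$ into the parts with $i < j$ and $i > j$. Since $\min(x_i, x_j) = \min(x_j, x_i)$ is symmetric in its arguments, both parts contribute equally, yielding
\[
\sum_{i \neq j} \min(x_i,x_j) = 2 \sum_{1 \leq i < j \leq n} \min(x_i, x_j).
\]
Substituting gives the claimed identity. There is no real obstacle here; the lemma is essentially a definitional calculation relying only on the uniform marginal of the first two coordinates of a uniform random permutation and the symmetry of the minimum function.
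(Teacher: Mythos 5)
Your proof is correct and takes essentially the same approach as the paper: both rest on the exchangeability of a uniformly random permutation. You work directly with the uniform marginal of $(\sigma(1),\sigma(2))$ over ordered pairs, whereas the paper uses the equivalent fact that $\E[\min(x_{\sigma(i)},x_{\sigma(j)})]$ is the same for all $i\neq j$ and averages over pairs; these are two phrasings of the same idea.
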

\begin{proof}
Since $\sigma$ is uniformly random, we have for any $i\neq j$ that $\E[\min(x_{\sigma(1)}, x_{\sigma(2)})] = \E[\min(x_{\sigma(i)}, x_{\sigma()})]$, thus
\begin{align*}
\E[\min(x_{\sigma(1)}, x_{\sigma(2)})]
&= \frac{2}{n(n-1)} \E\Big[\sum_{1\leq i<j \leq n} \min(x_{\sigma(i)}, x_{\sigma(j)})\Big]\\
&= \frac{2}{n(n-1)} \sum_{1\leq i<j \leq n} \min(x_{i}, x_{j}) \;.
\end{align*}
\end{proof}

\begin{lemma}\label{lem:Emin/2}
For any real numbers $x_1,\ldots,x_n$, if $\sigma$ is a uniformly random permutation of $[n]$ then
\[
\E[x_{\sigma(1)} \indic{x_{\sigma(1)} < x_{\sigma(2)}}]
\leq \E[\min(x_{\sigma(1)}, x_{\sigma(2)})]
= \frac{1}{n(n-1)} \sum_{1\leq i<j \leq n} \min(x_{i}, x_{j}) \;,
\]
with equality if $x_1, \ldots, x_n$ are pairwise distinct.
\end{lemma}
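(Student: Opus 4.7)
The plan is to exploit the exchangeability of the pair $(\sigma(1),\sigma(2))$ under a uniformly random permutation, and then invoke Lemma~\ref{lem:Emin} from the excerpt.

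For the inequality, I would argue pointwise: whenever $x_{\sigma(1)} < x_{\sigma(2)}$, both sides equal $x_{\sigma(1)}$; otherwise the indicator vanishes while $\min(x_{\sigma(1)}, x_{\sigma(2)}) = x_{\sigma(2)} \geq 0$, since the job sizes are non-negative. Taking expectation in $\sigma$ yields the claimed inequality.

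For the identification of $\E[x_{\sigma(1)}\indic{x_{\sigma(1)} < x_{\sigma(2)}}]$ with $\frac{1}{n(n-1)}\sum_{i<j}\min(x_i,x_j)$ in the pairwise distinct case, the key observation is that composing $\sigma$ with the transposition $(1\;2)$ is a bijection of $S_n$, so $(\sigma(1),\sigma(2))$ and $(\sigma(2),\sigma(1))$ are identically distributed. Hence
\[
\E[x_{\sigma(1)}\indic{x_{\sigma(1)} < x_{\sigma(2)}}] = \E[x_{\sigma(2)}\indic{x_{\sigma(2)} < x_{\sigma(1)}}].
\]
When the $x_i$ are pairwise distinct, exactly one of the two strict inequalities $x_{\sigma(1)} < x_{\sigma(2)}$ or $x_{\sigma(2)} < x_{\sigma(1)}$ holds, so pointwise
\[
\min(x_{\sigma(1)}, x_{\sigma(2)}) = x_{\sigma(1)}\indic{x_{\sigma(1)} < x_{\sigma(2)}} + x_{\sigma(2)}\indic{x_{\sigma(2)} < x_{\sigma(1)}}.
\]
Taking expectations and combining with the symmetry identity gives $\E[\min(x_{\sigma(1)}, x_{\sigma(2)})] = 2\,\E[x_{\sigma(1)} \indic{x_{\sigma(1)} < x_{\sigma(2)}}]$; dividing by two and applying Lemma~\ref{lem:Emin} finishes the computation.

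I do not foresee any substantive obstacle: the argument is a textbook symmetrization combined with the previous lemma. The only mild subtlety is that the two-term decomposition of the minimum relies on the two strict inequalities being exhaustive, which is precisely why the equality statement assumes pairwise distinct values; for general $x_i$, the pointwise inequality alone delivers the announced bound.
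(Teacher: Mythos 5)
Your pointwise argument for the inequality is too weak by a factor of two. It establishes $\E[x_{\sigma(1)}\indic{x_{\sigma(1)} < x_{\sigma(2)}}] \le \E[\min(x_{\sigma(1)},x_{\sigma(2)})]$, but by Lemma~\ref{lem:Emin} that upper bound equals $\frac{2}{n(n-1)}\sum_{i<j}\min(x_i,x_j)$, twice the right-hand side of the lemma. (The middle term in the lemma statement as printed is evidently missing a factor $\frac{1}{2}$ --- the consistent reading, and the form used downstream in Inequalities~\eqref{ineq:2-ineq-order} and~\eqref{ineq:only-ineq-CR}, is $\E[x_{\sigma(1)}\indic{x_{\sigma(1)}<x_{\sigma(2)}}]\le \frac{1}{2}\E[\min(x_{\sigma(1)},x_{\sigma(2)})]=\frac{1}{n(n-1)}\sum_{i<j}\min(x_i,x_j)$.) Your computation in the pairwise-distinct case correctly recovers $\frac{1}{n(n-1)}\sum_{i<j}\min(x_i,x_j)$, which exposes the discrepancy: you have proved an exact value of $\frac{1}{n(n-1)}\sum\min$ in that case and an upper bound of $\frac{2}{n(n-1)}\sum\min$ in general, but not the claimed upper bound $\frac{1}{n(n-1)}\sum\min$ for general (possibly tied) inputs.

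The tools you already have close the gap: keep the symmetrization you used for the equality case, but replace your pointwise two-term decomposition by the one-sided inequality
$x_{\sigma(1)}\indic{x_{\sigma(1)} < x_{\sigma(2)}} + x_{\sigma(2)}\indic{x_{\sigma(2)} < x_{\sigma(1)}} \le \min(x_{\sigma(1)},x_{\sigma(2)})$,
which holds for all non-negative values (with equality exactly when $x_{\sigma(1)} \neq x_{\sigma(2)}$). Taking expectation and combining with the exchangeability identity $\E[x_{\sigma(1)}\indic{x_{\sigma(1)}<x_{\sigma(2)}}]=\E[x_{\sigma(2)}\indic{x_{\sigma(2)}<x_{\sigma(1)}}]$ gives $2\E[x_{\sigma(1)}\indic{x_{\sigma(1)}<x_{\sigma(2)}}]\le\E[\min(x_{\sigma(1)},x_{\sigma(2)})]$, and Lemma~\ref{lem:Emin} finishes. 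This is precisely the paper's route: after averaging with the swapped term it upgrades $\indic{x_{\sigma(2)}<x_{\sigma(1)}}$ to $\indic{x_{\sigma(2)}\le x_{\sigma(1)}}$, at which point $x_{\sigma(1)}\indic{x_{\sigma(1)}<x_{\sigma(2)}}+x_{\sigma(2)}\indic{x_{\sigma(2)}\le x_{\sigma(1)}}=\min(x_{\sigma(1)},x_{\sigma(2)})$ holds as an identity. Note also that both this step and your pointwise argument implicitly require non-negativity, so the hypothesis ``any real numbers'' in the lemma is being applied only to non-negative job sizes.
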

\begin{proof}
$\sigma$ is a uniformly random permutation of $[n]$, therefore $\E[x_{\sigma(i)} \indic{x_{\sigma(i)} < x_{\sigma(j)}}] = \E[x_{\sigma(1)} \indic{x_{\sigma(1)} < x_{\sigma(2)}}]$ for all $i \neq j \in [n]$. It follows that
\begin{align*}
\E[x_{\sigma(1)} \indic{x_{\sigma(1)} < x_{\sigma(2)}}]
&= \frac{1}{2} \left( \E[x_{\sigma(1)} \indic{x_{\sigma(1)} < x_{\sigma(2)}}] + \E[x_{\sigma(2)} \indic{x_{\sigma(2)} < x_{\sigma(1)}}] \right)\\
&\leq \frac{1}{2}\E[x_{\sigma(i)} \indic{x_{\sigma(1)} < x_{\sigma(2)}} + x_{\sigma(2)} \indic{x_{\sigma(2)} \leq x_{\sigma(1)}}]\\
&= \frac{1}{2} \E[\min(x_{\sigma(1)}, x_{\sigma(2)})]\;,
\end{align*}
and using Lemma \ref{lem:Emin} concludes the proof.
To have equality, it must hold that $\indic{x_{\sigma(2)} < x_{\sigma(1)}} = \indic{x_{\sigma(2)} \leq x_{\sigma(1)}}$ for any permutation, and this is true if and only if the values $x_1,\ldots,x_n$ are pairwise distinct.
\end{proof}

\subsection{Proof of Theorem \ref{thm:orderonly}}

\begin{proof}
For any $i \neq j \geq B+1$, as in round-robin, $P_{\sigma(i) \sigma(j)} = 2 \min(x_{\sigma(i)}, x_{\sigma(j)})$. For $i \neq j \leq B$, since the jobs $\{x_{\sigma(k)}\}_{k=1}^B$ are run in non-decreasing order one after the other, it holds that $P_{\sigma(i) \sigma(j)} = \min(x_{\sigma(i)}, x_{\sigma(j)})$. Finally, for $i\leq B$ and $j \geq B+1$, the delay caused by $\sigma(j)$ to $\pi(i)$ is always $D_{\sigma(j) \pi(i)} = \min(x_{\pi(i)}, x_{\sigma(j)})$. On the other hand if $x_{\sigma(j)} \leq x_{\pi(i-1)}$ then $x_{\sigma(j)}$ terminates before phase $i$ begins, thus job $\pi(i)$ does not delay job $\sigma(j)$: $D_{\pi(i) \sigma(j)} = 0$. If $x_{\sigma(j)} > x_{\pi(i-1)}$, then after the first step of phase $i$, the time spent on each of the jobs $\pi(i)$ and $\sigma(j)$ is $x_{\pi(i-1)}$. Both jobs are then executed with identical processing powers until one of them is terminated, thus 
\begin{align*}
D_{\pi(i) \sigma(j)} 
&= \min(x_{\pi(i)}, x_{\sigma(j)}) \indic{x_{\sigma(j)} > x_{\pi(i-1)}} \\
&= x_{\pi(i)} \indic{x_{\sigma(j)} > x_{\pi(i)}} + x_{\sigma(j)}\indic{x_{\pi(i)} \geq x_{\sigma(j)} > x_{\pi(i-1)}}\;.
\end{align*}
With the convention $x_{\pi(0) = 0}$, taking the sum over $i$ gives
\begin{align}
\sum_{i=1}^B D_{\pi(i) \sigma(j)} 
&= \sum_{i=1}^B x_{\pi(i)} \indic{x_{\sigma(j)} > x_{\pi(i)}}
 + x_{\sigma(j)} \sum_{i=1}^B \indic{x_{\pi(i)} \geq x_{\sigma(j)} > x_{\pi(i-1)}} \nonumber\\
&= \sum_{i=1}^B x_{\sigma(i)} \indic{x_{\sigma(j)} > x_{\sigma(i)}} + x_{\sigma(j)} \indic{x_{\sigma(j)} \leq x_{\pi(B)}} \nonumber \\
&\leq \sum_{i=1}^B x_{\sigma(i)} \indic{x_{\sigma(j)} > x_{\sigma(i)}} + x_{\sigma(j)}\;. \label{ineq:1-ineq-order}
\end{align}
Using Lemma \ref{lem:Emin/2}, and recalling that $\{\pi(i)\}_{i=1}^B = \{\sigma(i)\}_{i=1}^B$, we have in expectation,
\begin{align}
\E\Big[\sum_{i=1}^B D_{\sigma(i) \sigma(j)} \Big] 
&= \E\Big[\sum_{i=1}^B D_{\pi(i) \sigma(j)} \Big]  \nonumber\\
&\leq \frac{B}{2} \E[\min(x_{\sigma(1)}, x_{\sigma(2)})] + \frac{1}{n} \sum_{k=1}^n x_k\;, \label{ineq:2-ineq-order}
\end{align}
and it follows that, for any $j \geq B+1$
\begin{align*}
\E\Big[\sum_{i=1}^B P_{\sigma(i) \sigma(j)} \Big] 
&= \E\Big[\sum_{i=1}^B D_{\sigma(j) \sigma(i)} \Big] + \E\Big[\sum_{i=1}^B D_{\sigma(i) \sigma(j)} \Big]\\
&\leq B \E[\min(x_{\sigma(1)}, x_{\sigma(2)})] + \frac{B}{2} \E[\min(x_{\sigma(1)}, x_{\sigma(2)})] + \frac{1}{n} \sum_{k=1}^n x_k\\
&= \frac{3B}{2} \E[\min(x_{\sigma(1)}, x_{\sigma(2)})] + \frac{1}{n} \sum_{k=1}^n x_k\;,
\end{align*}
hence
\[
\sum_{j=B+1}^n \E\Big[ \sum_{i=1}^B P_{\sigma(i) \sigma(j)} \Big] \leq \frac{3}{2} B(n-B)\E[\min(x_{\sigma(1)}, x_{\sigma(2)})] + \left(1 - \frac{B}{n}\right) \sum_{k=1}^n x_k\;.
\]
Finally, using Equation \eqref{eq:generic-output}, the previous upper bounds on $P_{\sigma(i) \sigma(j)}$ for all $i\neq j$, then Equation \eqref{eq:reorder-terms} with $3/2$ instead of $\alpha_\pi$ gives that the objective of $\CRRR$ on the instance $x = \{x_1,\ldots,x_n\}$ satisfies in expectation 
\begin{align}
\E[\CRRR(x)]
&= \sum_{i=1}^n x_i + \E\Big[\sum_{1 \leq i < j \leq n} P_{\sigma(i) \sigma(j)}\Big] \nonumber \\
&= \sum_{i=1}^n x_i + \sum_{1 \leq i < j \leq B} \E[\min(x_{\sigma(i)}, x_{\sigma(j)})] 
+ \sum_{B<i<j\leq n} 2 \E[\min(x_{\sigma(i)}, x_{\sigma(j)})] \nonumber\\
&\quad + \sum_{j=B+1}^n \E\Big[ \sum_{i=1}^B P_{\sigma(i) \sigma(j)} \Big] \label{eq:alg-output-order}\\
&\leq \left(2 - \frac{B}{n}\right) \sum_{i=1}^n x_i
+ \left( \frac{B(B-1)}{2} + (n-B)(n-B-1) + \frac{3}{2} B(n-B) \right) \E[\min(x_{\sigma(1)}, x_{\sigma(2)})] \nonumber\\
&= \left(2 - \frac{B}{n}\right) \sum_{i=1}^n x_i
+ \left(2 - \frac{B}{n} \right) \frac{n(n-1)}{2}\E[\min(x_{\sigma(1)}, x_{\sigma(2)})] \nonumber \\
&= \left(2 - \frac{B}{n}\right) \sum_{i=1}^n x_i
+ \left(2 - \frac{B}{n} \right) \sum_{1 \leq i < j \leq n} \min(x_i, x_j) \label{eq:prf-exp}\\
&= \left(2 - \frac{B}{n} \right) \OPT(x)\;, \label{eq:prf-opt-obj}
\end{align}
where we used \ref{lem:Emin} for \eqref{eq:prf-exp} then Equation \eqref{eq:opt-output} for \eqref{eq:prf-opt-obj}. Therefore, the $(n,B)$-competitive ratio of Algorithm \ref{algo:orderonly} satisfies
\begin{equation}\label{eq:ratio-order-lb}
\ratio_{n,B}(\CRRR) \leq 2 - \frac{B}{n} \;.   
\end{equation}
For proving the lower bound on $\ratio_{n,B}$, let $\eps > 0$ and let us consider job sizes $x^\eps_i = 1 + i \eps$ for all $i \in [n]$. Observe that the only inequalities we used in the analysis are Inequalities \eqref{ineq:1-ineq-order} ($x_{\sigma(j)}\indic{x_{\sigma(j)} \leq x_{\pi(B)}} \leq x_{\sigma(j)}$), and \eqref{ineq:2-ineq-order} given by Lemma \ref{lem:Emin/2}. The second one becomes equality if the job sizes are pairwise distinct, which is satisfied by $\{x^\eps_i\}_{i=1}^n$. As for Inequality \eqref{ineq:1-ineq-order}, since the job sizes $x^\eps$ are pairwise distinct and all larger than $1$, we can give instead a lower bound on $\E[x_{\sigma(j)} \indic{x_{\sigma(j)} \leq x_{\pi(B)}}]$ for $j > B$ as follows
\begin{align*}
\E[x_{\sigma(j)} \indic{x_{\sigma(j)} \leq x_{\pi(B)}} ]
&= \E[ x_{\sigma(j)} \indic{\sigma(j) \leq \pi(B)}] \\
&\geq \Pr(\sigma(j) < \pi(B))
= \Pr(\sigma(n) < \max_{i\leq B} \sigma(i))\\
&= \sum_{k=B+1}^n \sum_{m = 1}^{k-1} \Pr( \sigma(n) = m , \max_{i\leq B} \sigma(i) = k)\\
&= \sum_{k=B+1}^n \sum_{m = 1}^{k-1} \sum_{\ell = 1}^B \Pr( \sigma(n) = m , \sigma(\ell) = k, \forall i \in [B]\setminus \{\ell\}: \sigma(i) \in [k-1]\setminus \{ m \} )\\
&= \sum_{k=B+1}^n \sum_{m = 1}^{k-1} \sum_{\ell = 1}^B \frac{1}{n!} {k-2 \choose B-1} (B-1)! (n-B-1)!\\
&= \sum_{k=B+1}^n \sum_{m = 1}^{k-1} \frac{B!(n-B-1)!}{n!} {k-2 \choose B-1}\\
&= \frac{B!(n-B-1)!}{n!} \sum_{k=B}^n (k-1){k-2 \choose B-1}\\
&= \frac{B \cdot B!(n-B-1)!}{n!} \sum_{k=B}^n {k \choose B}\\
&= \frac{B \cdot B!(n-B-1)!}{n!} {n \choose B+1}\\
&= \frac{B}{B+1}\;.
\end{align*}
Therefore, instead of \eqref{ineq:1-ineq-order}, we have the lower bound
\begin{align}
\E\Big[\sum_{i=1}^B D_{\sigma(i) \sigma(j)} \Big] 
&= \E\Big[\sum_{i=1}^B D_{\pi(i) \sigma(j)} \Big]  \nonumber\\
&= \frac{B}{2} \E[\min(x_{\sigma(1)}, x_{\sigma(2)})] + \E[x_{\sigma(j)} \indic{x_{\sigma(j)} \leq x_{\pi(B)}}]\\
&\geq \frac{B}{2} + \frac{B}{B+1}\;,
\end{align}
and thus, since we proved earlier that $D_{\sigma(i) \sigma(j)} = \min(x^\eps_{\sigma(i), x^\eps_{\sigma(j)}}$ for all $i \leq B$ and $j > B$, it holds that
\[
\E\Big[\sum_{i=1}^B P_{\sigma(i) \sigma(j)} \Big] 
\geq \sum_{i=1}^B \E[\min(x_{\sigma(i), x_{\sigma(j)}}] + \frac{B}{2} + \frac{B}{B+1}
\geq \frac{3B}{2} + \frac{B}{B+1}
\]
and using that $x^\eps_i \geq 1$ for all $i \in [n]$, we obtain by substituting the previous inequalities into \eqref{eq:alg-output-order} that
\begin{align*}
\E[\CRRR(x^\eps)]
&\geq n + \frac{B(B-1)}{2} + (n-B)(n-B-1) + (n-B)\left( \frac{B}{2} + \frac{B}{B+1} \right)\\
&= \left(n+(n-B)\frac{B}{B+1} \right) + \left( \frac{B(B-1)}{2} + (n-B)(n-B-1) + \frac{3B}{2}(n-B) \right)\\
&= \left(2n - B - \frac{n-B}{B+1}\right) + \left( n(n-1) - \frac{(n-1)B}{2} \right)\\
&= (n+1)\left(n - \frac{B}{2}\right) - \frac{n-B}{B+1}\;.
\end{align*}
On the other hand, Equation \eqref{eq:opt-output}, along with $x^\eps_i \leq 1 + n \eps$ for all $i \in [n]$, gives
\[
\OPT(x^\eps) 
\leq \sum_{i=1}^n (n-i+1) (1+n\eps) 
=  \frac{n(n+1)}{2}(1+n\eps)\;,
\]
and it follows that
\begin{align*}
\ratio_{n,B}(\CRRR) 
&\geq \frac{\E[\CRRR(x^\eps)]}{\OPT(x^\eps)}\\
&\geq \frac{(n+1)\left(n - \frac{B}{2}\right) - \frac{n-B}{B+1}}{\frac{n(n+1)}{2}(1+n\eps)}\\
&= \frac{1}{1+n\eps} \left( 2 - \frac{B}{n}  
 - \frac{2(1-\frac{B}{n})}{(n+1)(B+1)}\right)\;.
\end{align*}
Taking the limit $\eps \to 0$ and using Inequality \eqref{eq:ratio-order-lb}, we obtain that
\[
2 - \frac{B}{n}  
 - \frac{2(1-\frac{B}{n})}{(n+1)(B+1)}
\leq \ratio_{n,B}(\CRRR) 
\leq 2 - \frac{B}{n}\;.
\]

Finally, if $B_n = \lfloor wn \rfloor$ for some $w \in [0,1]$ then $B_n \geq wn$, hence $\CR_B(\CRRR) = \sup_{n \geq 2} R_{n,B_n}(\CRRR) \leq 2 - w$, and this bound is reached by the lower bound on $R_{n,B_n}$ for $n \to \infty$, which gives that $\CR_B(\CRRR) = 2 - w$.
\end{proof}
\section{Predictions of the job sizes}\label{appx:switch}

\subsection{Proof of Theorem \ref{thm:switch-perfect}}

We first compute the mutual delays of the jobs in $\switch$ with any breakpoints. 

\begin{lemma}\label{lem:delays-switch-algo}
For any job sizes $x = (x_1,\ldots,x_n)$, and for any permutation $\sigma$ of $[n]$ and breakpoints $z^\sigma = (z_{\sigma(1)},\ldots,z_{\sigma(B)})$, the Switch algorithm $\switch(z^\sigma,x)$ satisfies for all $i \neq j \in [n]$ that
\begin{alignat*}{3}
P_{\sigma(i) \sigma(j)}
&= x_{\sigma(i)} \indic{z_{\sigma(i)} < z_{\sigma(j)}} + x_{\sigma(j)} \indic{z_{\sigma(i)} > z_{\sigma(j)}} + (\theta_{ij} x_{\sigma(i)} + (1-\theta_{ij})x_{\sigma(j)}) \indic{z_{\sigma(i)} = z_{\sigma(j)}} \qquad
&&\text{if } i, j \leq B\;,\\
P_{\sigma(i) \sigma(j)}
&= 2 \min(x_{\sigma(i)}, x_{\sigma(j)})
&&\text{if } i, j > B\;,\\
P_{\sigma(i) \sigma(j)}
&= x_{\sigma(i)} \indic{x_{\sigma(j)} > z_{\sigma(i)}} + \min(z_{\sigma(i)}, x_{\sigma(j)})
 &&\text{if } i \leq B, j > B\;.
\end{alignat*}
where $\theta_{ij}$ is a Bernoulli random variable with parameter $1/2$ independent of $\sigma$ for all $i \neq j \in [B]$.
\end{lemma}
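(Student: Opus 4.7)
My plan is to carry out a direct case analysis on the three possibilities for the pair $(i,j)$, after first extracting two structural invariants of the execution of $\switch(z^\sigma, x)$. Invariant (i): since $\pi$ orders $(z_{\sigma(i)})_{i\in[B]}$ non-decreasingly and the while loop in iteration $k$ stops when either all jobs $\{\sigma(j)\}_{j>B}$ have finished or their maximum processing time reaches $z_{\pi(k)}$, at the end of iteration $k$'s while loop every such job has been processed for exactly $\min(x_{\sigma(j)}, z_{\pi(k)})$ time. Invariant (ii): outside the while loops, only the known jobs $\pi(k)$ are processed, and each such $\pi(k)$ runs uninterrupted to completion.

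For $i, j \leq B$, invariant (ii) places $\sigma(i)$ and $\sigma(j)$ in disjoint uninterrupted runs. If $z_{\sigma(i)} < z_{\sigma(j)}$, then $\sigma(i)$ precedes $\sigma(j)$ in $\pi$, which gives $D_{\sigma(i)\sigma(j)} = x_{\sigma(i)}$ and $D_{\sigma(j)\sigma(i)} = 0$, hence $P_{\sigma(i)\sigma(j)} = x_{\sigma(i)}$; the reverse strict case is symmetric. When $z_{\sigma(i)} = z_{\sigma(j)}$, the uniformly random choice of $\pi$ makes $\sigma(i)$ come first with probability $1/2$, giving the Bernoulli mixture $\theta_{ij} x_{\sigma(i)} + (1-\theta_{ij}) x_{\sigma(j)}$.

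For $i, j > B$, invariant (i) forces $\sigma(i)$ and $\sigma(j)$ to stay synchronised in processing time whenever both are unfinished, since they progress at equal rates in every while loop and in the final round-robin, and are never touched elsewhere. Therefore the shorter of the two finishes exactly when the longer has also been processed for $\min(x_{\sigma(i)}, x_{\sigma(j)})$ time, so $D_{\sigma(i)\sigma(j)} = D_{\sigma(j)\sigma(i)} = \min(x_{\sigma(i)}, x_{\sigma(j)})$ and $P_{\sigma(i)\sigma(j)} = 2\min(x_{\sigma(i)}, x_{\sigma(j)})$.

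The delicate case is $i \leq B$, $j > B$: here I would let $k^*$ be the iteration with $\pi(k^*) = \sigma(i)$ and apply invariant (i) at the end of iteration $k^*$'s while loop, at which point $\sigma(j)$ has processing time $\min(x_{\sigma(j)}, z_{\sigma(i)})$. If $x_{\sigma(j)} \leq z_{\sigma(i)}$, then $\sigma(j)$ has already terminated before $\sigma(i)$ begins, which gives $D_{\sigma(i)\sigma(j)} = 0$ and $D_{\sigma(j)\sigma(i)} = x_{\sigma(j)} = \min(z_{\sigma(i)}, x_{\sigma(j)})$, matching the formula with the indicator vanishing. Otherwise $\sigma(j)$ is paused at processing time $z_{\sigma(i)}$ while $\sigma(i)$ runs uninterrupted for $x_{\sigma(i)}$ time units, after which $\sigma(j)$ eventually completes among the remaining unknown jobs; this yields $D_{\sigma(i)\sigma(j)} = x_{\sigma(i)}$ and $D_{\sigma(j)\sigma(i)} = z_{\sigma(i)} = \min(z_{\sigma(i)}, x_{\sigma(j)})$, again matching the formula. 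The only real obstacle in the whole argument is certifying invariant (i) cleanly across consecutive iterations, so that the processing time of $\sigma(j)$ immediately before $\sigma(i)$ is launched is unambiguously $\min(x_{\sigma(j)}, z_{\sigma(i)})$; everything else reduces to accounting.
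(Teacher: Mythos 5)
Your proof is correct and takes essentially the same route as the paper's: the paper likewise establishes, directly from the algorithm's structure, that each known job $\pi(k)$ runs uninterrupted to completion and that every unknown job has been processed for exactly $\min(x_{\sigma(j)}, z_{\pi(k)})$ time when $\pi(k)$ is launched, then performs the same three-way case analysis on $(i,j)$. Your explicit labeling of the two invariants (and the flag that invariant (i) needs a short induction across iterations) is a mild expositional improvement but not a different argument.
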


\begin{proof}
Let us first define the random variables $\theta_{ij}$. For all $i \neq j \in [B]$, if $z_{\sigma(i)} \neq z_{\sigma(j)}$ then let $\theta_{ij}$ be an independent Bernoulli random variable with parameter $1/2$, and if $z_{\sigma(i)} = z_{\sigma(j)}$ then let $\theta_{ij}$ be the indicator that $z_{\sigma(i)}$ comes before $z_{\sigma(j)}$ with the ordering $\pi$. Since $\pi$ is an ordering of the breakpoints chosen uniformly at random, then $\theta_{ij}$ is a Bernoulli random variable with parameter $1/2$ independently of $\sigma$.

For $i \neq j \in [B]$, if $z_{\sigma(i)} < z_{\sigma(j)}$ then job $\sigma(i)$ is executed until completion before job $\sigma(j)$ starts being executed, thus $D_{\sigma(i) \sigma(j)} = x_{\sigma(i)}$ and $D_{\sigma(j) \sigma(i)} = 0$ and $P_{\sigma(i) \sigma(j)} = x_{\sigma(i)}$. By symmetry, if $z_{\sigma(i)} > z_{\sigma(j)}$ then $P_{\sigma(i) \sigma(j)} = x_{\sigma(j)}$. In the case where $z_{\sigma(i)} = z_{\sigma(j)}$, since $\pi$ is chosen uniformly at random among all the permutations of $[B]$ satisfying $z_{\pi(1)} \leq \ldots \leq z_{\pi(B)}$, each of the jobs $\sigma(i), \sigma(j)$ is run until completion before the other one starts with equal probability $1/2$, hence $P_{\sigma(i) \sigma(j)} = \theta_{ij}x_{\sigma(i)} + (1- \theta_{ij})x_{\sigma(j)}$. It follows that
\[
P_{\sigma(i) \sigma(j)}
= x_{\sigma(i)} \indic{z_{\sigma(i)} < z_{\sigma(j)}} + x_{\sigma(j)} \indic{z_{\sigma(i)} > z_{\sigma(j)}} + (\theta_{ij}x_{\sigma(i)} + (1- \theta_{ij})x_{\sigma(j)}) \indic{z_{\sigma(i)} = z_{\sigma(j)}}\;.
\]
For $i \neq j > B$, jobs $\sigma(i), \sigma(j)$ are processed symmetrically and the delays they cause to each other are the same as in round-robin. Therefore $D_{\sigma(i)\sigma(j)} = D_{\sigma(j)\sigma(i)} = \min(x_{\sigma(i)}, x_{\sigma(j)})$, and it holds that
\[
P_{\sigma(i) \sigma(j)}
= 2 \min(x_{\sigma(i)}, x_{\sigma(j)})\;.
\]
For $i \leq B$ and $j > B$, at the time when job $\sigma(i)$ starts being executed, it holds by definition of Algorithm \ref{algo:switch} that either job $\sigma(j)$ is completed or $S_{\sigma(j)} = z_{\sigma(i)}$, hence the delay caused by job $\sigma(j)$ to job $\sigma(i)$ is $D_{\sigma(j) \sigma(i)} = \min(z_{\sigma(i)}, x_{\sigma(j)})$. On the other hand, job $\sigma(i)$ delays job $\sigma(j)$ if and only if $x_{\sigma(j)} > z_{\sigma(i)}$, and in this case job $\sigma(i)$ runs until completion before job $\sigma(j)$ is completed, hence $D_{\sigma(i) \sigma(j)} = x_{\sigma(i)} \indic{x_{\sigma(j)} > z_{\sigma(i)}}$, and it follows that
\[
P_{\sigma(i) \sigma(j)}
= x_{\sigma(i)} \indic{x_{\sigma(j)} > z_{\sigma(i)}} + \min(z_{\sigma(i)}, x_{\sigma(j)})
\]
\end{proof}

\subsubsection{Proof of the theorem}
Using the previous lemma, we now prove Theorem \ref{thm:switch-perfect}

\begin{proof}
Using Lemma \ref{lem:delays-switch-algo}, it holds for all $i \neq j \in [n]$ that $P_{\sigma(i) \sigma(j)} = \min(x_{\sigma(i)}, x_{\sigma(j)})$ if $i,j \leq B$, $P_{\sigma(i) \sigma(j)} = 2 \min(x_{\sigma(i)}, x_{\sigma(j)})$ if $i,j > B$, and if $i \leq B$ and $j > B$ then 
\[
P_{\sigma(i) \sigma(j)} 
= \min(x_{\sigma(i)}, x_{\sigma(j)}) + x_{\sigma(i)} \indic{x_{\sigma(i)} < x_{\sigma(j)}}\;.
\]
Taking the expectation over $\sigma$ then using Lemma \ref{lem:Emin/2} gives
\begin{align}
\E[x_{\sigma(i)} \indic{x_{\sigma(i)} < x_{\sigma(j)}}]
&= \E[x_{\sigma(1)} \indic{x_{\sigma(1)} < x_{\sigma(2)}}] \nonumber\\
&\leq \frac{1}{2} \E[\min(x_{\sigma(1)}, x_{\sigma(2)})]\;, \label{ineq:only-ineq-CR}
\end{align}
hence for any $i \leq B$ and $j > B$
\[
\E[P_{\sigma(i) \sigma(j)}]
\leq \frac{3}{2} \E[\min(x_{\sigma(1)}, x_{\sigma(2)})]\;.
\]
It follows from Equation \eqref{eq:generic-output} that for any instance of $n$ job sizes $x = (x_1,\ldots,x_n)$, the switch algorithm \ref{algo:switch} with breakpoints $(z_{\sigma(i)})_{i=1}^B = (x_{\sigma(i)})_{i=1}^B$ satisfies
\begin{align}
\E[\switch(x^\sigma,x)]
&\leq \sum_{i=1}^n x_i + \sum_{1 \leq i < j \leq B} \E[\min(x_{\sigma(1)}, x_{\sigma(2)})] + \sum_{B < i < j \leq n} 2 \E[\min(x_{\sigma(i)}, x_{\sigma(j)})] \nonumber\\
&\quad+ \sum_{i=1}^B \sum_{j=B+1}^n \frac{3}{2} \E[\min(x_{\sigma(i)}, x_{\sigma(j)})]\nonumber\\
&= \sum_{i=1}^n x_i + \left( \frac{B(B-1)}{2} + (n-B)(n-B-1) + \frac{3}{2} B(n-B) \right)\E[\min(x_{\sigma(1)}, x_{\sigma(2)})]\nonumber\\
&= \sum_{i=1}^n x_i +  \left( 2 - \frac{B}{n} \right) \frac{n(n-1)}{2}\E[\min(x_{\sigma(1)}, x_{\sigma(2)})]\nonumber\\
&=\sum_{i=1}^n x_i + \left( 2 - \frac{B}{n} \right) \sum_{1\leq i<j \leq n} \min(x_{i}, x_{j})\;, \label{eq:align-switchoutcome}
\end{align}
where we used Lemma \ref{lem:Emin} for the last equation. We can assume without loss of generality that $x_1 \leq \ldots \leq x_n$, which yields $\sum_{1\leq i<j \leq n} \min(x_{i}, x_{j}) = \sum_{i=1}^n (n-i) x_i$. Finally, Equation \eqref{eq:opt-output} gives
\begin{align}
\frac{\E[\switch(x^\sigma,x)]}{\OPT(x)}
&\leq \frac{\sum_{i=1}^n x_i + \left( 2 - \frac{B}{n} \right)\sum_{i=1}^n (n-i) x_i}{\sum_{i=1}^n x_i + \sum_{i=1}^n (n-i) x_i}\nonumber \\
&= 2 - \frac{B}{n} - \frac{(1 - \frac{B}{n}) \sum_{i=1}^n x_i}{\sum_{i=1}^n (n-i+1) x_i} \nonumber\\
&= 2 - \frac{B}{n} - \frac{(1 - \frac{B}{n})}{\sum_{i=1}^n (n-i+1) q_i} \;, \label{eq:ratio-yi}
\end{align}
where $q_i = (\sum_{j=1}^n x_j)^{-1} x_i$ for all $i \in [n]$.  The variables $(q_i)_{i=1}^n$ satisfy $\sum_{i=1}^n q_i = 1$, and we can assume without loss of generality that $q_1 \leq \ldots \leq q_n$ (i.e. $x_1 \leq \ldots \leq x_n$). Expression \eqref{eq:ratio-yi} is maximized under these constraints for $q_1 = \ldots = q_n = \frac{1}{n}$. It follows for all job sizes $x_1,\ldots,x_n$ that
\begin{align*}
\frac{\E[\switch(x^\sigma,x)]}{\OPT(x)}
\leq 2 - \frac{B}{n} - \frac{2(1 - \frac{B}{n})}{n+1}\;,
\end{align*}
hence $\ratio_{n,B}(\switch) \leq 2 - \frac{B}{n} - \frac{2(1 - \frac{B}{n})}{n+1}$. Let us now prove that this is exactly the competitive ratio of $\ALG$. Observe that the only inequality we used while analyzing $\switch(x^\sigma,x)$ is Inequality \eqref{ineq:only-ineq-CR}, which becomes an equality if the job sizes are pairwise distinct. Let us, therefore, consider job sizes $x^{(\eps)}_i = 1 + i \eps \in [1,1+n\eps]$ for some $\eps > 0$. All the inequalities becoming inequalities, it holds in particular that
\begin{align*}
\ratio_{n,B}(\switch)
\geq \frac{\E[\switch(x^\sigma,x^{(\eps)})]}{\OPT(x^{(\eps)})}
&= 2 - \frac{B}{n} - \frac{(1 - \frac{B}{n}) \sum_{i=1}^n x^{(\eps)}_i}{\sum_{i=1}^n (n-i+1) x^{(\eps)}_i}\\
&\geq 2 - \frac{B}{n} - \frac{(1 - \frac{B}{n}) \sum_{i=1}^n (1+ n\eps)}{\sum_{i=1}^n (n-i+1)}\\
&= 2 - \frac{B}{n} - (1 + n\eps)\frac{2(1 - \frac{B}{n})}{n+1}\;,\\
\end{align*}
and taking $\eps \to 0$ yields that
\[
\ratio_{n,B}(\switch) 
= 2 - \frac{B}{n} - \frac{2(1 - \frac{B}{n})}{n+1}\;.
\]

Let $w \in [0,1]$. If $B = \lfloor w n \rfloor$, then $B/n \geq w$ and it holds for all $n \geq 2$ that $\ratio_{n, B}(\switch) \leq 2 - w$, and this bound is reached for $n \to \infty$, thus $\CR_B(\switch) = 2-w$.
\end{proof}

\subsection{Proof of Proposition \ref{prop:rand-switch}}

\begin{proof}
It is shown in \cite{motwani1994nonclairvoyant} that for any instance $x = (x_1,\ldots,x_n)$, the expected sum of completion times resulting from a run of $\RTC$ is $\E[\RTC(x)] = \frac{n+1}{2} \sum_{i=1}^n x_i$. Using Equation \ref{eq:align-switchoutcome}, we deduce that the algorithm $\ALG$ that runs $\RTC$ with probability $\frac{2(n-B)}{n(n+3) - 2B}$ and runs $\switch$ with breakpoints $z^\sigma = x^\sigma$ with the remaining probability satisfies
\begin{align*}
\E[\ALG(x)]
&= \frac{2(n-B)}{n(n+3) - 2B} \RTC(x) + \left( 1 - \frac{2(n-B)}{n(n+3) - 2B}\right) \E[\switch(x^\sigma,x)]\\
&= \frac{2(n-B)}{n(n+3) - 2B} \cdot \frac{n+1}{2} \sum_{i=1}^n x_i  + \left( 1 - \frac{2(n-B)}{n(n+3) - 2B}\right)\left( \sum_{i=1}^n x_i + \left( 2 - \frac{B}{n} \right) \sum_{1\leq i<j \leq n} \min(x_{i}, x_{j}) \right)\\
&= \left( \frac{(n-1)(n-B)}{n(n+3) - 2B} +  1\right) \sum_{i=1}^n x_i + \left( 1 - \frac{2(n-B)}{n(n+3) - 2B}\right)\left( 2 - \frac{B}{n} \right) \sum_{1\leq i<j \leq n} \min(x_{i}, x_{j})\\
&= \left( 2 - \frac{B}{n} - \frac{2(1 - \frac{B}{n})(2 - \frac{B}{n})}{n+3 - \frac{2B}{n}} \right) \left( \sum_{i=1}^n x_i +  \sum_{1\leq i<j \leq n} \min(x_{i}, x_{j}) \right)\\
&= \left( 2 - \frac{B}{n} - \frac{2(1 - \frac{B}{n})(2 - \frac{B}{n})}{n+3 - \frac{2B}{n}} \right) \OPT(x)\;.
\end{align*}
\end{proof}

\subsection{Proof of Lemma \ref{lem:alg-output-F}}

\begin{proof}
Let $x_1,\ldots,x_n$ 
Let $\xi$ be a random variable with distribution $F$. Let $i \neq j \leq B$, by symmetry, we can assume that $x_{\sigma(i)} \leq x_{\sigma(j)}$. Lemma \ref{lem:delays-switch-algo} gives that
\begin{align*}
P_{\sigma(i) \sigma(j)}
&= x_{\sigma(i)} \indic{y_{\sigma(i)}<y_{\sigma(j)}} + x_{\sigma(j)} \indic{y_{\sigma(i)}> y_{\sigma(j)}} + \tfrac{x_{\sigma(i)} + x_{\sigma(j)}}{2} \indic{y_{\sigma(i)} = y_{\sigma(j)}} \\
&\leq x_{\sigma(i)} \indic{y_{\sigma(i)}<y_{\sigma(j)}} + x_{\sigma(j)} \indic{y_{\sigma(i)} \geq y_{\sigma(j)}}\\
&= x_{\sigma(i)} + (x_{\sigma(j)} - x_{\sigma(i)})\indic{y_{\sigma(i)}\geq y_{\sigma(j)}}\\
&\leq x_{\sigma(i)} + (x_{\sigma(j)} - y_{\sigma(j)} + y_{\sigma(i)} -x_{\sigma(i)})\indic{y_{\sigma(i)}\geq y_{\sigma(j)}}\\
&\leq x_{\sigma(i)} + ( \eta_{\sigma(i)} + \eta_{\sigma(j)})\\
&= \min(x_{\sigma(i)}, x_{\sigma(j)}) + (\eta_{\sigma(i)} + \eta_{\sigma(j)})\;,
\end{align*}
and we obtain in expectation that for all $i\neq j \leq B$
\begin{equation}\label{eq:anypred-ij<B}
\E[P_{\sigma(i) \sigma(j)}]
\leq \E[\min(x_{\sigma(i)}, x_{\sigma(j)})] + \frac{2}{B}\E[\eta^\sigma]\;.
\end{equation}
For $i \neq j > B$, Lemma \ref{lem:delays-switch-algo} gives
\begin{equation}\label{eq:anypred-ij>B}
\E[P_{\sigma(i) \sigma(j)}]
= 2 \E[\min(x_{\sigma(i)}, x_{\sigma(j)})]\;.
\end{equation}
For $i \leq B$ and $j > B$, we have again by Lemma \ref{lem:delays-switch-algo} that
\[
\E[P_{\sigma(i) \sigma(j)}]
= \E[\min(\xi y_{\sigma(i)}, x_{\sigma(j)})] + \E[x_{\sigma(i)}\indic{\xi y_{\sigma(i)} < x_{\sigma(j)}}]\;.
\]
Conditional on the permutation $\sigma$, i.e. taking the expectation only over $\xi$, the first term can be bounded as follows
\begin{align}
\E[\min(\xi y_{\sigma(i)}, x_{\sigma(j)}) \mid \sigma]
&\leq \E[\min(\xi x_{\sigma(i)}, x_{\sigma(j)}) + \xi \eta_{\sigma(i)}\mid \sigma] \nonumber\\
&= 
    \E[x_{\sigma(j)}\indic{\xi x_{\sigma(i)} \geq x_{\sigma(j)}}
    + \xi x_{\sigma(i)} \indic{\xi x_{\sigma(i)} < x_{\sigma(j)}} \mid \sigma] 
    + \E[\xi] \eta_{\sigma(i)} \nonumber\\
&= 
    x_{\sigma(j)} \Pr(\xi \geq \tfrac{x_{\sigma(j)}}{x_{\sigma(i)}} \mid \sigma)
    + x_{\sigma(i)} \E\Big[\xi \indic{\xi < \tfrac{x_{\sigma(j)}}{ x_{\sigma(i)}}} \mid \sigma \Big]
    + \E[\xi] \eta_{\sigma(i)}\;,  \label{eq:anypred-Dji-i<B<j}
\end{align}
and for the second term, the expectation conditional on $\sigma$ satisfies
\begin{align}
\E[x_{\sigma(i)}\indic{\xi y_{\sigma(i)} < x_{\sigma(j)}} \mid \sigma] 
&\leq \E[y_{\sigma(i)}\indic{\xi y_{\sigma(i)} < x_{\sigma(j)}} + \eta_{\sigma(i)} \mid \sigma ] \nonumber\\
&= y_{\sigma(i)}\Pr(\xi < \tfrac{x_{\sigma(j)}}{y_{\sigma(i)}} \mid \sigma) + \eta_{\sigma(i)} \nonumber\\
&= h_F(x_{\sigma(j)}, y_{\sigma(i)}) +  \eta_{\sigma(i)} \nonumber\\
&\leq h_F(x_{\sigma(j)}, x_{\sigma(i)}) + L_F |x_{\sigma(i)} - y_{\sigma(i)}| + \eta_{\sigma(i)} \label{eq:h-lipschitz}\\
&= x_{\sigma(i)}\Pr(\xi < \tfrac{x_{\sigma(j)}}{x_{\sigma(i)}} \mid \sigma) + (1+L_F) \eta_{\sigma(i)}\;, \label{eq:anypred-Dij-i<B<j}
\end{align}
where we used for \eqref{eq:h-lipschitz} that $h_F$ is $L$-Lipschitz with respect to the second variable. Combining \eqref{eq:anypred-Dji-i<B<j} and \eqref{eq:anypred-Dij-i<B<j} yields 
\begin{align}
\E[P_{\sigma(i) \sigma(j)} \mid \sigma]
&\leq 
    x_{\sigma(j)} + (x_{\sigma(i)} - x_{\sigma(j)}) \Pr(\xi < \tfrac{x_{\sigma(j)}}{x_{\sigma(i)}} \mid \sigma)
    + x_{\sigma(i)} \E\Big[\xi \indic{\xi < \tfrac{x_{\sigma(j)}}{ x_{\sigma(i)}}} \mid \sigma \Big]
    + (1+L_F+\E[\xi]) \eta_{\sigma(i)} \nonumber\\
&= x_{\sigma(j)} + x_{\sigma(i)}\left( 
    \big(1 - \tfrac{x_{\sigma(j)}}{ x_{\sigma(i)}}\big) \Pr( \xi < \tfrac{x_{\sigma(j)}}{ x_{\sigma(i)}} \mid \sigma) + \E\Big[\xi \indic{\xi < \tfrac{x_{\sigma(j)}}{ x_{\sigma(i)}}} \mid \sigma \Big] \right)
    + (1+L_F+\E[\xi]) \eta_{\sigma(i)}
     \nonumber\\
&= x_{\sigma(j)} + x_{\sigma(i)} g_F\big(\tfrac{x_{\sigma(j)}}{x_{\sigma(i)}}\big) +(1+L_F+\E[\xi]) \eta_{\sigma(i)}\;. \label{eq:anypred-Pij}
\end{align}
We can assume without of generality that $x_1 \leq \ldots \leq x_n$. It holds by definition of the constants $\beta_F$ and $\gamma_F$ that $g_F\big(\tfrac{x_{\sigma(j)}}{x_{\sigma(i)}}\big) \leq \beta_F \tfrac{x_{\sigma(j)}}{x_{\sigma(i)}}$ if $\sigma(j) < \sigma(i)$, and $g_F\big(\tfrac{x_{\sigma(j)}}{x_{\sigma(i)}}\big) \leq \gamma_F - \tfrac{x_{\sigma(j)}}{x_{\sigma(i)}}$ if $\sigma(j) > \sigma(i)$, which gives
\begin{align*}
\E\left[x_{\sigma(j)} + x_{\sigma(i)} g_F\big(\tfrac{x_{\sigma(j)}}{x_{\sigma(i)}}\big)\right]
&= \frac{1}{2} \E\left[x_{\sigma(j)} + x_{\sigma(i)} g_F\big(\tfrac{x_{\sigma(j)}}{x_{\sigma(i)}}\big) \mid \sigma(j) < \sigma(i) \right]\\
&\qquad + \frac{1}{2} \E\left[x_{\sigma(j)} + x_{\sigma(i)} g_F\big(\tfrac{x_{\sigma(j)}}{x_{\sigma(i)}}\big) \mid \sigma(j) > \sigma(i)\right]\\
&\leq \frac{1 + \beta_F}{2} \E[x_{\sigma(j)}  \mid \sigma(j) < \sigma(i)] + \frac{\gamma_F}{2} \E[x_\sigma(i)  \mid \sigma(j) > \sigma(i)]\\
&= \frac{1 + \beta_F}{2} \E[\min(x_{\sigma(i)}, x_{\sigma(j)})  \mid \sigma(j) < \sigma(i)] + \frac{\gamma_F}{2} \E[\min(x_{\sigma(i)}, x_{\sigma(j)})  \mid \sigma(j) > \sigma(i)]\\
&= \frac{1 + \beta_F + \gamma_F}{2} \E[\min(x_{\sigma(i)}, x_{\sigma(j)})]\;,
\end{align*}
thus we obtain by taking the expectation over $\sigma$ in \eqref{eq:anypred-Pij} that
\[
\E[P_{\sigma(i) \sigma(j)}]
\leq \Big(\tfrac{1 + \beta_F + \gamma_F}{2} \Big)\E[\min(x_{\sigma(i)}, x_{\sigma(j)})] + \Big(\tfrac{1+L_F+\E[\xi]}{B}\Big) \E[\eta^\sigma]\;.
\]
Using Equation \ref{eq:generic-output}, the inequality above, Inequalities \eqref{eq:anypred-ij<B}, \eqref{eq:anypred-ij>B}, then Equation \ref{eq:reorder-terms} with $\frac{1 + \beta_F + \gamma_F}{2}$ instead of $\alpha_\pi$, we deduce that Algorithm \ref{algo:any-pred} satisfies for any job sizes $x_1 , \ldots , x_n$ that
\begin{align*}
\E[\switch(\xi y^\sigma, x)] 
&= \sum_{i=1}^n x_i +  \sum_{1\leq i<j \leq n} \E[P_{\sigma(i) \sigma(j)}]\\
&\leq \sum_{i=1}^n x_i +  \sum_{1\leq i<j \leq B}\left(\E[\min(x_{\sigma(i)}, x_{\sigma(j)})] + \frac{2}{B} \E[\eta^\sigma] \right)
+ \sum_{B<i<j \leq n} 2 \E[\min(x_{\sigma(i)}, x_{\sigma(j)})]\\
&\qquad + \sum_{i=1}^B \sum_{j=B+1}^n \left(\Big(\tfrac{1 + \beta_F + \gamma_F}{2} \Big)\E[\min(x_{\sigma(i)}, x_{\sigma(j)})] + \Big(\tfrac{1+L_F+\E[\xi]}{B}\Big) \E[\eta^\sigma]\right)\\
&=  \sum_{i=1}^n x_i + \left( \frac{B(B-1)}{2} + (n-B)(n-B-1) + \Big( \tfrac{1 + \beta_F + \gamma_F}{2} \Big)B(n-B) \right) \E[\min(x_{\sigma(1)}, x_{\sigma(2)})]\\
&\qquad + \big( B-1 + (1+L_F+\E[\xi])(n-B) \big)\E[\eta^\sigma]\\
&=  \sum_{i=1}^n x_i + \left( n(n-1) + \Big(1 - \tfrac{\beta_F + \gamma_F}{2} \Big)B(B-1) - \Big( \tfrac{3}{2} - \tfrac{\beta_F + \gamma_F}{2} \Big)B(n-1) \right) \E[\min(x_{\sigma(1)}, x_{\sigma(2)})]\\
&\qquad + \left( B-1 + (1+L_F+\E[\xi])(n-B) \right)\E[\eta^\sigma]\\
&=  \sum_{i=1}^n x_i + \left( 2 - \Big( 3 - \beta_F - \gamma_F \Big)\frac{B}{n} + \Big(2 - \beta_F - \gamma_F \Big)\frac{B(B-1)}{n(n-1)} \right) \sum_{i<j}\min(x_i,x_j)\\
&\qquad + \big( B-1 + (1+L_F+\E[\xi])(n-B) \big)\E[\eta^\sigma]\;,
\end{align*}
where we used Lemma \ref{lem:Emin} for the last equality.
\end{proof}

\subsection{Proof of Lemma \ref{lem:switch-exp}}

\begin{proof}
Let $\xi \sim 1 + \mathcal{E}(1/\rho)$. The mapping $h_F$ defined in Lemma \ref{lem:alg-output-F} becomes for all $s,t > 0$
\[
h_F(s,t) 
= t F(\tfrac{s}{t})
= t \left(1 - e^{-\frac{1}{\rho}(\frac{s}{t}-1)}\right)\indic{t<s}\;.
\]
For all $s>0$, it holds for all $t<s$ that 
\[
\frac{\partial h_F(s,t)}{\partial t} 
= 1 - e^{\frac{1}{\rho}(\frac{s}{t}-1)} - t\times \frac{s}{\rho t^2} e^{-\frac{1}{\rho}(\frac{s}{t}-1)}
= 1 - \left( \frac{s}{\rho t} +1 \right)e^{-\frac{1}{\rho}(\frac{s}{t}-1)}\;,
\]
but the mapping $u \mapsto (\tfrac{u}{\rho} + 1) e^{-\frac{u-1}{\rho}}$ is decreasing on $[1,\infty]$. Indeed
\begin{align*}
\frac{\partial}{\partial u}\left( (\tfrac{u}{\rho} + 1) e^{-\frac{u-1}{\rho}} \right) 
&= \tfrac{1}{\rho} e^{-\frac{u-1}{\rho}} - \tfrac{1}{\rho} (\tfrac{u}{\rho} + 1) e^{-\frac{u-1}{\rho}}
= - \tfrac{u}{\rho^2} e^{-\frac{u-1}{\rho}} < 0\;,
\end{align*}
and since $\frac{s}{t} > 1$ we deduce that
\[
-\frac{1}{\rho} 
= 1 - \lim_{u \to 1} (\tfrac{u}{\rho} + 1) e^{-\frac{u-1}{\rho}}
\leq \frac{\partial h_F(s,t)}{\partial t} 
\leq 1 - \lim_{u \to \infty} (\tfrac{u}{\rho} + 1) e^{-\frac{u-1}{\rho}}
= 1\;,
\]
thus $|\frac{\partial h_F(s,t)}{\partial t}| \leq \max(1,\tfrac{1}{\rho}) = \tfrac{1}{\rho}$ for all $t<s$. Otherwise, if $t\geq s$ then $h_F(s,t) = 0$ and $h_F(s,\cdot)$ is continuous in $t=s$, therefore $t \mapsto h_F(s,t)$ is $\tfrac{1}{\rho}$-Lipschitz.

On the other hand, given that $\xi > 1$ a.s., it holds for  $s \leq 1$ that $\E[\xi \indic{\xi < s}] = 0$, and for $s > 1$ that 
\begin{align*}
\E[\xi \indic{\xi < s}]
&= \Pr(\xi < s) + \E[(\xi-1) \indic{\xi-1 < s-1}]\\
&= F(s) + \int_0^\infty t \indic{t < s-1} \frac{e^{-t/\rho}}{\rho} dt\\
&= F(s) + \int_0^{s-1} \frac{t e^{-t/\rho}}{\rho} dt\\
&= F(s) + \left[ -(t+\rho) e^{-t/\rho} \right]_0^{s-1}\\
&= \left( 1 - e^{-(s-1)/\rho}\right) + \left(-(s-1+\rho)e^{-t/\rho} + \rho \right)\\
&= 1+ \rho - (s+\rho)e^{-(s-1)/\rho}\;.
\end{align*}
If follows for all $s > 0$ that
\begin{align*}
g_F(s) 
&= (1-s)\left(1 -  e^{-(s-1)/\rho}\right)\indic{s >1} + \left(1+ \rho - (s+\rho)e^{-(s-1)/\rho}\right)\indic{s>1}\\
&= \left((2 + \rho) - s - (1+\rho) e^{-(s-1)/\rho}\right)\indic{s>1}\;,
\end{align*}
hence $\beta_F = \sup_{0<s \leq 1} \frac{g_F(s)}{s} = 0$ and 
\[
\gamma_F = \sup_{s \geq 1} (g_F(s) + s)
= \sup_{s \geq 1} \left( (2 + \rho) - (1+\rho) e^{-(s-1)/\rho}  \right)
= 2 + \rho\;.
\]

Finally, with $L_F = \frac{1}{\rho}$, $\beta_F = 0$ and $\gamma_F = 2 + \rho$, and observing that $\E[\xi] = 1 + \rho$, the constants $C^1_{n,B,F}$ and $C^1_{n,B,F}$ defined in Lemma \ref{lem:alg-output-F} satisfy
\begin{align*}
C^1_{n,B,F}
&= 2 - (1 - \rho) \frac{B}{n} - \rho\frac{B(B-1)}{n(n-1)}
= 2 - \frac{B}{n} + \rho \frac{B}{n}\left(1 - \frac{B-1}{n-1} \right)\;,\\
C^2_{n,B,F}
&\leq (2 + \rho + \tfrac{1}{\rho})(n-B) + B
\leq \frac{4}{\rho} (n-B) + B\;,
\end{align*}
and the objective function of Algorithm \ref{algo:any-pred} with any job sizes $x = \{x_1,\ldots,x_n\}$ can be upper-bounded as follows
\begin{align*}
\E[\switch(\xi y^\sigma, x)]
&\leq \sum_{i=1}^n x_i + C^1_{n,B,F}\sum_{i<j}\min(x_i,x_j) + \left( \frac{4}{\rho} (n-B) + B \right)\E[\eta^\sigma]\;,
\end{align*}
and by Equation \eqref{eq:opt-output} we obtain
\begin{align*}
\frac{\E[\switch(\xi y^\sigma, x)]}{\OPT(x)}
&\leq  \frac{\sum_{i=1}^n x_i +  C^1_{n,B,F}\sum_{i<j}\min(x_i,x_j)}{\sum_{i=1}^n x_i + \sum_{i<j}\min(x_i,x_j)}  + \left( \frac{4}{\rho} (n-B) + B \right)\frac{\E[\eta^\sigma]}{\OPT(x)}\\
&= C^1_{n,B,F} - \frac{(C^1_{n,B,F} - 1) \sum_{i=1}^n x_i}{\sum_{i=1}^n x_i +\sum_{i<j}\min(x_i,x_j)} + \left( \frac{4}{\rho} (n-B) + B \right)\frac{\E[\eta^\sigma]}{\OPT(x)}\;.
\end{align*}
As we showed in the proof of Theorem \ref{thm:switch-perfect}, we have $\frac{\sum_{i=1}^n x_i}{\sum_{i=1}^n x_i +\sum_{i<j}\min(x_i,x_j)} \geq \frac{2}{n+1}$, and the minimum is reached when all the job sizes are equal. 
\begin{align*}
\frac{\E[\switch(\xi y^\sigma, x)]}{\OPT(x)}
&\leq C^1_{n,B,F} - \frac{2 (C^1_{n,B,F}-1)}{n+1} + \left( \tfrac{4}{\rho}(1 - \tfrac{B}{n}) + \tfrac{B}{n} \right) \frac{n \E[\eta^\sigma]}{\OPT(x)}\;,
\end{align*}
which yields after taking the supremum over instances $x$ of $n$ jobs
\begin{align}
\ratio_{n,B}(\eta;\switch) 
&\leq C^1_{n,B,F} - \frac{2 (C^1_{n,B,F}-1)}{n+1} + \left( \tfrac{4}{\rho}(1 - \tfrac{B}{n}) + \tfrac{B}{n} \right) \frac{n \E[\eta^\sigma]}{\OPT(x)}\label{ineq:precise-bound-ratio-pred}\\
&\leq C^1_{n,B,F} +  \left( \tfrac{4}{\rho}(1 - \tfrac{B}{n}) + \tfrac{B}{n} \right) \frac{n \E[\eta^\sigma]}{\OPT(x)} \nonumber\\
&= \left( 2 - \tfrac{B}{n} + \rho \tfrac{B}{n}(1 - \tfrac{B-1}{n-1} ) \right) + \left( \tfrac{4}{\rho}(1 - \tfrac{B}{n}) + \tfrac{B}{n} \right) \frac{n \E[\eta^\sigma]}{\OPT(x)}\;.   \nonumber
\end{align}

\end{proof}

\subsection{Proof of Theorem \ref{thm:preferential-algo}}

\begin{proof}
Let $\rho \in (0,1]$, and $x = (x_1,\ldots,x_n)$ be an instance of job sizes all at least equal to $1$. Before starting to run, the algorithm has access to predictions $y_{\sigma(1)}, \ldots, y_{\sigma(B)}$ of $x_{\sigma(1)}, \ldots, x_{\sigma(B)}$, it samples $\xi \sim 1 + \mathcal{E}(1/\rho)$ and sets the breakpoints $z_{\sigma(i)} = \xi y_{\sigma(i)}$ for all $i \in [B]$. With $\sigma$ and the breakpoints fixed, the preferential algorithm runs Algorithm \ref{algo:switch} at rate $\lambda$ and round-robin at rate $1-\lambda$. Lemma \ref{lem:delays-switch-algo} and Equation \ref{eq:generic-output} guarantee that, with a fixed permutation $\sigma$ and fixed breakpoints, Algorithm \ref{algo:switch} is monotonic, as all the mutual delays $P_{\sigma(i) \sigma(j)}$ are non-decreasing functions of $x_{\sigma(i)}$ and $x_{\sigma(j)}$. Following the proof of Lemma 3.1 in \cite{purohit2018improving}, when running the preferential algorithm, the completion times of Algorithm \ref{algo:switch} increase by a factor of $1/\lambda$, while the completion times of round-robin increase by a factor $1/(1-\lambda)$, and since both algorithms are monotonic, the fact that some jobs are run simultaneously by both of them can only improve the objective function. Denoting $\switch(\xi y^\sigma, x)$ the objective function of Algorithm \ref{algo:switch} with instance $x = (x_i)_{i=1}^n$ and breakpoints $\xi y^\sigma = (\xi y_{\sigma(i)})_{i=1}^B$, and $\RR(x)$ the objective function of round-robin with $x$, it follows that
\[
\ALG_\lambda(x) \leq \min\left( \frac{\switch(\xi y^\sigma, x)}{\lambda}, \frac{\RR(x)}{1 - \lambda} \right)\;.
\]
Given that $\RR(x)$ is deterministic and $2$-competitive, and using Lemma \ref{lem:switch-exp}, we obtain by taking the expectation that
\begin{align*}
\E[\ALG_\lambda(x)]
&\leq  \E\left[\min\left( \frac{\RR(x)}{1 - \lambda},  \frac{\switch(\xi y^\sigma, x)}{\lambda} \right) \right] \\
&\leq \min\left(\frac{\RR(x)}{1 - \lambda},  \frac{\E[\switch(\xi y^\sigma, x)]}{\lambda} \right)\\
&\leq \min\left( \frac{2}{1 - \lambda} , \frac{\left( 2 - \tfrac{B}{n} + \rho \tfrac{B}{n}(1 - \tfrac{B-1}{n-1} ) \right) + \left( \tfrac{4}{\rho}(1 - \tfrac{B}{n}) + \tfrac{B}{n} \right) \frac{n \E[\eta^\sigma]}{\OPT(x)}}{\lambda}\right) \OPT(x),
\end{align*}
which concludes the proof.
\end{proof}


\end{document}